\newtheorem{thm}{Theorem}[section]
\newtheorem{cor}[thm]{Corollary}
\newtheorem{lem}[thm]{Lemma}
\newtheorem{defn}[thm]{Definition}
\theoremstyle{remark}
\newtheorem{ex}[thm]{\textbf{Example}}
\newcommand{\norm}[1]{\left\Vert#1\right\Vert}
\newcommand{\abs}[1]{\left\vert#1\right\vert}
\newcommand{\Real}{\mathbb R}
\newcommand{\nat}{\mathbb N}
\newcommand{\argmax}{\text{argmax}}
\newcommand{\sgn}{\ensuremath{\text{sgn}}}
\newcommand{\vc}[1]{#1}
\newcommand{\dif}[2]{{\operatorname{d}\over\operatorname{d}#2}#1}
\newcommand{\expect}[1]{\ensuremath{ \langle#1\rangle } }
\newcommand{\cov}[2]{\ensuremath{ \text{cov}(#1,#2) } }
\newcommand{\state}{\ensuremath{ \vc x}} 
\newcommand{\agi}{\ensuremath{ \mathfrak a}} 
\newcommand{\agii}{\ensuremath{ \mathfrak r}} 
\newcommand{\agiii}{\ensuremath{ \mathfrak q}} 
\newcommand{\agset}{\ensuremath{ \mathfrak A}} 
\newcommand{\collevent}{\ensuremath{\mathfrak C}}
\newcommand{\decke}{\ensuremath{\mathfrak u}}
\newcommand{\boden}{\ensuremath{\mathfrak l}}
\renewcommand{\Pr}{\mathrm{Pr}}
\newcommand{\beq}{\begin{equation}}
\newcommand{\eeq}{\end{equation}}
\title{Conservative collision prediction and avoidance for stochastic trajectories in continuous time and space} 
\author{Jan-P. Calliess, Michael Osborne, Stephen Roberts\footnote{The authours gratefully acknowledge funds via EPSRC EP/I011587.} \\
Dept. of Engineering Science, University of Oxford\\} 
\begin{document} 
 
\maketitle 
\begin{abstract}
\begin{quote}
Existing work in multi-agent collision prediction and avoidance typically assumes discrete-time trajectories with Gaussian uncertainty or that are completely deterministic. 
We propose an approach that allows
detection of collisions even between continuous, stochastic trajectories with the only restriction that means and covariances can be computed. 
To this end, we employ probabilistic bounds to derive criterion functions whose negative sign provably is
indicative of probable collisions. For criterion functions that are Lipschitz, an algorithm is provided to rapidly find negative values or prove their absence.
We propose an iterative policy-search approach that avoids prior discretisations and, upon termination, yields collision-free
trajectories with adjustably high certainty. We test our method
with both fixed-priority and auction-based protocols for coordinating
the iterative planning process. Results are provided in
collision-avoidance simulations of feedback controlled plants.
\end{quote}
\end{abstract}

\noindent 

\section{Introduction}

Due to their practical importance, multi-agent collision avoidance and control have been extensively studied across different communities including AI, robotics and control. Considering continuous stochastic trajectories, reflecting each agent's uncertainty about its neighbours' time-indexed locations in an environment space, we exploit a distribution-independent bound on collision probabilities to develop a conservative collision-prediction module. It avoids temporal discretisation by stating collision-prediction as a one-dimensional optimization problem. If mean and standard deviation are computable Lipschitz functions of time, one can derive Lipschitz constants that allow us to guarantee collision prediction success with low computational effort. This is often the case, for instance, when dynamic knowledge of the involved trajectories is available 
(e.g. maximum velocities or even the stochastic differential equations).
 
To avoid collisions detected by the prediction module, we let an agent re-plan repeatedly until no more collisions occur with a definable probability. Here, re-planning refers to modifying a control signal (influencing the basin of attraction and equilibrium point of the agent's stochastic dynamics) so as to bound the collision probability while seeking low plan execution cost in expectation. To keep the exposition concrete, we focus our descriptions on an example scenario where the plans correspond to sequences of setpoints of a feedback controller regulating an agent's noisy state trajectory. However, one can apply our method in the context of more general policy search problems.

In order to foster low social cost across the entire agent collective, we compare two different coordination mechanisms. Firstly, we consider a simple fixed-priority scheme \cite{erdmann_movingobj:87}, and secondly, we modify an auction-based coordination protocol \cite{ArmsTR:2011} to work in our continuous setting. In contrast to pre-existing work in auction-style multi-agent planning (e.g. \cite{ArmsTR:2011,koenig_auction_guarantees}) and multi-agent collision avoidance (e.g. \cite{kostic2010collision,ayanian2010decentralized, vandenberg:12}), we avoid {\it a priori} discretizations of space and time. Instead, we recast the coordination problem as one of incremental open-loop policy search. That is, as a succession of continuous optimisation or root-finding problems that can be efficiently and reliably solved by modern optimisation and root-finding techniques (e.g. \cite{Shubert:72,direct:93}).

While our current experiments were conducted with linear stochastic differential equation (SDE) models with state-independent noise (yielding Gaussian processes), our method is also applicable to any situation where mean and covariances can be evaluated. This encompasses non-linear, non-Gaussian cases that may have state-dependent uncertainties (cf. \cite{Gardiner2009}). 

This preprint is an extended and improved version of a conference paper that appeared in \textit{Proc. of the 13th International Conference on Autonomous Agents and Multiagent Systems (AAMAS 2014)} \cite{CalliessAAMAS2014}.

\subsection{Related Work}

Multi-agent trajectory planning and task allocation methods have been related to auction mechanisms by identifying locations in state space with atomic goods to be auctioned in a sequence of repeated coordination rounds (e.g. \cite{ArmsTR:2011,koenig_auction_guarantees,koenig_biding_rules_gen}). Unfortunately, even in finite domains the coordination is known to be intractable -- for instance the sequential allocation problem is known to be NP-hard in the number of goods and agents \cite{sandholmcombauc:2002,Koenig2007}. Furthermore, collision avoidance corresponds to non-convex interactions.

This renders the coordination problem inapplicable to standard optimization techniques 
that rely on convexity of the joint state space. In recent years, several works have investigated the use of mixed-integer programming techniques 
for single- and multi-agent model-predictive control with collision avoidance both in deterministic and stochastic settings \cite{ArmsTR:2011,LyonsACC2012}. 
To connect the problem to pre-existing mixed-integer optimization tools these works had to limit the models to dynamics governed by linear, time-discrete difference equations with state-independent state noise. The resulting plans were finite sequences of control inputs that could be chosen freely from a convex set. The controls gained from optimization are open-loop -- to obtain closed-loop policies the optimization problems have to be successively re-solved on-line in a receding horizon fashion. However, computational effort may prohibit such an approach in multi-agent systems with rapidly evolving states. 

Furthermore, prior time-discretisation comes with a natural trade-off. On the one hand, one would desire a high temporal resolution in order to limit the chance of missing a collision predictably occurring between consecutive time steps. On the other hand, communication restrictions, as well as poor scalability of mixed-integer programming techniques in the dimensionality of the input vectors, impose severe restrictions on this resolution. To address this trade-off,
 \cite{Earl2005} proposed to interpolate between the optimized time steps in order to detect collisions occurring between the discrete time-steps. Whenever a collision was detected they proposed to augment the temporal resolution by the time-step of the detected collision thereby growing the state-vectors incrementally as needed. A detected conflict, at time $t$, is then resolved by solving a new mixed-integer linear programme over an augmented state space, now including the state at $t$. This approach can result in a succession of solution attempts of optimization problems of increasing complexity, but can nonetheless prove relatively computationally efficient. Unfortunately, their method is limited to linear, deterministic state-dynamics.

Another thread of works relies on dividing space into polytopes \cite{li2007motion,ayanian2010decentralized}, while still
others \cite{chang2003collision,dimarogonas2006feedback,mastellone2008formation,kostic2010collision} adopt a potential field. In not accommodating uncertainty and stochasticity, these approaches are forced to be overly conservative in order to prevent collisions in real systems.

In contrast to all these works, we will consider a different scenario. Our exposition focuses on the assumption that each agent is regulated by influencing its continuous stochastic dynamics. For instance, we might have a given feedback controller with which one can interact by providing a sequence of setpoints constituting the agent's plan. While this restricts the choice of control action, it also simplifies computation as the feedback law is fixed. The controller can generate a continuous, state-dependent control signal based on a discrete number of control decisions, embodied by the setpoints. Moreover, it renders our method applicable in settings where the agents' plants are controlled by standard off-the-shelf controllers (such as the omnipresent PID-controllers) rather than by more sophisticated customized ones.
Instead of imposing discreteness, we make the often more realistic assumption that agents follow continuous time-state trajectories within a given continuous time interval. Unlike most work \cite{stipanovic2007cooperative,van2008reciprocal,mastellone2008formation,ayanian2010decentralized} in this field, we allow for stochastic dynamics, where each agent cannot be certain about the location of its team-members. This is crucial for many real-world multi-agent systems. The uncertainties are modelled as state-noise which can reflect physical disturbances or merely model inaccuracies.
While our exposition's focus is on stochastic differential equations, our approach is generally applicable in all contexts where the first two moments of the predicted trajectories can be evaluated for all time-steps.
As noted above, this paper is an extended version of work that has been published in the proceedings of AAMAS'14 \cite{CalliessAAMAS2014} and an earlier stage of this work was presented at an ICML \cite{CalliessICML2012} workshop.
\section{Predictive Probabilistic Collision Detection with Criterion Functions} \label{sec:colldetection}

 \textbf{Task}. Our aim is to design a collision-detection module that can decide whether a set of 
(predictive) stochastic trajectories is collision-free (in the sense defined below). The module we will derive is guaranteed to make this decision correctly, based on knowledge of the first and second order moments of the trajectories alone. In particular, no assumptions are made about the family of stochastic processes the trajectories belong to. As the required collision probabilities will generally 
have to be expressed as non-analytic integrals, we will content ourselves with a fast, \textit{conservative} approach. That is, we are willing to tolerate a non-zero false-alarm-rate as long as decisions can be made rapidly and with zero false-negative rate. Of course, for certain distributions and plant shapes, one may derive closed-form solutions for the collision probability that may be less conservative and hence, lead to faster termination and shorter paths. In such cases, our derivations can serve as a template for the construction of criterion functions on the basis of the tighter probabilistic bounds. 

\textbf{Problem Formalization}. Formally, a collision between two  objects (or agents) $\agi,\agii$ at time $t \in I := [t_0,t_f] \subset \Real$ can be described by the event 

$\mathfrak C^{\agi,\agii}(t) $ $ = \{ (\state^\agi(t),\state^\agii(t)) | \norm{\state^\agi(t)-\state^\agii(t)}_2 \leq \frac{\Lambda^\agi + \Lambda^\agii}{2}  \}$. Here, $\Lambda^\agi,\Lambda^\agii$ denote the objects' diameters, and $x^\agi,x^\agii : I  \to \Real^D$ are two (possibly uncertain) trajectories in a common, $D$-dimensional interaction space.

In a stochastic setting, we desire to bound the collision probability below a threshold $\delta \in (0,1)$ at any given time in I. We loosely say that the trajectories are \textit{collision-free }if $\Pr[\mathfrak C^{\agi,\agii}(t)]  < \delta, \forall t \in I$.

\textbf{Approach.} For conservative collision detection between two agents' stochastic trajectories $\state^\agi,\state^\agii$, we 
construct a \textit{criterion function} $\gamma^{\agi,\agii} : I \to \Real$ (eq. as per Eq. \ref{eq:critfctgeneric} below). A conservative criterion function has the property $\gamma^{\agi,\agii}(t) >0 \Rightarrow \Pr [\mathfrak C^{\agi,\agii}(t)]  < \delta^\agi$. That is, a collision between the
trajectories with probability above $\delta$ can be ruled-out if $\gamma^{\agi,\agii}$ attains only positive values. 
If one could evaluate the function $t \mapsto \Pr [\mathfrak C^{\agi,\agii}(t)]$, an ideal criterion function would be 
\begin{equation}\label{eq:collcritideal}
	\gamma^{\agi,\agii}_{\text{ideal}}(t) := \delta - \Pr [\mathfrak C^{\agi,\agii}(t)].
\end{equation}
It is ideal in the sense that $\gamma^{\agi,\agii}_{\text{ideal}}(t) >0 \Leftrightarrow \Pr [\mathfrak C^{\agi,\agii}(t)] < \delta$.
However, in most cases, evaluating the criterion function in closed form will not be feasible. Therefore, we adopt a conservative approach: That is, we determine a criterion function $
	\gamma^{\agi,\agii}(t) $ such that provably, we have $\gamma^{\agi,\agii}(t) \leq \gamma^{\agi,\agii}_{\text{ideal}}(t), \forall t $, including the possibility of false-alarms. That is, it is possible that for some times $t$, $\gamma^{\agi,\agii}(t)  \leq 0$, in spite of $\gamma^{\agi,\agii}_{\text{ideal}}(t) > 0$.

Utilising the conservative criterion functions for collision-prediction, we assume a collision occurs unless $\min_{t \in I} \gamma^{\agi,\agii}(t) >0,\forall \agii \neq \agi$. If the trajectories' means and standard deviations are Lipschitz functions of time then one can often show that $\gamma^{\agi,\agii}$ is Lipschitz as well. In such cases negative values of $\gamma^{\agi,\agii}$ can be found or ruled out rapidly, as will be discussed in Sec. 
\ref{Sec:lipfctnegvalsfind}.
In situations where a Lipschitz constant is unavailable or hard to determine, we can base our detection on the output of a global minimization method such as DIRECT \cite{direct:93}.

\subsection{Finding negative function values of Lipschitz functions}
\label{Sec:lipfctnegvalsfind}

Let $t_0,t_f \in \Real, t_0 \leq t_f, I := [t_0,t_f] \subset \Real$. Assume we are given a
Lipschitz continuous \emph{target function} $f: I \to \Real $ with Lipschitz constant
$L \geq 0$. That is, $\forall S
\subset I \,\exists L_S \leq L \,\forall x,x' \in S: \abs{f(x) -
f(x')} \leq L_S \, \abs{x-x'}$. Let $t_0 < t_1 < t_2 <...<t_N < t_f $
and define $G_N = ( t_0,\ldots,t_{N+1})$ to be the \emph{sample grid} of
size $N+2 \geq 2$ consisting of the inputs at which we choose to evaluate the
target $f$.

\emph{Our goal is to prove or disprove the existence of a negative function value of target $f$}.

\subsubsection{A naive algorithm}
As a first, naive method, Alg. \ref{alg:negdetect_lipschitz} leverages Lipschitz continuity to answer the question of positivity correctly after a finite number of function 
evaluations.

\begin{algorithm}
\SetKwData{flag}{flag} \SetKwData{negTime}{criticalTime} \SetKwData{minVal}{minVal}
\SetKwData{grid}{TimeGrid} \SetKwData{lipschitzconst}{L}
\SetKwFunction{OR}{OR}\SetKwFunction{FindCompress}{FindCompress} \SetKwFunction{Resolve}{Resolve}
\SetKwFunction{insert}{Insert} \SetKwFunction{Planner}{Planner} \SetKwFunction{Auction}{Auction}
\SetKwFunction{Receive}{Receive} \SetKwFunction{Avoid}{Avoid}
\SetKwFunction{DetectCollisions}{CollDetect}
\SetKwInOut{Input}{input}\SetKwInOut{Output}{output}
\Input{Domain boundaries $t_0,t_f$ $\in \Real$, function  $\gamma: (t_0,t_f) \to \Real$, Lipschitz constant $\lipschitzconst >0$.}
\Output{Flag \flag  indicating presence of a non-positive function value (\flag = 1 indicates existence of a non-positive function value; \flag =0 indicates it has been ruled out that a negative function value can exist). Variable \negTime contains the time of a non-positive function value if such exists (\negTime $=t_0-1$, 
iff $\gamma((t_0,t_f)) \subset \Real_{+}$).}
\BlankLine

$\flag \leftarrow -1$;
$\negTime \leftarrow t_0-1$;

$\grid \leftarrow \{t_0,t_f\}$;

$r \leftarrow -1$;

\Repeat{$ \flag =1 $ \OR $\flag =0$ }{
$r \leftarrow r +1$;
$\Delta \leftarrow \frac{t_f-t_0}{2^r}$;
$ N \leftarrow (t_f-t_0)/\Delta$;

$\grid \leftarrow \cup_{i=0}^{N} \{t_0+i \Delta\}$;

$\minVal \leftarrow \min_{t \in \grid} \gamma(t);$ \\
\uIf{$\minVal \leq 0$}{
$\flag \leftarrow 1$;
$\negTime \leftarrow \arg\min_{t \in \grid} \gamma(t)$;
}
\uElseIf{\minVal $> \lipschitzconst \, \Delta$ }
{$\flag \leftarrow 0$;}
}

\caption{Naive algorithm deciding whether a Lipschitz continuous function $\gamma$ has a non-positive value on a compact domain. 
Note, if minVal $>$ L $\, \Delta$ the function is guaranteed to map into the positive reals exclusively.}
\label{alg:negdetect_lipschitz}
\end{algorithm}

The algorithm evaluates the function values on a finite grid assuming a uniform constant Lipschitz number $L$. The grid is iteratively refined until either a negative function value is found or, the
Lipschitz continuity of function $\gamma$ allows us to infer that no negative function values can exist. The latter is the case whenever $\min_{t \in G_N} \gamma(t) > L \, \Delta$ where 
$G_N = (t_0,..., t_{N+1})$ is the grid of function input (time) samples, $\Delta = |t_{i+1} - t_i| \, (i=0,...,N-1)$ and $L >0$ a Lipschitz number
of the function $\gamma: (t_0,t_f) \to \Real$ which is to be evaluated. 

The claim is established by the following Lemma:

\begin{lem}
 Let $\gamma: [t_0,t_f]\subset \Real \to \Real$ be a Lipschitz function with Lipschitz number $L>0$. Furthermore, let $G_N = (t_0,t_1,\ldots, t_{N+1})$ be 
an equidistant grid with $\Delta  = |t_{i+1} - t_i| \, (i=0,...,N-1)$.  

We have, $\gamma(t) > 0, \forall t \in (t_0,t_f)$ 
if 
$\forall t \in G_N: \gamma(t) > L \, \Delta $.

\begin{proof}
 Since $L$ is a Lipschitz constant of $\gamma$ we have $|\gamma(t) - \gamma(t') | \leq L |t-t'|, \forall t,t' \in (t_0,t_f) $. Now, let $t^* \in (t_0,t_f)$  and 
  $t_i, t_{i+1} \in G_N$ such that $t^* \in [t_i,t_{i+1}]$. Consistent with the premise of the implication we aim to show, we assume 
 $\gamma(t_i), \gamma(t_{i+1}) > L \Delta $ and, without loss of generality, we assume $\gamma(t_i)\leq \gamma(t_{i+1})$. Let $\delta  := |t_i -t^*|$. 
 Since $t_i \leq t^* \leq t_{i+1}$ we have $0 \leq \Delta - \delta $. Finally, $0 < L \Delta  < |\gamma(t_i)|$ implies $ \gamma(t^*) \geq  \gamma(t_i) - 
|\gamma(t_i) - \gamma(t^*)| \geq \gamma(t_i) - 
L |t_i - t^*| > L \Delta   - L \delta  = L (\Delta   -  \delta ) \geq 0$.
\end{proof}

\end{lem}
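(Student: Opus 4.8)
The plan is to establish positivity pointwise: I would fix an arbitrary $t^* \in (t_0,t_f)$ and bound $\gamma(t^*)$ from below using only the value of $\gamma$ at a nearby grid node together with the Lipschitz estimate. First I would note that the equidistant grid $G_N$ partitions $[t_0,t_f]$ into closed subintervals of length $\Delta$, so there exist consecutive nodes $t_i,t_{i+1} \in G_N$ with $t^* \in [t_i,t_{i+1}]$. In particular the left endpoint satisfies $\abs{t^* - t_i} \leq \Delta$, which supplies exactly the slack I need to control the deviation of $\gamma$ from its sampled value.

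Next I would apply the Lipschitz property in its reverse form. Since $L$ is a Lipschitz constant of $\gamma$, we have $\abs{\gamma(t^*) - \gamma(t_i)} \leq L\,\abs{t^* - t_i}$, and therefore $\gamma(t^*) \geq \gamma(t_i) - L\,\abs{t^* - t_i} \geq \gamma(t_i) - L\,\Delta$. The grid hypothesis $\gamma(t_i) > L\,\Delta$ then yields $\gamma(t^*) \geq \gamma(t_i) - L\,\Delta > 0$. Because $t^*$ was arbitrary, this establishes $\gamma(t) > 0$ for all $t \in (t_0,t_f)$, which is the claim.

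I do not anticipate any genuine obstacle, since the argument is a single application of the Lipschitz bound. The only place demanding mild care is the bookkeeping between strict and non-strict inequalities: the estimate $\gamma(t^*) \geq \gamma(t_i) - L\,\Delta$ is non-strict, so the final strict sign must be inherited entirely from the strict hypothesis $\gamma(t_i) > L\,\Delta$. A secondary detail is that one is free to choose either endpoint of the containing cell; selecting the one with the smaller function value (as done via the ``without loss of generality'' step in the statement) is convenient but inessential, because the distance from $t^*$ to each endpoint is at most $\Delta$ regardless, and the bound goes through with either choice.
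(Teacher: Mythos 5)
Your proposal is correct and follows essentially the same route as the paper's proof: locate the grid cell $[t_i,t_{i+1}]$ containing $t^*$ and apply the Lipschitz bound once at an endpoint, inheriting strictness from the hypothesis $\gamma(t_i) > L\,\Delta$. Your version is in fact slightly cleaner, since you rightly observe that the paper's ``without loss of generality'' choice of the smaller endpoint value and its explicit tracking of $\delta = \abs{t_i - t^*}$ (yielding $\gamma(t^*) > L(\Delta-\delta) \geq 0$) are inessential refinements of the coarser bound $\abs{t^*-t_i} \leq \Delta$ that you use directly.
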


Appart from a termination criterion, the lemma establishes that larger Lipschitz numbers will generally cause longer run-times of the algorithm as finer resolutions $\Delta t$ will be required to ensure non-negativity of the function under investigation.

\subsubsection{An improved adaptive algorithm} \label{sec:adaptiveLipshubertstyle}
Next, we will present an improved version of the algorithm provided above.
 We can define two functions, \emph{ceiling} $\decke_N$ and \emph{floor} $ \boden_N$, such that (i) they bound the target $\forall t \in I: \boden_N(t) \leq \gamma(t) \leq \decke_N(t)$, and (ii) the bounds get tighter for denser grids. In particular, one can show that $\boden_N , \decke_N \stackrel{N \to \infty} {\longrightarrow} f $ uniformly if $G_N$ converges to a dense subset of $[a,b]$. 
Define $\xi^{\mathfrak l}_N := \arg \min_{x \in I} \boden_{N}(x)$. 
It has been shown that $\xi^{\mathfrak l}_N  =\min_{i=1}^{N-1} \frac{t_{i+1}+t_i}{2} - \frac{\gamma(t_{i+1}) - \gamma(t_i)}{2 L}$ and  
$\boden_N(\xi^{\mathfrak l}_N) = \min_i \frac{\gamma(t_{i+1}) +\gamma(t_i)}{2} - L \frac{t_{i+1}-t_i}{2}$ (see \cite{Shubert:72,direct:93}).
It is trivial to refine this to take localised Lipschitz constants into account: 
$\xi^{\mathfrak l}_N = \min_i \frac{\gamma(t_{i+1}) +\gamma(t_i)}{2} - L_{J_i} \frac{t_{i+1}-t_i}{2}$ where $L_{J_i}$ is a Lipschitz number valid on interval $J_i = (t_i,t_{i+1})$.

This suggests the following algorithm: \textit{We refine the grid $G_N$ to grid $G_{N+1}$, by including $\xi^\boden_N, f(\xi^\boden_N)$ as a new sample. This process is repeated until either of the following stopping conditions are met: (i) a negative function value of $\gamma$ is discovered ($f(\xi^{\mathfrak l}_N) < 0$), or (ii) $\boden_N(\xi^{\mathfrak l}_N) \geq 0$ (in which case we are guaranteed that no negative function values can exist)}. 
\begin{figure*}
        \centering
        \begin{subfigure}
                \centering
                \includegraphics[width = 3.9cm, clip, trim = 3.5cm 9.5cm 4.5cm 10cm]{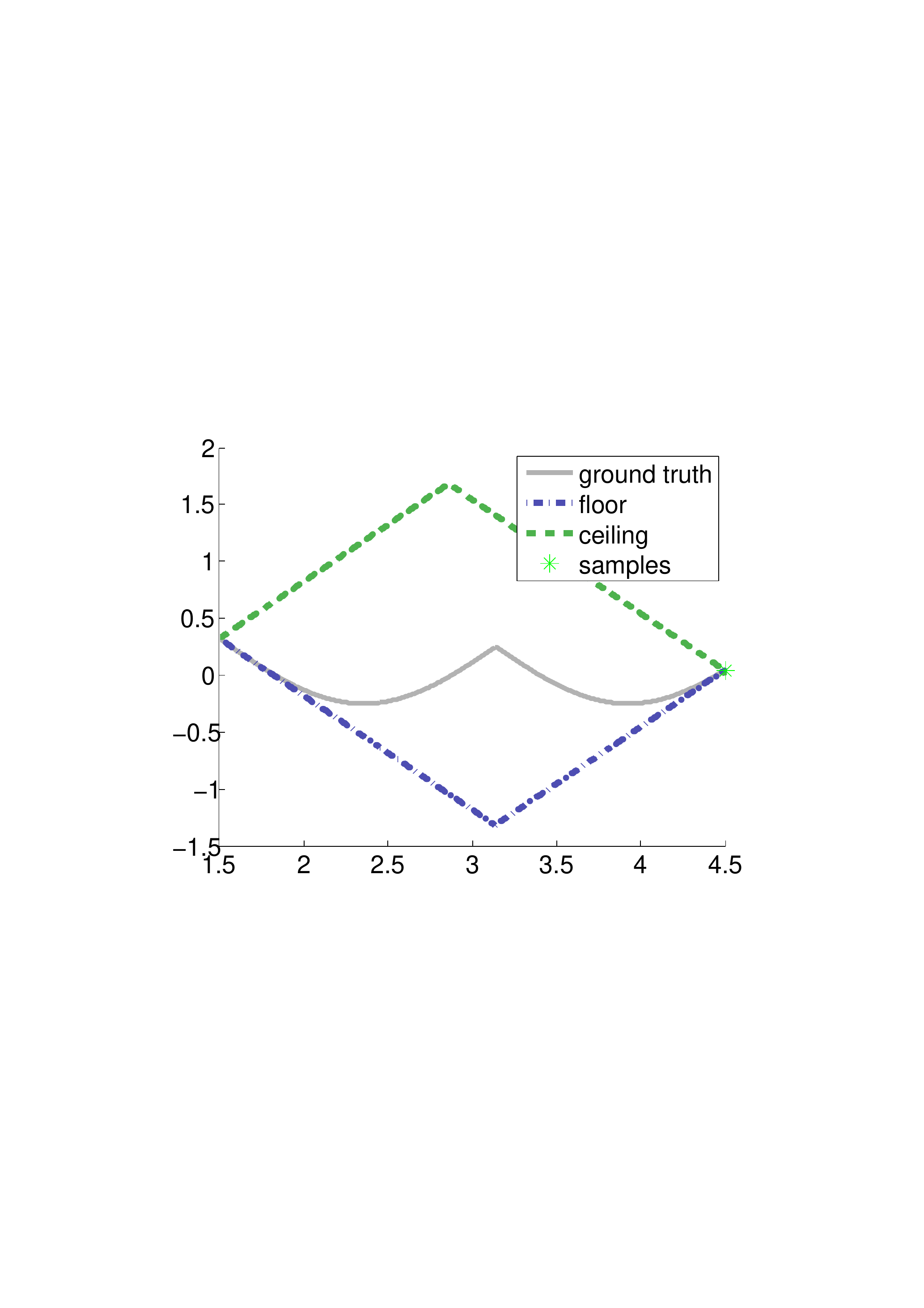}
        \end{subfigure}%
        \begin{subfigure}
                \centering
                \includegraphics[width = 3.9cm, clip, trim = 3.5cm 9.5cm 4.5cm 10cm]{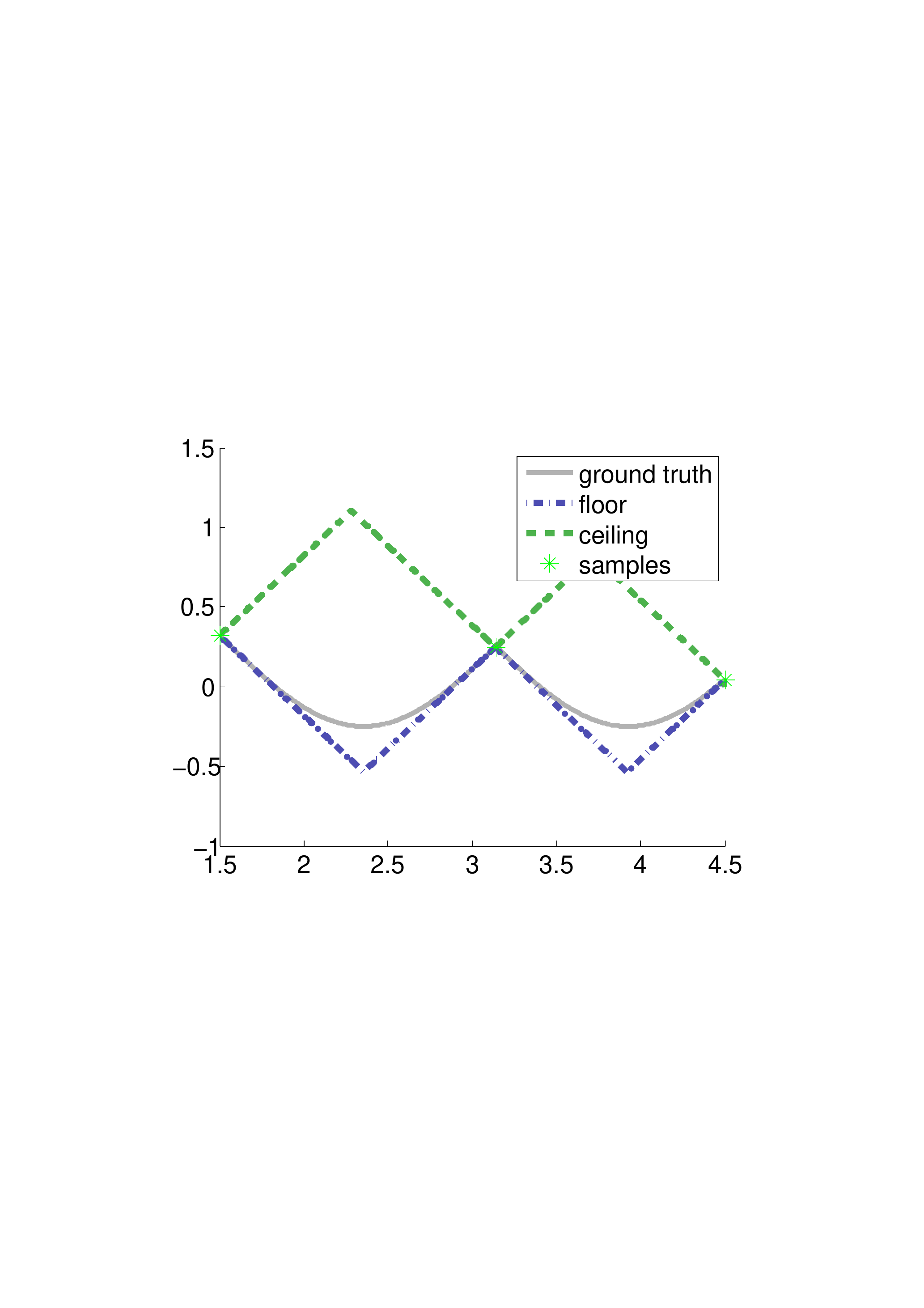}
        \end{subfigure}
        \begin{subfigure}
                \centering
                \includegraphics[width = 3.9cm, clip, trim = 3.5cm 9.5cm 4.5cm 10cm]{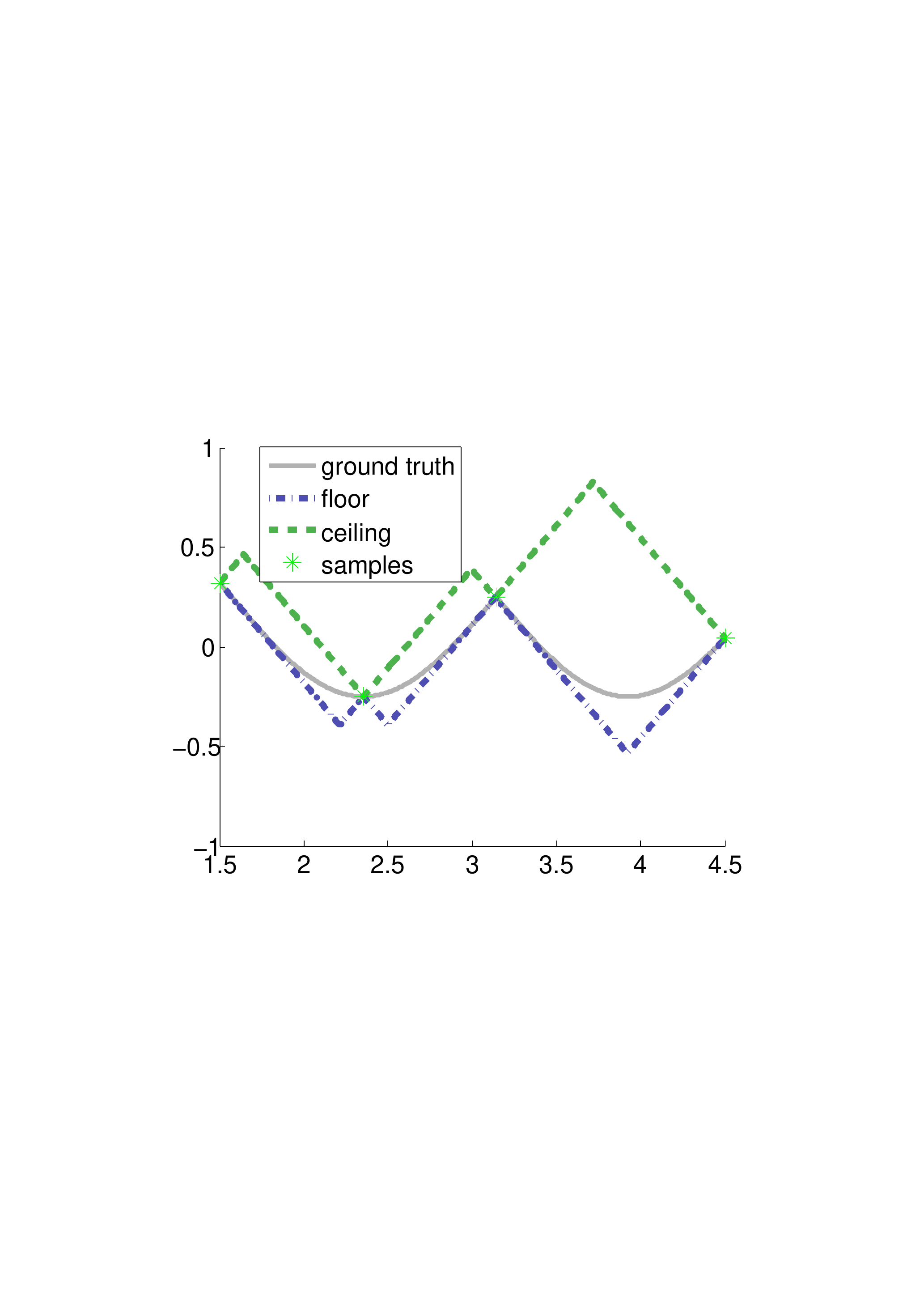}
        \end{subfigure}
       \caption{Proving the existence of a negative value of function 
			$x \mapsto \abs{\sin(x)}  \cos(x) +\frac{1}{4}$. Left: Initial condition. Centre: First refinement. Right: The second refinement has revealed the existence of a negative value.}
       \label{fig:shubert}
\end{figure*}

\begin{algorithm}
\SetKwData{flag}{flag} \SetKwData{negTime}{criticalTime} \SetKwData{minVal}{minVal}
\SetKwData{grid}{$G_N$} \SetKwData{lipschitzconst}{L}
\SetKwFunction{OR}{OR}\SetKwFunction{FindCompress}{FindCompress} \SetKwFunction{Resolve}{Resolve}
\SetKwFunction{insert}{Insert} \SetKwFunction{Planner}{Planner} \SetKwFunction{Auction}{Auction}
\SetKwFunction{Receive}{Receive} \SetKwFunction{Avoid}{Avoid}
\SetKwFunction{DetectCollisions}{CollDetect}
\SetKwInOut{Input}{input}\SetKwInOut{Output}{output}
\Input{Domain boundaries $t_0,t_f$ $\in \Real$, function  $\gamma: (t_0,t_f) \to \Real$, Lipschitz constant $\lipschitzconst >0$.}
\Output{Flag \flag  indicating presence of a non-positive function value (\flag = 1 indicates existence of a non-positive function value; \flag =0 indicates it has been ruled out that a negative function value can exist). Variable \negTime contains the time of a non-positive function value if such exists (\negTime $=t_0-1$, 
iff $\gamma((t_0,t_f)) \subset \Real_{+}$).}
\BlankLine

$\flag \leftarrow -1$;
$\negTime \leftarrow t_0-1$;

$\grid \leftarrow \{t_0,t_f\}$;

$N=0$;

\Repeat{$ \flag =1 $ \OR $\flag = 0$ }{

$\xi^{\mathfrak l}  \leftarrow \min_{i=1}^{N} \frac{t_{i+1}+t_i}{2} - \frac{\gamma(t_{i+1}) - \gamma(t_i)}{2 L}$; \\

$\boden_N(\xi^{\mathfrak l}_N) \leftarrow \min_{i=1}^{N} \frac{\gamma(t_{i+1}) +\gamma(t_i)}{2} - L \frac{t_{i+1}-t_i}{2}$;

$\minVal \leftarrow \gamma(\xi^{\mathfrak l});$ \\

\uIf{$\minVal \leq 0$}{
$\flag \leftarrow 1$;
$\negTime \leftarrow \xi^\boden$;
}
\uElseIf{$\boden_N(\xi^{\mathfrak l}_N) > 0$}{
$\flag \leftarrow 0$;
}
\Else{
$N \leftarrow N +1$; 
$\grid \leftarrow \grid \cup \{\xi^{\mathfrak l} \}$;

}
}

\caption{Adaptive algorithm based on Shubert's method to prove whether a Lipschitz continuous function $\gamma$ has a non-positive value on a compact domain. 
Note, if $\boden_N(\xi^{\mathfrak l}_N) >0$ the function is guaranteed to map into the positive reals exclusively.}
\label{alg:negdetect_lipschitz_shubertstyle}
\end{algorithm}

For pseudo-code refer to Alg. \ref{alg:negdetect_lipschitz_shubertstyle}.

An example run is depicted in Fig. \ref{fig:shubert}. Note, without our stopping criteria, our algorithm degenerates to Shubert's minimization method \cite{Shubert:72}. The stopping criteria are important to save computation, especially in the absence of negative function values.

\subsection{Deriving collision criterion functions}

This subsection is dedicated to the derivation of a (Lipschitz) criterion function. In lieu to the approach of \cite{ArmsTR:2011,Lyons2011}, the idea is to define hyper-cuboids $H^\agi, H^\agii$ sufficently large to contain a large enough proportion of each agent's probability mass to ensure that no collision occurs (with sufficient confidence) as long as the cuboids do not overlap. We then define the criterion function so as to negative values whenever the hyper-cuboids do overlap.

For ease of notation, we omit the time index $t$. For instance, in this subsection, $x^\agi$ now denotes random variable $x^\agi(t)$ rather than the stochastic trajectory.

The next thing we will do is to derive sufficient conditions for absence of collisions, i.e. for $\Pr[\collevent^{\agi,\agii}  ] < \delta$.

To this end, we make an intermediate step: 
For each agent $\agiii \in \{\agi,\agii\}$ we define an open hyper-cuboid $H^\agiii$ centred around mean $\mu^\agiii = \expect{\state^\agiii(t)}$. As a $D$-dimensional 
hyper-cuboid, $H^\agiii$ is completely determined by its centre point $\mu^\agiii$ and its edge lengths $l^\agiii_1,...,l^\agiii_D$.  
Let $O^\agiii$ denote the event that $x^\agiii \notin H^\agiii$ and $P^\agiii := \Pr[O^\agiii]$. We derive a simple disjunctive constraint 
on the component distances of the means under which we can guarantee that the collision probability is not greater than the probability 
of at least one object being outside its hyper-cuboid. This is the case if the hypercuboids do not overlap. That is, their max-norm distance is at least 
$\Lambda^{\agi,\agii} : = \frac{\Lambda^\agi + \Lambda^\agii}{2}$.

Before engaging in a formal discussion we need to establish a preparatory fact:

\begin{lem}\label{lem:star}
Let $\mu^\agiii_j$ denote the $j$th component of object $\agiii$'s mean and $r_j^\agiii = \frac 1 2 l_j^\agiii$. 
Furthermore, let $\mathfrak F^{\agi,\agii} := \overline {\collevent^{\agi,\agii}} $ be the event that no collision occurs and $\mathfrak B^{\agi,\agii}:= H^\agi \times H^\agii$ the event that 
$x^\agi \in H^\agi$ and $x^\agii \in H^\agii$.
Assume the component-wise distance between the hyper-cuboids $H^\agi,H^\agii$ is at least $\Lambda^{\agi,\agii}$, which is expressed by the following disjunctive constraint:
\[\exists j \in \{1,...,D\}: \abs{\mu^\agi_j - \mu_j^\agii } > \Lambda^{\agi,\agii} + r^\agi_j + r^\agii_j.\]

Then, we have : $ \mathfrak B^{\agi,\agii} \subset \mathfrak F^{\agi,\agii}.$

\begin{proof}
 Since $\norm{x}_\infty \leq \norm{x}_2,  \forall x$ we have 

$\mathfrak F_\infty := \{(x^\agi,x^\agii) \vert \norm{x^\agi - x^\agii}_\infty> \Lambda^{\agi,\agii} \}$ $\subset \{(x^\agi,x^\agii) \vert \norm{x^\agi - x^\agii}_2 > \Lambda^{\agi,\agii} \} = \mathfrak F^{\agi,\agii} $.
It remains to be shown that $\mathfrak B^{\agi,\agii} \subset \mathfrak F_\infty$:
Let $(x^\agi, x^\agii) \in \mathfrak B^{\agi,\agii} = H^\agi \times H^\agi$. Thus, 
$\forall j \in \{1,...,D\}, \agiii \in \{\agi,\agii\}: \abs{x^\agiii_j - \mu^\agiii_j} \leq r^\agiii_j$. 
For contradiction, assume $(x^\agi, x^\agii) \notin \mathfrak F_\infty$. Then, $\abs{x^\agi_i -x^\agii_i} \leq \Lambda^{\agi,\agii}$
for all $i \in \{1,...,D\}$. 

Hence, $\abs{\mu^\agi_i - \mu^\agii_i} = \abs{\mu^\agi_i - x^\agi_i + x^\agi_i - x^\agii_i + x^\agii_i- \mu^\agii_i}$
$\leq \abs{\mu^\agi_i - x^\agi_i} + \abs{x^\agi_i - x^\agii_i} + \abs{x^\agii_i- \mu^\agii_i} \leq r^\agi_i + \Lambda^{\agi,\agii} + r^\agii_i, \forall i \in \{1,...,D\}$ 
which contradicts our disjunctive constraint in the premise of the lemma. q.e.d.

\end{proof}

\end{lem}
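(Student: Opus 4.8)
The plan is to establish the set inclusion $\mathfrak B^{\agi,\agii} \subset \mathfrak F^{\agi,\agii}$ by routing through an intermediate event defined via the max-norm rather than the Euclidean norm, exploiting the elementary inequality $\norm{x}_\infty \leq \norm{x}_2$. First I would introduce the auxiliary no-collision event $\mathfrak F_\infty := \{(x^\agi,x^\agii) \mid \norm{x^\agi - x^\agii}_\infty > \Lambda^{\agi,\agii}\}$. Because separation in the max-norm is at least as hard to achieve as in the Euclidean norm --- that is, $\norm{x^\agi - x^\agii}_\infty > \Lambda^{\agi,\agii}$ forces $\norm{x^\agi - x^\agii}_2 > \Lambda^{\agi,\agii}$ --- the inclusion $\mathfrak F_\infty \subset \mathfrak F^{\agi,\agii}$ is immediate, reducing the lemma to proving $\mathfrak B^{\agi,\agii} \subset \mathfrak F_\infty$.

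For the remaining inclusion, I would take an arbitrary pair $(x^\agi, x^\agii) \in \mathfrak B^{\agi,\agii}$, so that each point lies in its cuboid, i.e. $\abs{x^\agiii_j - \mu^\agiii_j} \leq r^\agiii_j$ for every coordinate $j$ and each $\agiii \in \{\agi,\agii\}$. Let $j$ be a coordinate witnessing the disjunctive hypothesis, so $\abs{\mu^\agi_j - \mu^\agii_j} > \Lambda^{\agi,\agii} + r^\agi_j + r^\agii_j$. The key step is a triangle-inequality bound on the coordinate gap between the actual positions: writing $\mu^\agi_j - \mu^\agii_j = (\mu^\agi_j - x^\agi_j) + (x^\agi_j - x^\agii_j) + (x^\agii_j - \mu^\agii_j)$ and taking absolute values yields $\abs{\mu^\agi_j - \mu^\agii_j} \leq r^\agi_j + \abs{x^\agi_j - x^\agii_j} + r^\agii_j$. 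Rearranging and invoking the hypothesis gives $\abs{x^\agi_j - x^\agii_j} > \Lambda^{\agi,\agii}$, whence $\norm{x^\agi - x^\agii}_\infty > \Lambda^{\agi,\agii}$ and $(x^\agi,x^\agii) \in \mathfrak F_\infty$.

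The argument is essentially routine; the only subtlety --- and the place I would be most careful --- is the bookkeeping of strict versus non-strict inequalities. The cuboids are open, so the containment bounds on $\abs{x^\agiii_j - \mu^\agiii_j}$ could be taken strict, but in fact only the strictness of the disjunctive hypothesis is needed to conclude $\abs{x^\agi_j - x^\agii_j} > \Lambda^{\agi,\agii}$; I would make sure the final chain preserves this strict inequality so that the conclusion lands in $\mathfrak F^{\agi,\agii}$ (strict Euclidean separation) rather than merely in its closure. An alternative, logically equivalent route is to argue by contradiction: assume $(x^\agi,x^\agii) \notin \mathfrak F_\infty$, so that every coordinate gap satisfies $\abs{x^\agi_i - x^\agii_i} \leq \Lambda^{\agi,\agii}$, and then derive $\abs{\mu^\agi_i - \mu^\agii_i} \leq r^\agi_i + \Lambda^{\agi,\agii} + r^\agii_i$ for all $i$, directly contradicting the disjunctive premise; I would adopt whichever phrasing reads more cleanly.
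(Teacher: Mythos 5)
Your proposal is correct and follows essentially the same route as the paper's proof: the same intermediate max-norm event $\mathfrak F_\infty$, the inclusion $\mathfrak F_\infty \subset \mathfrak F^{\agi,\agii}$ via $\norm{x}_\infty \leq \norm{x}_2$, and the same coordinate-wise triangle-inequality bound. The only (cosmetic) difference is that you phrase the second inclusion directly from the witnessing coordinate while the paper argues by contradiction --- a variant you yourself note as equivalent --- and your attention to preserving strictness of the inequality is sound.
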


\begin{thm}
\label{thm:hypercubprobsconstr}
Let $\mu^\agiii_j$ denote the $j$th component of object $\agiii$'s mean and $r_j^\agiii = \frac 1 2 l_j^\agiii$. Assume, $\state^\agi, \state^\agii$ are random variables with means $\mu^\agi = \expect{\state^\agi},\mu^\agii = \expect{\state^\agii}$, respectively. The max-norm distance between hypercuboids $H^\agi,H^\agii$ is at least $\Lambda^{\agi,\agii} > 0$ (i.e. the hypercuboids do not overlap), which is expressed by the following disjunctive constraint:
\[\exists j \in \{1,...,D\}: \abs{\mu^\agi_j - \mu_j^\agii } > \Lambda^{\agi,\agii} + r^\agi_j + r^\agii_j.\]

Then, we have : \[\Pr[\collevent^{\agi,\agii}] \leq P^\agi + P^\agii - P^\agii \, P^\agi  \leq P^\agi + P^\agii\]
where $P^\agiii = \Pr[x^\agiii \notin H^\agiii], (\agiii \in \{\agi,\agii \})$.
%
\begin{proof} As in Lem. \ref{lem:star}, let $\mathfrak F^{\agi,\agii} := \overline {\collevent^{\agi,\agii}} $ be the event that no collision occurs and 
let $\mathfrak B^{\agi,\agii}:= H^\agi \times H^\agii$.
We have 
$\Pr[\collevent^{\agi,\agii}]$
$\leq 1 - \Pr[\overline {\collevent^{\agi,\agii} }] = 1- \Pr[\mathfrak F^{\agi,\agii}]$. 
 By Lem. \ref{lem:star} we have $\mathfrak B^{\agi,\agii} \subset \mathfrak F^{\agi,\agii}$ and thus, 
$ 1- \Pr[\mathfrak F^{\agi,\agii}] \leq 1- \Pr[\mathfrak B^{\agi,\agii}] = \Pr[\overline {\mathfrak B^{\agi,\agii}}]$. Now, $\Pr[\overline {\mathfrak B^{\agi,\agii}}] = \Pr[x^\agi \notin H^\agi \vee x^\agii \notin H^\agii  ]$
$= P^\agi + P^\agii - P^\agi \, P^\agii \leq P^\agi + P^\agii$. q.e.d.
\end{proof}

\end{thm}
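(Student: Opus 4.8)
The plan is to bound the collision probability from above by passing to the complementary ``no-collision'' event and then to the even larger ``both-agents-inside-their-cuboids'' event, exactly as set up by Lemma \ref{lem:star}. Concretely, I would first observe that $\Pr[\collevent^{\agi,\agii}] = 1 - \Pr[\overline{\collevent^{\agi,\agii}}] = 1 - \Pr[\mathfrak F^{\agi,\agii}]$, where $\mathfrak F^{\agi,\agii}$ is the no-collision event. This is just the definition of the complementary event and requires no work.

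The key step is to invoke Lemma \ref{lem:star}. Since the disjunctive separation hypothesis is identical to that of the lemma, I may assume its conclusion $\mathfrak B^{\agi,\agii} \subset \mathfrak F^{\agi,\agii}$, where $\mathfrak B^{\agi,\agii} = H^\agi \times H^\agii$ is the event that both agents lie inside their respective hyper-cuboids. Monotonicity of probability under set inclusion then gives $\Pr[\mathfrak F^{\agi,\agii}] \geq \Pr[\mathfrak B^{\agi,\agii}]$, so that $1 - \Pr[\mathfrak F^{\agi,\agii}] \leq 1 - \Pr[\mathfrak B^{\agi,\agii}] = \Pr[\overline{\mathfrak B^{\agi,\agii}}]$. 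This reduces the problem to estimating the probability that at least one agent escapes its cuboid.

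It then remains to rewrite $\Pr[\overline{\mathfrak B^{\agi,\agii}}]$ in terms of $P^\agi$ and $P^\agii$. The complement of ``both inside'' is the event $\{x^\agi \notin H^\agi\} \cup \{x^\agii \notin H^\agii\}$, i.e.\ $O^\agi \vee O^\agii$. I would apply inclusion--exclusion for the union of two events to obtain $\Pr[O^\agi \cup O^\agii] = P^\agi + P^\agii - \Pr[O^\agi \cap O^\agii]$. Because $P^\agi P^\agii \geq 0$ and the probability of the intersection is nonnegative, discarding it yields the cruder bound $P^\agi + P^\agii$. This gives both stated inequalities at once, provided the intersection probability equals the product $P^\agi P^\agii$.

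The main obstacle is the middle expression $P^\agi + P^\agii - P^\agi\,P^\agii$, which only holds exactly if $\Pr[O^\agi \cap O^\agii] = P^\agi P^\agii$, i.e.\ if the two escape events are independent. For general correlated trajectories this equality need not hold, and strictly only the weaker bound $P^\agi + P^\agii$ (from subadditivity of probability) is guaranteed. I would therefore flag that the tighter middle bound implicitly presumes (statistical independence of the two agents' positions, or at least of the escape events $O^\agi, O^\agii$); under that assumption the inclusion--exclusion term becomes the product and the chain of inequalities closes as claimed. Absent independence one should retreat to the final, always-valid bound $\Pr[\collevent^{\agi,\agii}] \leq P^\agi + P^\agii$, which is all that is needed for the conservative criterion function anyway.
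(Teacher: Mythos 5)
Your proof is correct and follows the paper's own argument step for step: pass to the complement of the collision event, invoke Lemma \ref{lem:star} together with monotonicity of probability, and expand $\Pr[\overline{\mathfrak B^{\agi,\agii}}]$ by inclusion--exclusion. Your caveat about the middle expression is also well placed: the paper's proof writes $\Pr[x^\agi \notin H^\agi \vee x^\agii \notin H^\agii] = P^\agi + P^\agii - P^\agi \, P^\agii$ without comment, which likewise presumes independence of the escape events $O^\agi, O^\agii$ (in general one only gets $P^\agi + P^\agii - \Pr[O^\agi \cap O^\agii]$), so absent independence only the final bound $P^\agi + P^\agii$ --- the one actually used in constructing the criterion functions --- is guaranteed.
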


One way to define a criterion function is as follows: 

\begin{equation}
\label{eq:critfctgeneric}
\gamma^{\agi,\agii} (t; \varrho(t)) := \max_{i=1,\ldots,D} \{\abs{\mu^\agi_i(t) - \mu_i^\agii(t) } -\Lambda^{\agi,\agii} - r^\agi_i(t) - r^\agii_i(t)\}
\end{equation}
where $\varrho = (r_1^\agi,\ldots,r_D^\agi,r_1^\agii,\ldots,r_D^\agii)$ is the parameter vector of radii. (For notational convenience, we will often omit explicit mention of parameter $\varrho$ in the function argument.)

For more than two agents, agent $\agi's$ overall criterion function is 
$\Gamma^\agi(t) := \min_{\agii \in \agset\backslash\{\agi\}} \gamma^{\agi,\agii} (t).$

Thm. \ref{thm:hypercubprobsconstr} tells us that the collision probability is bounded from above by the desired threshold $\delta$ if $\gamma^{\agi,\agii} (t) >0$, provided we chose the radii $r_j^\agi,r^\agii_j$ ($j=1,...,D$) such that 
$P^\agi, P^\agii \leq \frac{\delta}{2}$.

Let $\agiii \in \{\agi,\agii\}$.
Probability theory provides several distribution-independent bounds relating the radii of a (possibly partly unbounded) hypercuboid to 
the probability of not falling into it. That is, these are bounds of the form 

\[P^\agiii \leq \beta(r^\agiii_1,...,r^\agiii_D; \Theta)\] where $\beta$ is a continuous function that decreases monotonically with increasing radii and $\Theta$ represents additional information. 
In the case of Chebyshev-type bounds information about the first two moments are folded in, i.e. $\Theta = (\mu^\agiii, C^\agiii)$ where $C^\agiii (t) \in \Real^{D \times D}$  is the variance (-covariance) matrix.
We then solve for radii that fulfil the inequality $\frac{\delta}{2} \stackrel{}{ \geq} \beta(r^\agiii_1,...,r^\agiii_D; \Theta)$ while simultaneously ensuring collision avoidance with the desired probability.

Inspecting Eq. \ref{eq:critfctgeneric}, it becomes clear that, in order to maximally diminish conservatism of the criterion function, it would be ideal to choose the radii in $\varrho$ such that 
$\varrho = \argmax_\varrho \gamma^{\agi,\agii}(t;\varrho) = \argmax_{r_1^\agi,...,r^\agi_D\\r_1^\agii,...,r^\agii_D} \max_{i=1,\ldots,D} \{\abs{\mu^\agi_i - \mu_i^\agii } -\Lambda^{\agi,\agii} - r^\agi_i - r^\agii_i\}$ subject to the constraints $\frac{\delta}{2} \stackrel{}{ \geq} \beta(r^\agiii_1,...,r^\agiii_D; \Theta), (\agiii \in \{\agi,\agii\})$.
Solving this constrained optimisation problem can often be done in closed form.

In the context where $\beta$ is derived from a Chebyshev-type bound, we propose to set as many radii as large as possible (in order to decrease ($\beta$ to satisfy the constraints) while setting the radii $r_i^\agi, r_i^\agii$ as small as possible without violating the constraint (where $i$ is some dimension). 
That is, we define the radii as follows: Set $r_j^\agiii := \infty, \forall j \neq i$. The remaining unknown variable, $r_i^\agiii$, then is defined as the solution to the equation $\frac{\delta}{2} = \beta(r^\agiii_1,...,r^\agiii_D; \Theta)$.
The resulting criterion function, denoted by $\gamma^{\agi,\agii}_i$, we obtain with this procedure of course depends on the arbitrary choice of dimension $i$.
Therefore, we obtain a less conservative criterion function by repeating this process for each dimension $i$ and then constructing a new criterion function as the point-wise maximum: $\gamma^{\agi,\agii} (t):= \max_i \gamma^{\agi,\agii}_i(t)$.

A concrete example of this procedure is provided below.

\subsubsection{Example constructions of distribution-independent criterion functions}
We can use the above derivation as a template for generating criterion functions.

Consider the following concrete example. Combining union bound and the standard (one-dim.) Chebyshev bound yields 
$P^\agiii =\Pr [\state^\agiii \notin H^\agiii ] \leq \sum_{j=1}^D \frac{C_{jj}^\agiii}{ r_j^\agiii r_j^\agiii } =: \beta(r_1^\agiii,\ldots,r_D^\agiii ; C^\agiii)$.
Setting every radius, except $r_i^\agiii$, to infinitely large values and $\beta$ equal to $\frac{\delta}{2}$ yields 
$\frac{\delta}{2} = \frac{C_{ii}^\agiii}{ r_i^\agiii r_i^\agiii}$, i.e. $r_i^\agiii = \sqrt{\frac{2 C_{ii}^\agiii} {\delta}}$.
(Note, this a correction of the radius provided in the conference version of this paper.) 
Finally, inserting these radii ( for $\agiii = \agi,\agii$) into Eq. \ref {eq:critfctgeneric} yields our first collision criterion function:
%
$\gamma^{\agi,\agii} (t) := \abs{\mu^\agi_i(t) - \mu_i^\agii(t) } -\Lambda^{\agi,\agii} - \sqrt{\frac{2 C_{ii}^\agi(t)} {\delta}}-\sqrt{\frac{2 C_{ii}^\agii(t)} {\delta}}.$

Of course, this argument can be made for any choice of dimension $i$. Hence, a less conservative, yet valid, choice is
\begin{equation}
\label{eq:critfctChebyshev_1dim_infiniteradii_better}
\gamma^{\agi,\agii} (t) := \max_{i=1,...,D} \abs{\mu^\agi_i(t) - \mu_i^\agii(t) } -\Lambda^{\agi,\agii} - \sqrt{\frac{2 C_{ii}^\agi(t)} {\delta}}-\sqrt{\frac{2 C_{ii}^\agii(t)} {\delta}}.
\end{equation}

Notice, this function has the desirable property of being Lipschitz continuous, provided the mean $\mu_i^\agiii : I \to \Real$ and standard deviation functions $\sigma_{ii}^\agiii= \sqrt{C^\agiii_{ii}} : I \to \Real_+$ are. In particular, it is easy to show $L(\gamma^{\agi,\agii}) \leq  \max_{i=1,...,D} L( \mu_i^\agi ) + L(\mu_i^\agii) + \sqrt{\frac{2}{\delta}} \bigl(L(\sigma_{ii}^\agi) + L(\sigma_{ii}^\agii) \bigr)$
where, as before, $L(f)$ denotes a Lipschitz constant of function $f$.

For the special case of two dimensions, we can derive a less conservative alternative criterion function based on a tighter two-dimensional Chebyshev-type bound \cite{whittle_chebyshev}:

\begin{thm}[Alternative collision criterion function] \label{def:collcritfct2d}
 Let spatial dimensionality be $D = 2$. Choosing
 
$r^{\agiii}_i(t) := \sqrt{\frac{1}{2\delta^a}}\, \sqrt{ C_{ii}^\agiii(t) +\frac{ \sqrt{C_{ii}^\agiii(t) C_{jj}^\agiii(t) 
(C_{ii}^\agiii(t) C_{jj}^\agiii(t) - (C_{ij}^\agiii(t))^2) }}{C_{jj}^\agiii(t)}}$
 
($\agiii \in \{\agi,\agii\}, i \in \{1,2\}, j \in \{1,2\} - \{i\}$) in Eq. \ref{eq:critfctgeneric} yields a valid distribution-independend criterion function. That is, $\gamma^{\agi,\agii}(t) >0 \Rightarrow \Pr [\mathfrak C^{\agi,\agii}(t)]  < \delta^\agi$.
\end{thm}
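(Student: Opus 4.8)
The plan is to reduce the statement to the two-agent probability bound of Theorem~\ref{thm:hypercubprobsconstr} and then to verify, through a two-dimensional Chebyshev-type estimate, that the prescribed radii keep each agent's escape probability below $\delta/2$. First I would note that, by the definition of the criterion function in Eq.~\ref{eq:critfctgeneric}, the hypothesis $\gamma^{\agi,\agii}(t)>0$ is equivalent to the existence of a coordinate $i$ with $\abs{\mu^\agi_i(t)-\mu^\agii_i(t)}>\Lambda^{\agi,\agii}+r^\agi_i(t)+r^\agii_i(t)$, which is precisely the disjunctive non-overlap constraint demanded by Lemma~\ref{lem:star} and Theorem~\ref{thm:hypercubprobsconstr}. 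Those results then apply verbatim and give $\Pr[\collevent^{\agi,\agii}(t)]\leq P^\agi+P^\agii-P^\agi P^\agii = 1-(1-P^\agi)(1-P^\agii)$, where $P^\agiii=\Pr[\state^\agiii(t)\notin H^\agiii(t)]$ and $H^\agiii$ is the centred rectangle with half-edges $r^\agiii_1,r^\agiii_2$. Hence it suffices to establish the single-agent tail bound $P^\agiii\leq\delta/2$ for $\agiii\in\{\agi,\agii\}$; substituting this into $1-(1-P^\agi)(1-P^\agii)\leq 1-(1-\delta/2)^2=\delta-\delta^2/4<\delta$ closes the argument with the required strict inequality.

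The heart of the proof is thus the single-agent claim: with the stated radii, $\Pr[\state\notin H]\leq\delta/2$ (fixing an agent and suppressing its superscript). I would obtain this from the two-dimensional Chebyshev-type bound of Whittle~\cite{whittle_chebyshev}, realised through the standard moment/Markov route adapted to a rectangle. Choose a positive semidefinite quadratic form $A$ whose sublevel ellipse $\{x:(x-\mu)^\top A(x-\mu)\leq 1\}$ is inscribed in $H$; then $\{x\notin H\}\subseteq\{(x-\mu)^\top A(x-\mu)>1\}$, and Markov's inequality gives $\Pr[\state\notin H]\leq \ew[(\state-\mu)^\top A(\state-\mu)]=\mathrm{tr}(AC)$. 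The inscription requirement constrains only the diagonal of $A^{-1}$, namely $(A^{-1})_{ii}\leq r_i^2$, leaving the off-diagonal (the orientation of the ellipse) free. Minimising $\mathrm{tr}(AC)$ over this single remaining degree of freedom is an elementary one-parameter optimisation whose optimiser introduces the determinant $\det C=C_{ii}C_{jj}-C_{ij}^2$ and yields a tightest bound of the form $\Pr[\state\notin H]\leq\bigl(C_{ii}+\sqrt{C_{ii}C_{jj}(C_{ii}C_{jj}-C_{ij}^2)}/C_{jj}\bigr)/r_i^{2}$, once the two half-edges are locked in the ratio $r_i:r_j=\sqrt{C_{ii}}:\sqrt{C_{jj}}$ that the radius formula dictates.

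It then remains to substitute the prescribed radii and read off the resulting tail probability. Writing $\rho=C_{ij}/\sqrt{C_{ii}C_{jj}}$ and $s=\sqrt{1-\rho^2}$, I would record the clean intermediate identity that at the optimal orientation $\mathrm{tr}(AC)=(1+s)\,C_{ii}/r_i^2$, equivalently $(1+s)\,C_{jj}/r_j^2$, which exposes the cancellation: since $r_i^2$ is, by definition, proportional to $C_{ii}+\sqrt{C_{ii}C_{jj}(C_{ii}C_{jj}-C_{ij}^2)}/C_{jj}=(1+s)C_{ii}$, the variance factors divide out and the bound collapses to a fixed multiple of $\delta$. The companion computation with $i$ and $j$ interchanged reproduces the second radius and confirms that the two inscription constraints are simultaneously active, which is exactly what justifies treating both as equalities in the optimisation of the previous paragraph.

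The main obstacle is the middle step: fixing the exact constant in Whittle's rectangular bound and making the orientation optimisation rigorous. Two points need care. First, one must show that both constraints $(A^{-1})_{ii}\leq r_i^2$ are active at the optimum and that the optimal cross-term keeps $A^{-1}$ positive definite --- this amounts to selecting the correct root of a quadratic in the off-diagonal entry, and it is here that the factor $s=\sqrt{1-\rho^2}$ (rather than a spurious $\abs{\rho}$) appears; an incorrect root or an inactive constraint would corrupt the constant. Second, the numerical prefactor must be reconciled against the target $\delta/2$: a literal substitution of the radii as written returns a bound proportional to $\delta$ rather than $\delta/2$, so the normalisation $\sqrt{1/(2\delta)}$ in the radius formula is the one quantity I would re-derive explicitly (and, if necessary, correct) so that the cancellation lands exactly on $\delta/2$ --- paralleling the constant correction the authors already note for the one-dimensional Chebyshev criterion.
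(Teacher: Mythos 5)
Your proposal takes essentially the same route as the paper, but the paper's own ``proof'' is only a two-line sketch that defers to Thm.~2 of \cite{Lyons2011}: it asserts that the combination of the box-separation argument (Lem.~\ref{lem:star}, Thm.~\ref{thm:hypercubprobsconstr}) with Whittle's two-dimensional Chebyshev-type bound goes through, without ever exhibiting the bound or the substitution. Your first paragraph reproduces exactly that skeleton (and your observation that the product form $P^\agi + P^\agii - P^\agi P^\agii = 1-(1-\delta/2)^2 = \delta - \delta^2/4$ yields the strict inequality is a small improvement over the paper's reliance on the cruder sum bound). What your write-up adds is the content the paper outsources: the inscribed-ellipse/Markov derivation is the correct reading of Whittle \cite{whittle_chebyshev}, and your claimed optimum checks out --- writing $r_i^2 = \lambda C_{ii}$, $r_j^2 = \lambda C_{jj}$ and parametrising the off-diagonal of $A^{-1}$ by $c = u\lambda\sqrt{C_{ii}C_{jj}}$, one minimises $\frac{2(1-u\rho)}{\lambda(1-u^2)}$ at the feasible root $u^* = (1-s)/\rho$ with $s = \sqrt{1-\rho^2}$, giving $\mathrm{tr}(AC) = (1+s)/\lambda = (1+s)\,C_{ii}/r_i^2$, precisely your intermediate identity. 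Most importantly, your constant audit in the final paragraph is correct and not merely cautious: substituting the printed radii, for which $\lambda = (1+s)/(2\delta)$, gives a per-agent escape probability bound of $2\delta$ rather than $\delta/2$ (hence a pairwise guarantee of roughly $4\delta$, not $\delta$), and this cannot be rescued by a sharper distribution-independent inequality --- at $\rho = 0$ a product of symmetric two-point marginals saturating the one-dimensional Chebyshev bound attains escape probability $2/\lambda - 1/\lambda^2$, so $(1+s)/\lambda$ is essentially tight. The theorem as printed therefore holds only with the radii enlarged by a factor of $2$, i.e.\ $\sqrt{2/\delta}$ in place of $\sqrt{1/(2\delta)}$, mirroring the correction the authors themselves flag for the one-dimensional radius in the main text; your decision to re-derive the normalisation rather than trust it is the one point where your blind proof catches something the paper's citation-based sketch silently inherits.
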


A proof sketch and a Lipschitz constant (for non-zero uncertainty) are provided in the appendix. Note, the Lipschitz constant we have derived therein becomes infinite in the limit of vanishing variance. In that case, the presence of negative criterion values can be tested based on the sign of the minimum of the criterion function. This can be found employing a global optimiser. Future work will investigate, in how far Hoelder continuity instead of Lipschitz continuity can be leveraged to yield a similar algorithm as the one provided in Sec. \ref{sec:adaptiveLipshubertstyle}.

\subsubsection{Multi-agent case.} Let $\agi \in \agset$, $\agset' \subset \agset$ such that $\agi \notin \agset'$ a subset of agents. We define the event that $\agi$ collides with at least one of the agents in \agset' at time $t$ as  $\mathfrak C^{\agi,\agset'}(t) := \{ (\state^\agi(t),\state^\agii(t)) | \exists \agii \in \agset': \norm{\state^\agi(t)-\state^\agii(t)}_2 \leq \Lambda  \} = \bigcup_{\agii \in \agset'} \mathfrak C^{\agi,\agii}$.
By union bound, $\Pr[ \mathfrak C^{\agi,\agset'}(t)] \leq \sum_{\agii \in \agset'} \Pr[ \mathfrak C^{\agi,\agii}(t)] $.

\begin{thm} [Multi-Agent Criterion] Let $\gamma^{\agi,\agii}$ be valid criterion functions defined w.r.t. collision bound $\delta^\agi$.
We define  \emph{multi-agent collision criterion function} $\Gamma^{\agi,\agset'}(t) := \min_{\agii \in \agset'} \gamma^{\agi,\agii}(t)$. If  $\Gamma^{\agi,\agset'}(t) > 0$ then the collision probability with \agset' is bounded below $\delta^\agi |\agset'|$. That is, $\Pr[ \mathfrak C^{\agi,\agset'}(t)] < \delta^\agi |\agset'|.$ 
\label{thm:mascritfct}
\end{thm}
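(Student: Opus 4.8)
The plan is to combine the union bound recorded immediately before the statement with the validity property that defines each pairwise criterion function; no new machinery is needed. First I would unpack the hypothesis $\Gamma^{\agi,\agset'}(t) > 0$. Since $\Gamma^{\agi,\agset'}(t) = \min_{\agii \in \agset'} \gamma^{\agi,\agii}(t)$ is a minimum over the finite index set $\agset'$, positivity of the minimum is equivalent to positivity of every entry, so $\gamma^{\agi,\agii}(t) > 0$ holds for every $\agii \in \agset'$ simultaneously.

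Next I would invoke the defining property of a valid criterion function with respect to the bound $\delta^\agi$, namely $\gamma^{\agi,\agii}(t) > 0 \Rightarrow \Pr[\mathfrak C^{\agi,\agii}(t)] < \delta^\agi$, and apply it termwise. This converts the pointwise positivity just established into the family of strict probability bounds $\Pr[\mathfrak C^{\agi,\agii}(t)] < \delta^\agi$ for all $\agii \in \agset'$. Finally I would apply the union bound $\Pr[\mathfrak C^{\agi,\agset'}(t)] \leq \sum_{\agii \in \agset'} \Pr[\mathfrak C^{\agi,\agii}(t)]$ (which follows from $\mathfrak C^{\agi,\agset'}(t) = \bigcup_{\agii \in \agset'} \mathfrak C^{\agi,\agii}(t)$, as noted above the theorem) and sum the $|\agset'|$ strict inequalities, giving $\Pr[\mathfrak C^{\agi,\agset'}(t)] < \sum_{\agii \in \agset'} \delta^\agi = \delta^\agi |\agset'|$, which is the claim.

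There is essentially no obstacle here; the only point meriting a moment's care is the passage from the per-term strict inequalities to a strict inequality for their sum. A finite sum of terms, each strictly below $\delta^\agi$, is itself strictly below $\delta^\agi |\agset'|$ precisely when the index set $\agset'$ is nonempty; for $\agset' = \emptyset$ the statement is vacuous since $\delta^\agi |\agset'| = 0$ and no collision event is defined. I would also emphasise that the argument requires no independence or distributional assumption, because both the set inclusion $\mathfrak C^{\agi,\agset'}(t) = \bigcup_{\agii \in \agset'} \mathfrak C^{\agi,\agii}(t)$ and the resulting union bound hold for the arbitrary joint law of the trajectories, so the conservatism of the construction is inherited cleanly from the pairwise case established in Thm. \ref{thm:hypercubprobsconstr}.
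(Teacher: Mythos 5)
Your proposal is correct and follows essentially the same route as the paper's own proof: positivity of the minimum gives $\gamma^{\agi,\agii}(t) > 0$ for every $\agii \in \agset'$, termwise application of the validity property yields $\Pr[\mathfrak C^{\agi,\agii}(t)] < \delta^\agi$, and the union bound over $\mathfrak C^{\agi,\agset'}(t) = \bigcup_{\agii \in \agset'} \mathfrak C^{\agi,\agii}(t)$ finishes the argument. In fact your write-up is slightly more careful than the paper's, which contains a typo in the hypothesis (``let $\Gamma^{\agi,\agset'} < \delta^\agi$'' where $\Gamma^{\agi,\agset'}(t) > 0$ is meant) and concludes only with the non-strict bound $\leq \abs{\agset'}\,\delta^\agi$, whereas you correctly observe that the finite sum of strict inequalities gives the strict bound claimed in the theorem (for nonempty $\agset'$).
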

\begin{proof}

Let $\agi \in \agset$, $\agset' \subset \agset$ such that $\agi \notin \agset'$ a subset of agents. We define the event that $\agi$ collides with at least one of the agents in \agset' at time $t$ as  $\mathfrak C^{\agi,\agset'}(t) := \{ (\state^\agi(t),\state^\agii(t)) | \exists \agii \in \agset': \norm{\state^\agi(t)-\state^\agii(t)}_2 \leq \Delta  \} = \bigcup_{\agii \in \agset'} \mathfrak C^{\agi,\agii}$.

We have established that if $\forall \agii \in \agset': \gamma^{\agi,\agii}(t) >0$ then $\Pr [\mathfrak C^{\agi,\agii}(t)]  < \delta^\agi,\forall \agii \in \agset' $.  Now, let  $\Gamma^{\agi, \agset' }< \delta^\agi$. Hence,$\forall \agii \in \agset': \gamma^{\agi,\agii}(t) >0$. Thus, $\forall \agii \in \agset': \Pr[ \mathfrak C^{\agi,\agii}(t)]  < \delta^\agi)$ Therefore, $\sum_{\agii \in \agset'} \Pr[ \mathfrak C^{\agi,\agii}(t)] \leq \abs{\agset'} \delta^\agi$. By union bound, $\Pr[ \mathfrak C^{\agi,\agset'}(t)] \leq \sum_{\agii \in \agset'} \Pr[ \mathfrak C^{\agi,\agii}(t)] $. Consequently, we have $\Pr[ \mathfrak C^{\agi,\agset'}(t)] \leq \abs{\agset'} \delta^\agi$. q.e.d.

\end{proof}

Moreover, $\Gamma^{\agi,\agset'}$ is Lipschitz if the constituent functions $\gamma^{\agi,\agii}$ are (see Appendix \ref{sec:derlipno}).

\begin{figure*}
        \centering
        \begin{subfigure}
                \centering
                \includegraphics[width = 3.9cm, clip, trim = 3cm 9.5cm 4cm 10cm]
								{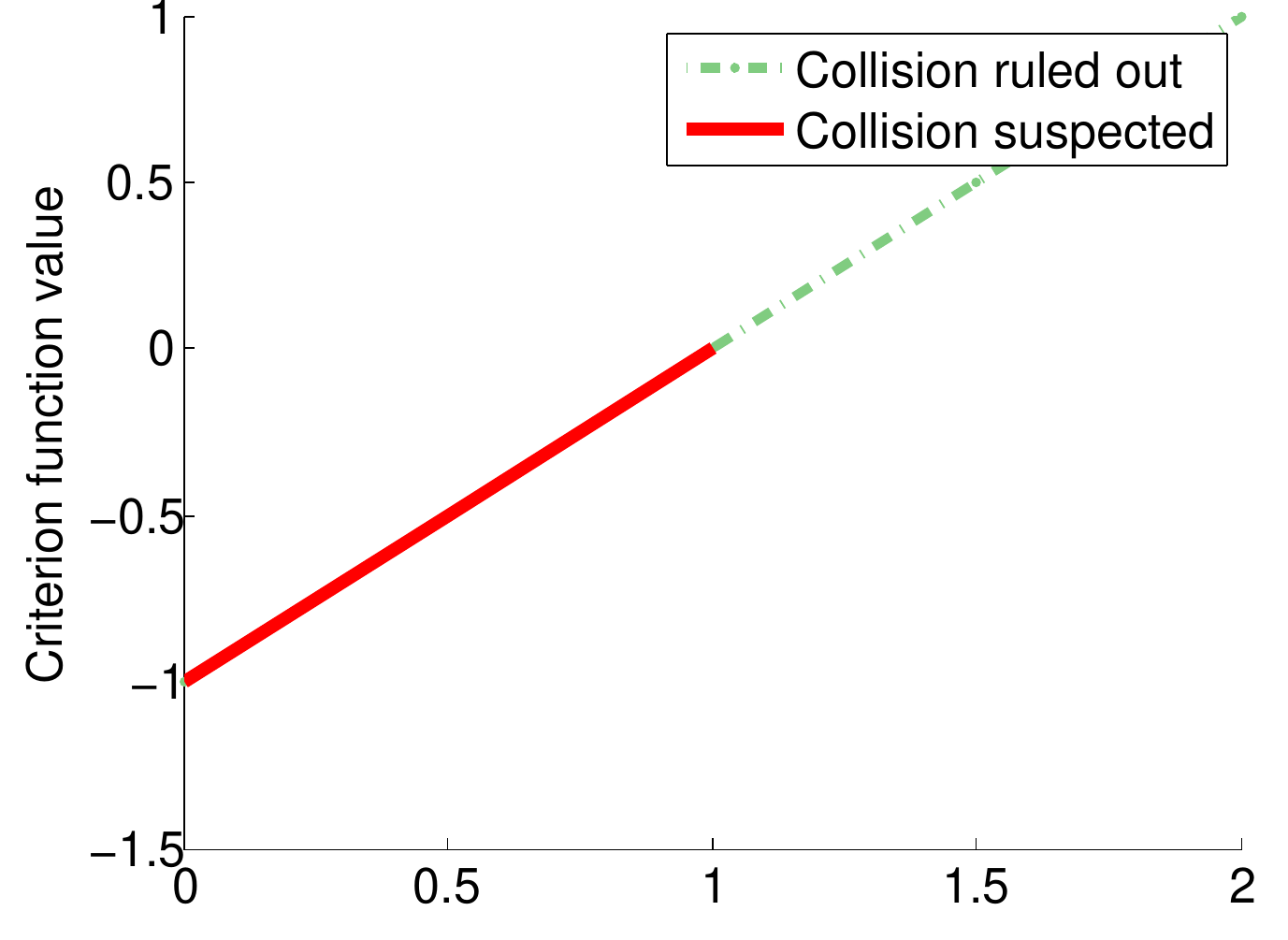}
        \end{subfigure}%
        \begin{subfigure}
                \centering
                \includegraphics[width = 3.9cm, clip, trim = 3cm 9.5cm 4cm 10cm]
								{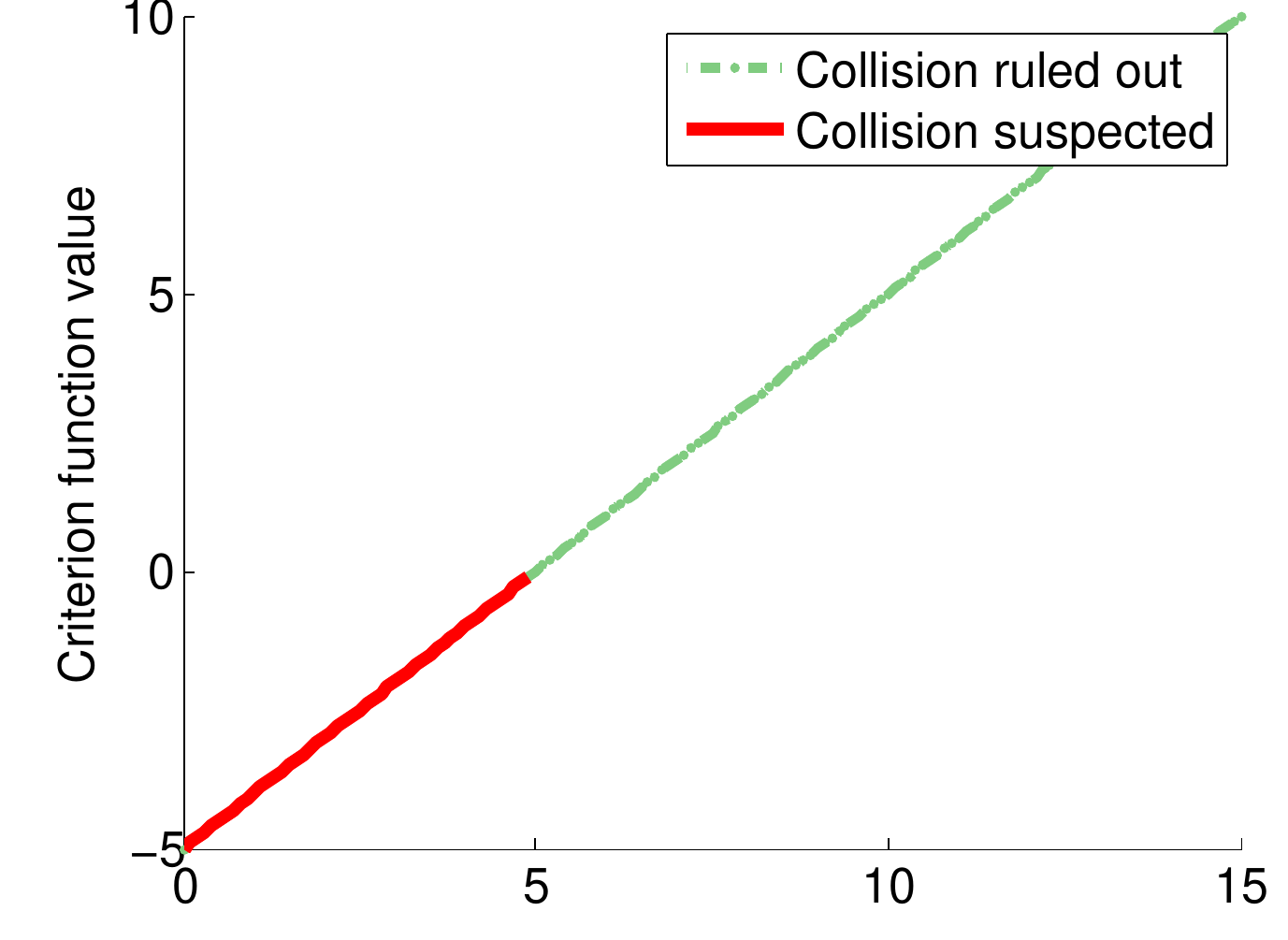}
        \end{subfigure}
        \begin{subfigure}
                \centering
                \includegraphics[width = 3.9cm, clip, trim = 3cm 9.5cm 4cm 10cm]
								{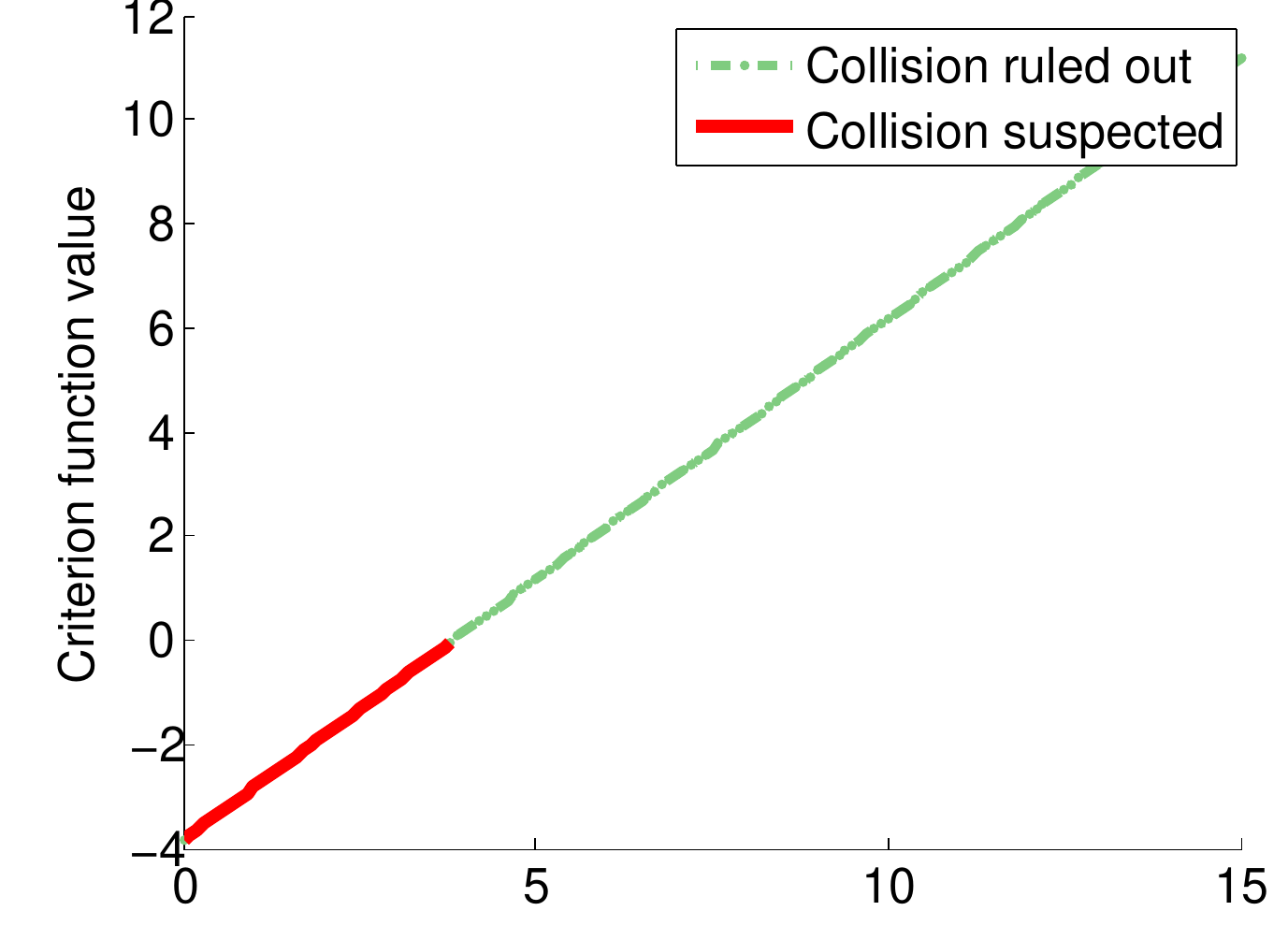}
        \end{subfigure}
       \caption{Criterion function values (as per Eq. \ref{eq:critfctChebyshev_1dim_infiniteradii_better}) as a function of $\norm{\expect{\state^\agii} - \expect{\state^\agi}}_\infty$  and with $\delta =0.05$, $\Lambda^{\agi,\agii} =1$. Left: variances $C^\agi = C^\agii = \text{diag}(.00001,.00001)$. Centre: variances $C^\agi = C^\agii = \text{diag}(.1,.1)$. Right: variances $C^\agi = C^\agii = \text{diag}(.1,.1)$ and with improved criterion function (as per Thm. \ref{def:collcritfct2d}).
	} 
       \label{fig:whittlecritfct}
\end{figure*}

Our distribution-independent collision criterion functions have the virtue that they work for all distributions -- not only the omnipresent Gaussian. Unfortunately, distribution-independence is gained at the price of conservativeness ( ref. to Fig. \ref{fig:whittlecritfct}). In our experiments in Sec. \ref{sec:sims}, the collision criterion function as per Thm. \ref{def:collcritfct2d} is utilized as an integral component of our collision avoidance mechanisms. The results suggest that the conservativeness of our detection module does not entail prohibitively high-false-alarm rates for the distribution-independent approach to be considered impractical. That said, whenever distributional knowledge can be converted into a criterion function. One could then use our derivations as a template to generate refined criterion functions using Eq. \ref{eq:critfctgeneric} with adjusted radii $r_i$,$r_j$, reflecting the distribution at hand.

\section{Collision Avoidance} 

In this section we outline the core ideas of our proposed approach to multi-agent collision avoidance.
After specifying the agent's dynamics and formalizing the notion of a single-agent plan, we define the multi-agent planning task. Then we describe how conflicts, picked-up by our collision prediction method, can be resolved. In Sec. \ref{sec:coord} we describe the two coordination approaches we consider utilizing to generate conflict-free plans. 

\textbf{I) Model (example).} We assume the system contains a set  $\agset$ of agents indexed by $ \agi \in \{1,...,| \agset\ | \}$. Each agent \agi's associated plant has a probabilistic state trajectory following stochastic controlled $D$-dimensional state dynamics (we consider the case $D=2$) in the continuous interval of (future) time $I=(t_0,t_f]$. We desire to ask agents to adjust their policies to avoid collisions. Each policy gives rise to a stochastic belief over the trajectory resulting from executing the policy.
For our method to work, all we require is that the trajectory's mean function $m:I \to \Real^D$ and covariance matrix function $\Sigma : I \to \Real^{D \times D}$ are evaluable for all times $t \in I$. 

A prominent class for which closed-form moments can be easily derived are linear stochastic differential equations (\textit{SDE}s). For instance, we consider the SDE 
\begin{equation}
\label{eq:linSDEcontrolledplant1}
d\state^\agi(t) = K \bigl(\xi^\agi(t) - \state^\agi(t)\bigr) dt + B \, dW 
\end{equation} where $K, B \in \Real^{D \times D}$ are matrices $\state^\agi: I \to \Real^D$ is the state trajectory and $W$ is a vector-valued Wiener process. Here, $u(\state^\agi; \xi^\agi) :=K( \xi^\agi - \state^\agi )$ could be interpreted as the control policy of a linear feedback-controller parametrised by $\xi^\agi$. It regulates the state to track a desired trajectory $\xi^\agi(t)= \zeta_0^\agi  \chi_{\{0\}} (t)+\sum_{i=1}^{H^\agi} \zeta_i^\agi  \chi_{\tau_i^\agi} (t)$ where $\chi_{\tau_i}: \Real \to \{0,1\}$ denotes the indicator function of the half-open interval $\tau_{i}^\agi = (t_{i-1}^\agi, t_{i}^\agi] \subset [0,T^\agi]$ and each $\zeta^\agi_i \in \Real^D$
is a \textit{setpoint}. If $K$ is positive definite the agent's state trajectory is determined by setpoint sequence $p^\agi = (t^a_i,\zeta^\agi_i)_{i=0}^{H^\agi}$ (aside from the random disturbances) which we will refer to as the agent's \emph{plan}.
For example, plan $p^\agi:= \bigl( (t_0,\state_0^\agi ), (t_f,\state_f^\agi ) \bigr)$ could be used to regulate agent $\agi$'s \textit{start state} $\state_0^\agi$ to a given \emph{goal state} $\state_f^\agi$ between times $t_0$ and $t_f$. For simplicity, we assume the agents are always initialized with plans of this form before coordination commences.

One may interpret a setpoint as some way to alter the stochastic trajectory. Below, we will determine setpoints that modify a stochastic trajectory to reduce collision probability while maintaining low expected cost. From the vantage point of policy search, $\xi^\agi$ is agent $\agi$'s policy parameter that has to be adjusted to avoid collisions.

\textbf{II) Task.} Each agent $\agi$ desires to find a sequence of setpoints $(p^\agi)$ such that (i) it moves from its start state $x_0^\agi$ to its goal state $x_f^\agi$ along a low-cost trajectory and (ii) such that along the trajectory its plant (with diameter $\Delta$) does not collide with any other agents' plant in state space with at least a given probability $1-\delta \in (0,1)$.

\textbf{III) Collision resolution}. An agent seeks to avoid collisions by adding new setpoints to its plan until the collision probability of 
the resulting state trajectory drops below threshold $\delta$. 
For choosing these new setpoints we consider two methods \textbf{WAIT} and \textbf{FREE}.
In the first method the agents insert a time-setpoint pair $(t,x_0^\agi)$ into the previous plan $p^\agi$. 
Since this aims to cause the agent to wait at its start location $x^\agi_0$ we will call the method {\scshape WAIT}. It is possible that 
multiple such insertions are necessary until collisions are avoided. Of course, if a higher-priority agent decides to traverse through $x_0^\agi$, 
this method is too rigid to resolve a conflict.
In the second method the agent optimizes for the time and location of the new setpoint. Let $p^\agi_{\uparrow(t,s)} $ be the 
plan updated by insertion of time-setpoint pair $(t,s) \in I \times \Real^D$.   We propose to choose the candidate setpoint $(t,s)$ that 
minimizes a function being a weighted sum of the expected cost entailed by executing updated plan $p^\agi_{\uparrow(t,s)}$ and a hinge-loss collision penalty 
$c_{coll}^\agi(p^\agi_{\uparrow(t,s)}) :=   \lambda  \,\max\{0, -\min_t \Gamma^\agi(t)\} $. Here, $\Gamma^\agi$ 
is computed based on the assumption we were to execute $p^\agi_{\uparrow(t,s)} $ and $\lambda >>0$ determines the extent to which collisions are penalized. Since the new setpoint can be chosen freely in time and 
state-space we refer to the method as {\scshape FREE}.

\subsection{Coordination} \label{sec:coord}
We will now consider how to integrate our collision detection and avoidance methods into a coordination framework that determines who needs to avoid whom and at what stage of the coordination process. Such decisions are known to significantly impact the \textit{social cost} (i.e. the sum of all agents' individual costs) of the agent collective.\\

\textbf{Fixed-priorities (FP).}
As a baseline method for coordination we consider a basic fixed-priority method (e.g. \cite{erdmann_movingobj:87,Bennewitz01Exploiting}).
Here, each agent has a unique ranking (or priority) according to its index $\agi$ (i.e. agent 1 has highest priority, agent $|\agset|$ lowest). When all higher-ranking agents are done planning, agent $\agi$ is informed of their planned trajectories which it has to avoid with a probability greater than $	1-\delta$. This can be done by repeatedly invoking for collision detection and resolution methods described above until no further collision with higher-ranking agents are found. 

\textbf{Lazy Auction Protocol (AUC).}
While the FP method is simple and fast the rigidity of the fixed ranking can lead to sub-optimal social cost and coordination success. Furthermore, its sequential nature does not take advantage of possible parallelization a distributed method could. To alleviate this we propose to revert the ranking flexibly on a case-by-case basis. In particular, the agents are allowed to compete for the right to gain passage (e.g. across a region where a collision was detected) by submitting bids in the course of an auction.  The structure of the approach is outlined in Alg. \ref{alg:lazyauctions}.

\IncMargin{-1em}
\begin{algorithm}
\begin{small}
\SetKwData{Left}{left}\SetKwData{This}{this}\SetKwData{Up}{up} \SetKwData{Constraints}{Constraints} 
 \SetKwData{Collisions}{Collisions} \SetKwData{flag}{flag} \SetKwData{C}{$\mathcal C$} \SetKwData{winner}{winner} \SetKwData{tcoll}{$t_{\text{coll}}$}
\SetKwFunction{Union}{Union}\SetKwFunction{FindCompress}{FindCompress} \SetKwFunction{Resolve}{Resolve}
\SetKwFunction{Broadcast}{Broadcast} \SetKwFunction{Planner}{Planner} \SetKwFunction{Auction}{Auction}
\SetKwFunction{Receive}{Receive} \SetKwFunction{Avoid}{Avoid}
\SetKwFunction{DetectCollisions}{CollDetect}
\SetKwInOut{Input}{input}\SetKwInOut{Output}{output}
\Input{Agents $\agi \in \agset$, cost functions $c^\agi$, dynamics, initial start and goal states, initial plans $p^1,...,p^{|\agset|}$ .}
\Output{collision-free plans $p^1,...,p^{|\agset|}$.}
\BlankLine
\Repeat{$\forall \agi \in \agset: \flag^\agi =0 $}
{
\For{$\agi \in \agset$}{[
\flag$^\agi,\C^\agi,\tcoll ]\leftarrow$ \DetectCollisions$^\agi (\agi, \agset-\{\agi\})$\\

\If{$\flag^\agi =1$}
{
$\winner \leftarrow \Auction(\C^\agi \cup \{\agi\},\tcoll )$\\
\ForEach{$\agii \in (\C^\agi \cup \{\agi\})-\{\winner\} $}{
$p^\agii \leftarrow \Avoid^\agii((\C^\agi \cup \{\agi\})-\{\agii\},\tcoll )$\\
$\Broadcast^\agii$ ($p^\agii$)
}
}

}

}
\caption{Lazy auction coordination method (AUC) (written in a sequentialized form). Collisions are resolved by choosing new setpoints to enforce collision avoidance. 
$\mathcal C^\agi$: set of agents detected to be in conflict with agent $\agi$. flag$^\agi$: collision detection flag (=0, iff no collision detected). $t_{\text{coll}}$: earliest time where a collision was detected. Avoid: collision resolution method updating the plan by a single new setpoint according to WAIT or FREE.}

\label{alg:lazyauctions}
\end{small}
\end{algorithm}

Assume an agent $\agi$ detects a collision at a particular time step $t_{\text{coll}}$ and invites the set of agents $\mathcal C^\agi = \{ \agii | \gamma^{\agi,\agii} (t_{\text{coll}}) \leq 0\}$ to join an auction to decide who needs to avoid whom. In particular, the auction determines a winner who is not required to alter his plan. The losing agents need to insert a new setpoint into their respective plans designed to avoid all other agents in $\mathcal C^\agi$ while keeping the plan cost function low. 

The idea is to design the auction rules as a heuristic method to minimize the social cost of the ensuing solution. To this end, we define the bids such that their magnitude is proportional to a heuristic magnitude of the expected regret for losing and not gaining passage. That is agent $\agi$ submits a bid $b^\agi = \mathfrak l^\agi - \mathfrak s^\agi$. Magnitude $\mathfrak l^\agi$ is defined as $\agi$'s anticipated cost $c_{plan}^\agi(p^\agi_{\uparrow(t,s)})$ for the event that the agent will not secure ``the right of passage'' and has to create a new setpoint $(t,s)$ (according to (III)) tailored to avoid all other agents engaged in the current auction. On the other hand, $\mathfrak s^\agi := c_{plan}^\agi(p^\agi)$ is the cost of the unchanged plan $p^\agi$.
If there is a tie among multiple agents the agent with the lowest index among the highest bidders wins.

Acknowledging that $\mathfrak s^{winner} + \sum_{\agi\neq winner} \mathfrak l^\agi $
is an estimated social cost (based on current beliefs of trajectories) 
after the auction, we see that the winner determination
rule greedily attempts to minimize social cost: $b^{winner}
\geq b^{\agii}  \,  \Leftrightarrow \forall \agii: \mathfrak s^\agii  + \sum_{\agi\neq
\agii} \mathfrak l^{ \agi} \geq \mathfrak s^{winner} + \sum_{\agi\neq winner} \mathfrak l^{\agi}$.


\begin{figure*}[t!] 
\vspace{-1em}
\begin{tabular}{lll}
 \includegraphics[width = 3.5cm, clip, trim = 3.5cm 9.5cm 4.5cm 9cm]{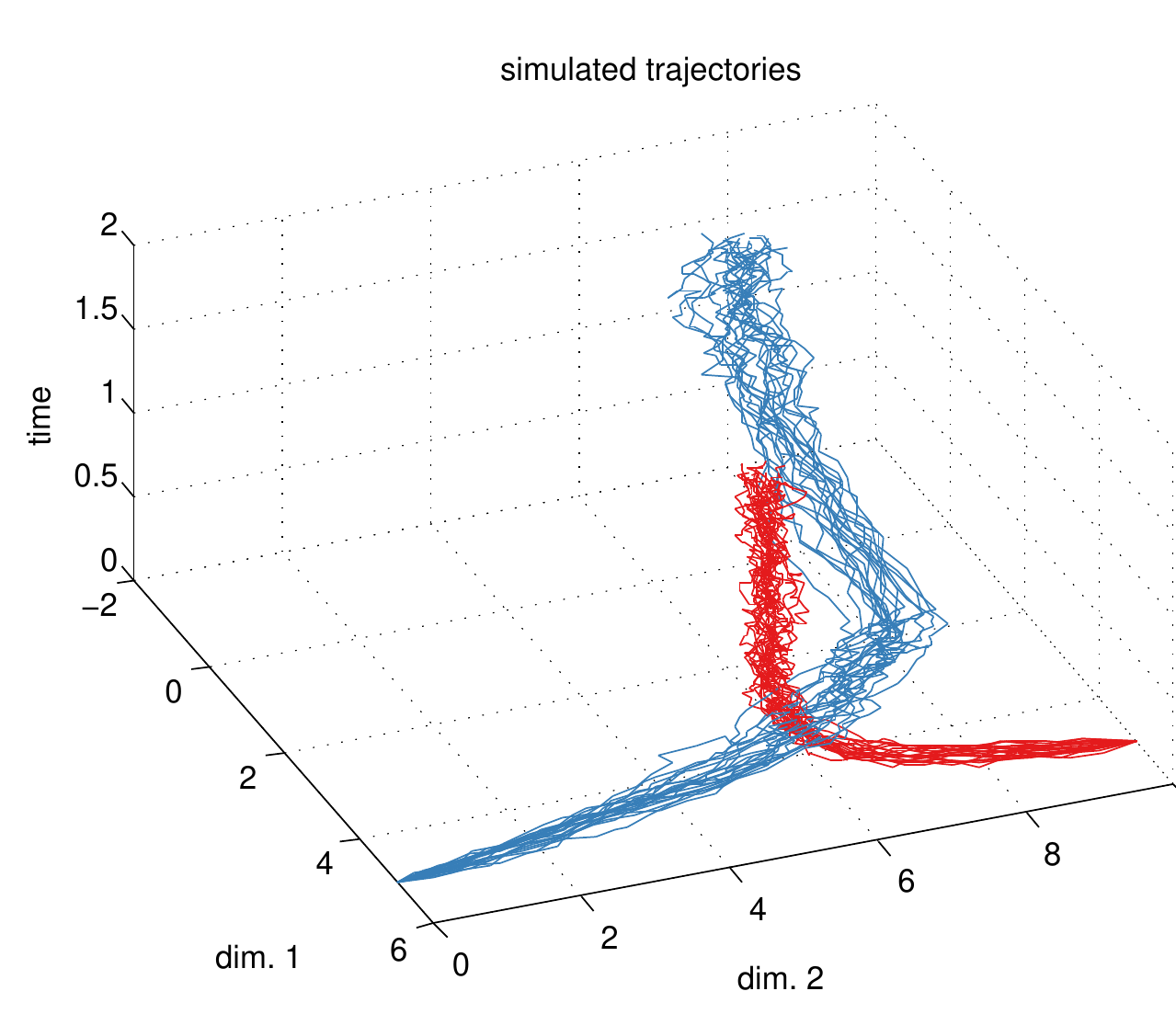}
    & \includegraphics[width = 3.5cm, clip, trim = 3.5cm 9.5cm 4.5cm 9cm]{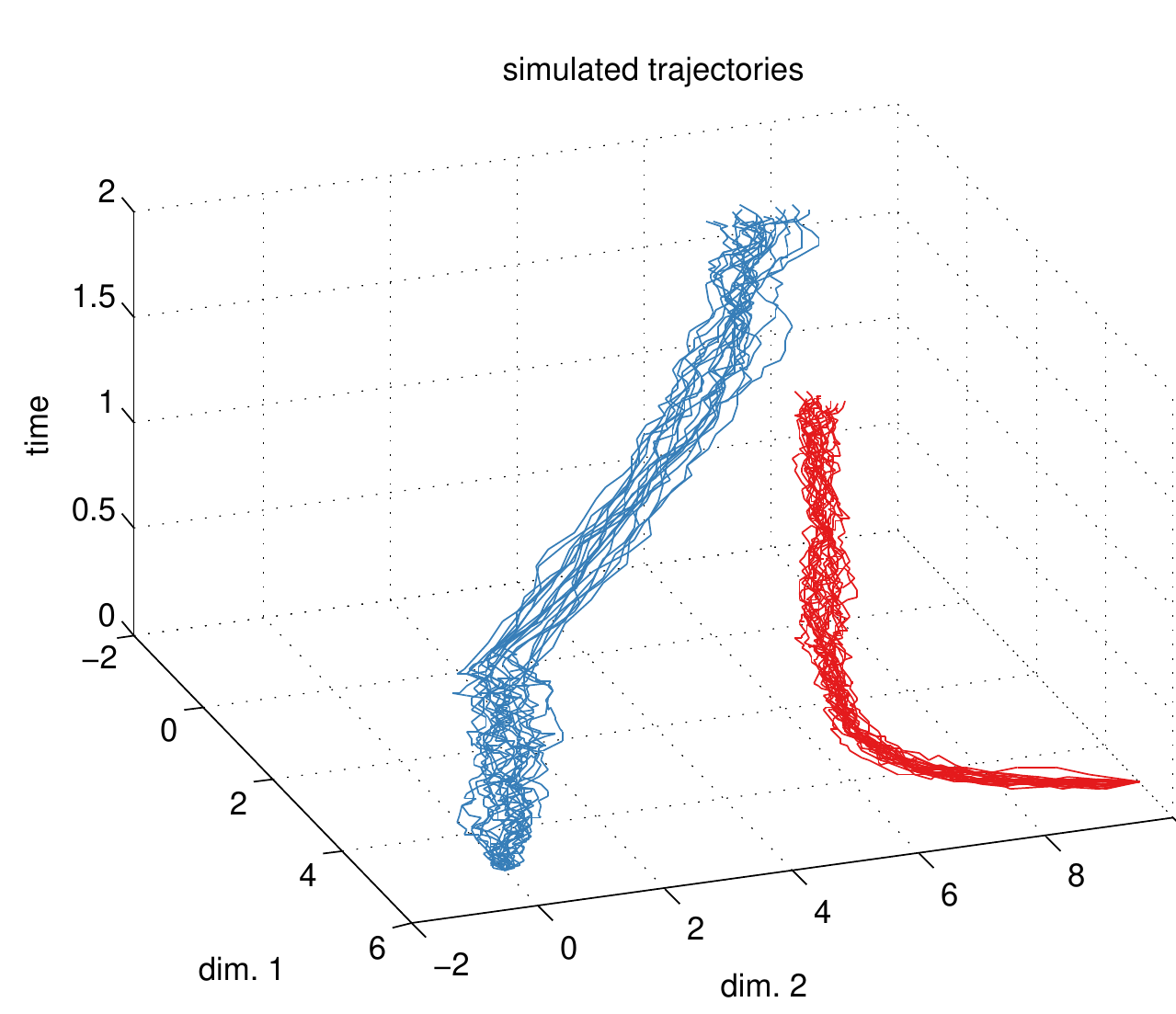} 
&\includegraphics[width = 3.5cm, clip, trim = 3.5cm 9.5cm 6cm 9cm]{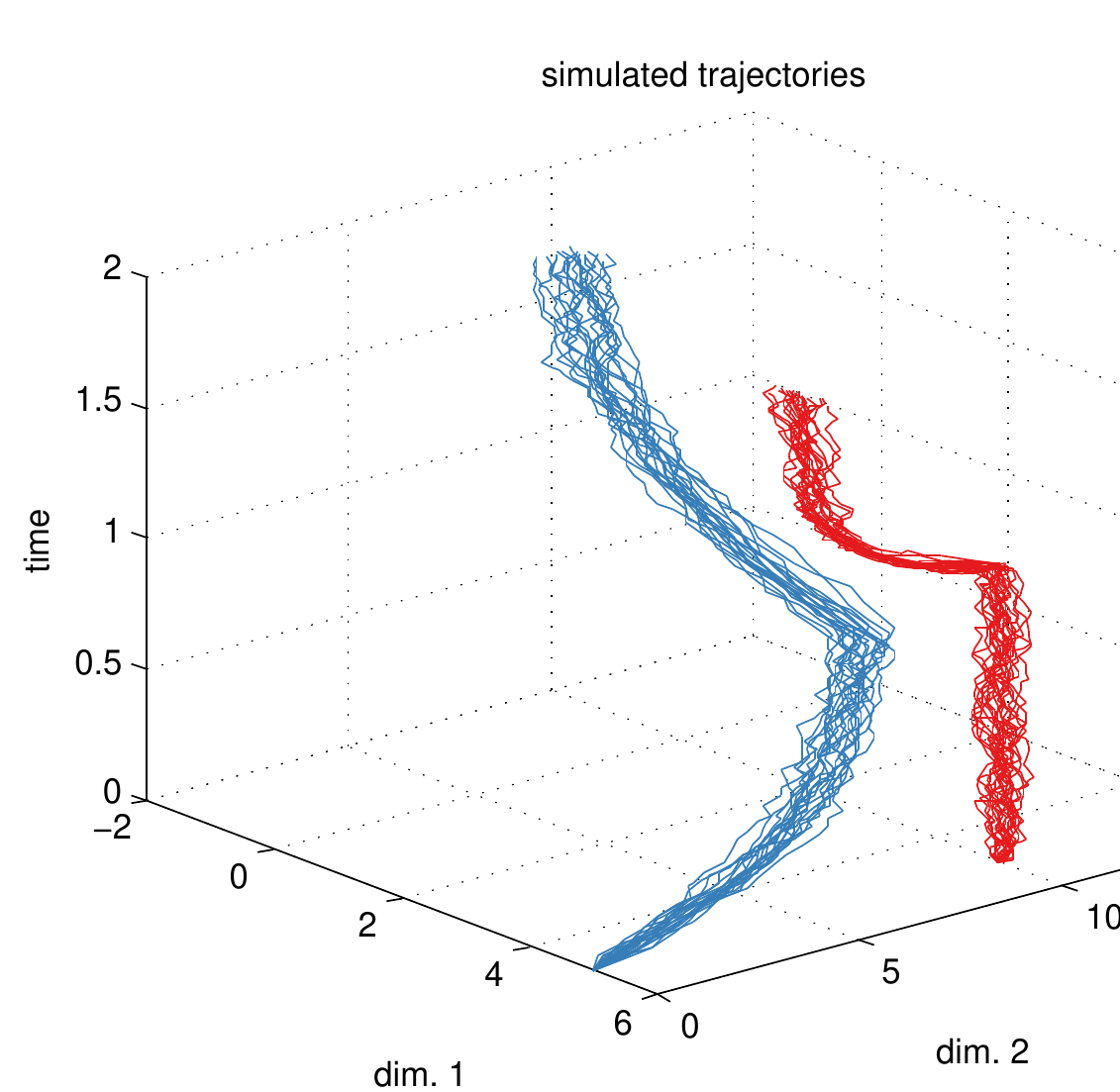} \\
\end{tabular}
\caption{EXP1. Draws from uncoordinated agents' plans (left), after coordination and collision resolution with methods {\scshape FP-WAIT} (centre) and {\scshape AUC-WAIT} (right).}
\label{Tab:EXP1corridor}
\end{figure*} 


 \begin{table*}[bht]   
 \centering
 \begin{small} 
 \begin{tabular}{@{}lcccccc@{}}
 \toprule
 & \multicolumn{3}{c}{Experiment $1$} & 
 \multicolumn{3}{c}{Experiment $2$} \\
 \cmidrule(r){2-4} \cmidrule(l){5-7} 
$Quantity$ & NONE &  AUC-WAIT& FP-WAIT   &NONE& AUC-FREE &FP-FREE  \\
 \midrule
  A		& 78 	& 0 & 0 	& 51 	& 0 & 0 \\
{B} & 13.15 	& 13.57 & 12.57 	& 14.94 	& 16.22 & 18.13\\
{C} & 0.05 	& 0.04 & 25.8	& 0.05 	& 0.05 & 0.05 \\
{D}& 0	& 6 & 3		& 0 	& 4	& 4\\
 \bottomrule
 \end{tabular}
 \end{small}
 \caption{Quantities estimated based on 100 draws from SDEs simulating executions of the different plans in EXP1 and EXP2.
\textbf{A}: estimated collision probability $[\%]$; \textbf{B}: averaged path length away from goal; \textbf{C}: averaged sqr. dist. of final state to goal; \textbf{D}: number of collision resolution rounds.
 Notice, our collision avoidance methods succeed in preventing collisions. In EXP1 the FP-WAIT method failed to reach its first goal in time which is reflected in the \emph{sqr. distance to goal} measure. Note the discrepancies in avg. path length are relatively low due to convexity effects and the contribution of state noise to the path lengths.}
 \label{Tab:data}
 \end{table*}


\begin{figure*}[thb!] 
\vspace{-1em}
\begin{tabular}{lll}
 \includegraphics[width = 3.9cm, clip, trim = 3.5cm 9cm 4.5cm 9cm]{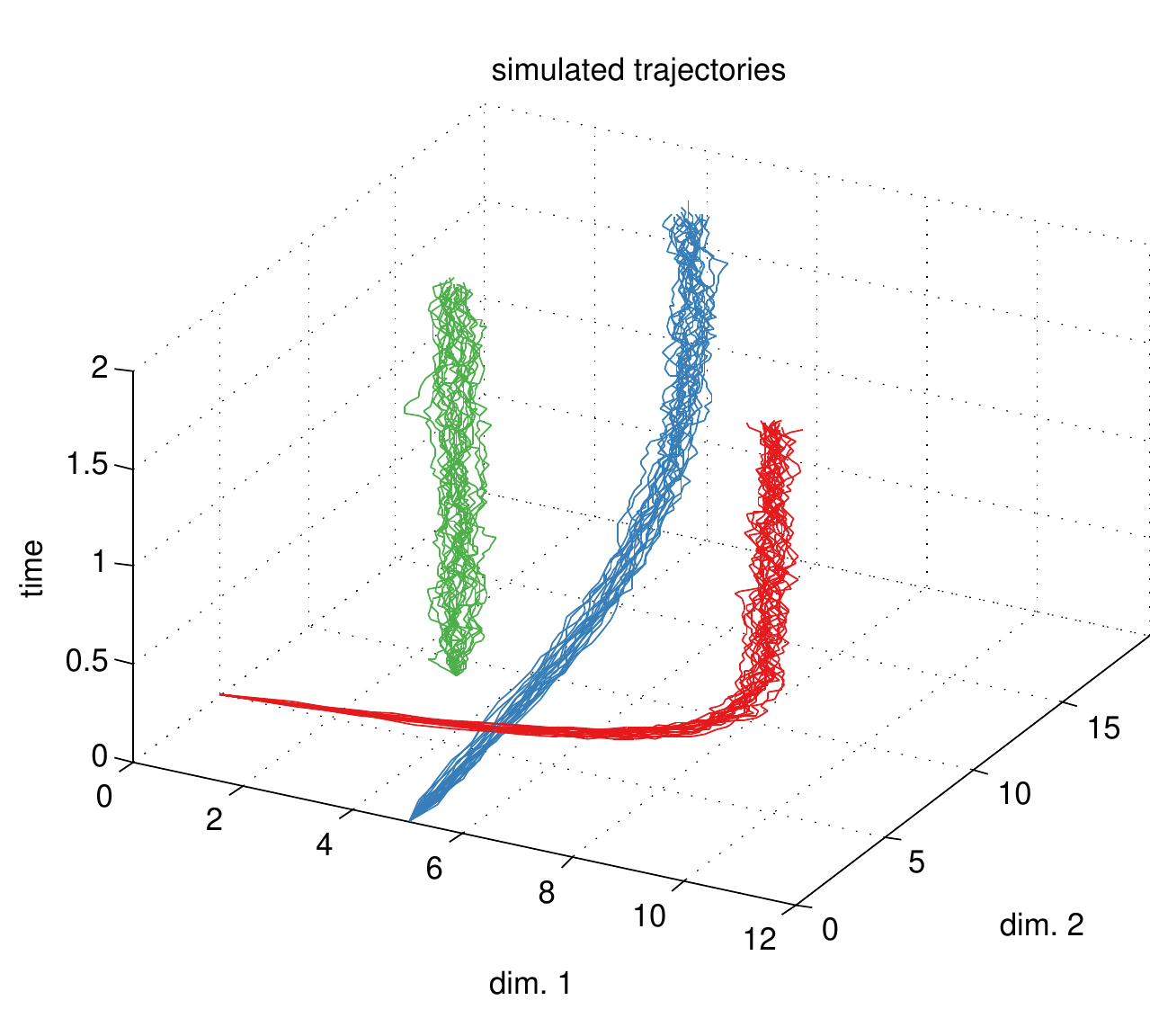}
    & \includegraphics[width = 3.9cm, clip, trim = 3.5cm 9cm 4.5cm 9cm]{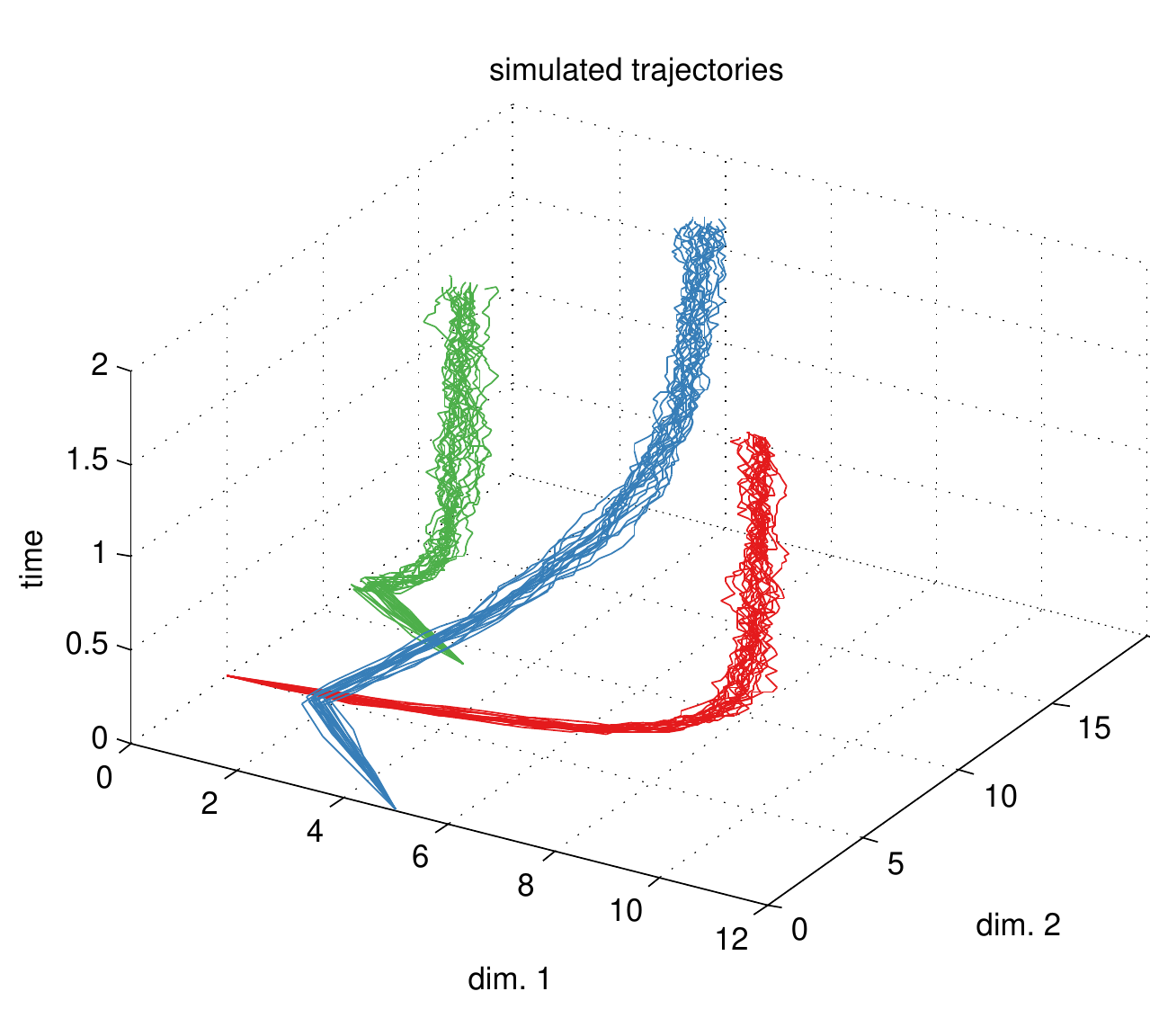} 
&\includegraphics[width = 3.9cm, clip, trim = 3.5cm 9cm 4.5cm 9cm]{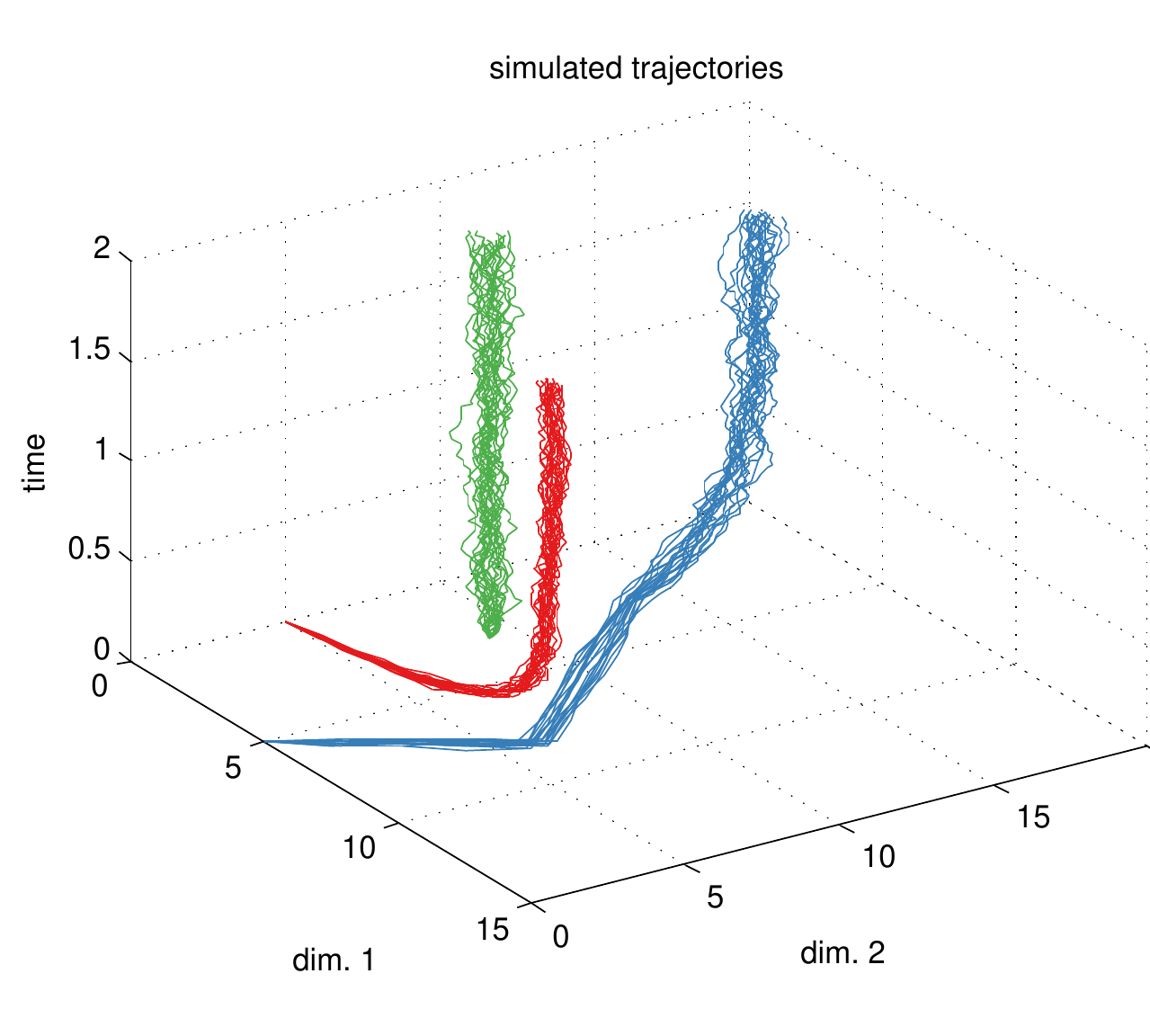} \\    
\end{tabular}
\caption{EXP2. Draws from uncoordinated agents' plans (left), after coordination and collision resolution with methods {\scshape FP-FREE} (centre) and {\scshape AUC-FREE} (right). }
\label{Tab:EXP2}
\end{figure*} 

%

\section{Simulations}\label{sec:sims}  As a first test, we simulated three simple multi-agent scenarios, \textit{EXP1}, \textit{EXP2} and \textit{EXP3}.
Each agent's dynamics were an instantiation of an SDE of the form of Eq. \ref{eq:linSDEcontrolledplant1}.
We set $\delta$ to achieve collision avoidance with certainty greater than $95 \%$. Collision prediction was based on the improved criterion function as per Thm. \ref{def:collcritfct2d}. During collision resolution with the FREE method
 each agent $\agi$ assessed a candidate 
plan $p^\agi$ according to cost function 
$c_{plan}^\agi(p^\agi) = w_1 \,  c_{traj}^\agi(p^\agi) + w_2 \,  c_{miss}^\agi(p^\agi) + w_3 \, c_{coll}^\agi(p^\agi) $.
Here $c_{traj}^\agi$ is a heuristic to penalize expected control energy or path length; in the second summand, 
$c_{miss}^\agi (p^\agi)= \norm{\state^\agi(t_f) - \state^\agi_f}^2$ penalizes expected deviation from the goal state; the third term
$c_{coll}^\agi(p^\agi)$ penalizes collisions (cf. III ). The weights are design parameters which we set to 
$w_1 = 10, w_2 = 10^3$ and $w_3 = 10^6$, emphasizing avoidance of mission failure and collisions. Note, if our method was to be deployed in 
a receding horizon fashion, the parameters could also be adapted online using standard learning techniques such as no-regret algorithms 
\cite{littlestone89weighted,Srinivas2010}.

\begin{figure*}[thb!]
\centering
\vspace{-1em}
\begin{tabular}{cc}
 \includegraphics[scale =.4, clip, trim = 3cm 9cm 3cm 9cm]{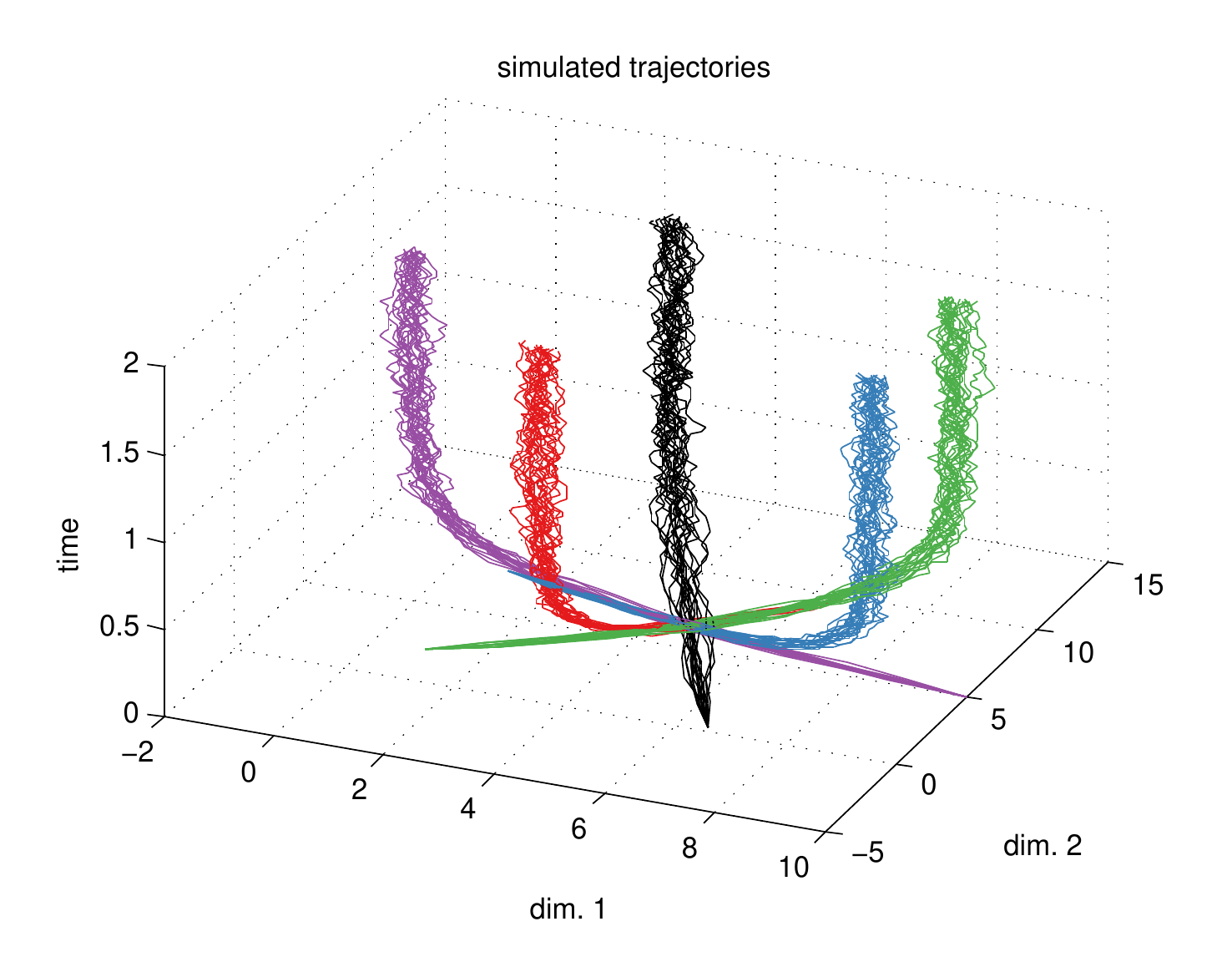}
    & \includegraphics[scale =.4, clip, trim = 3cm 9cm 3cm 9cm]{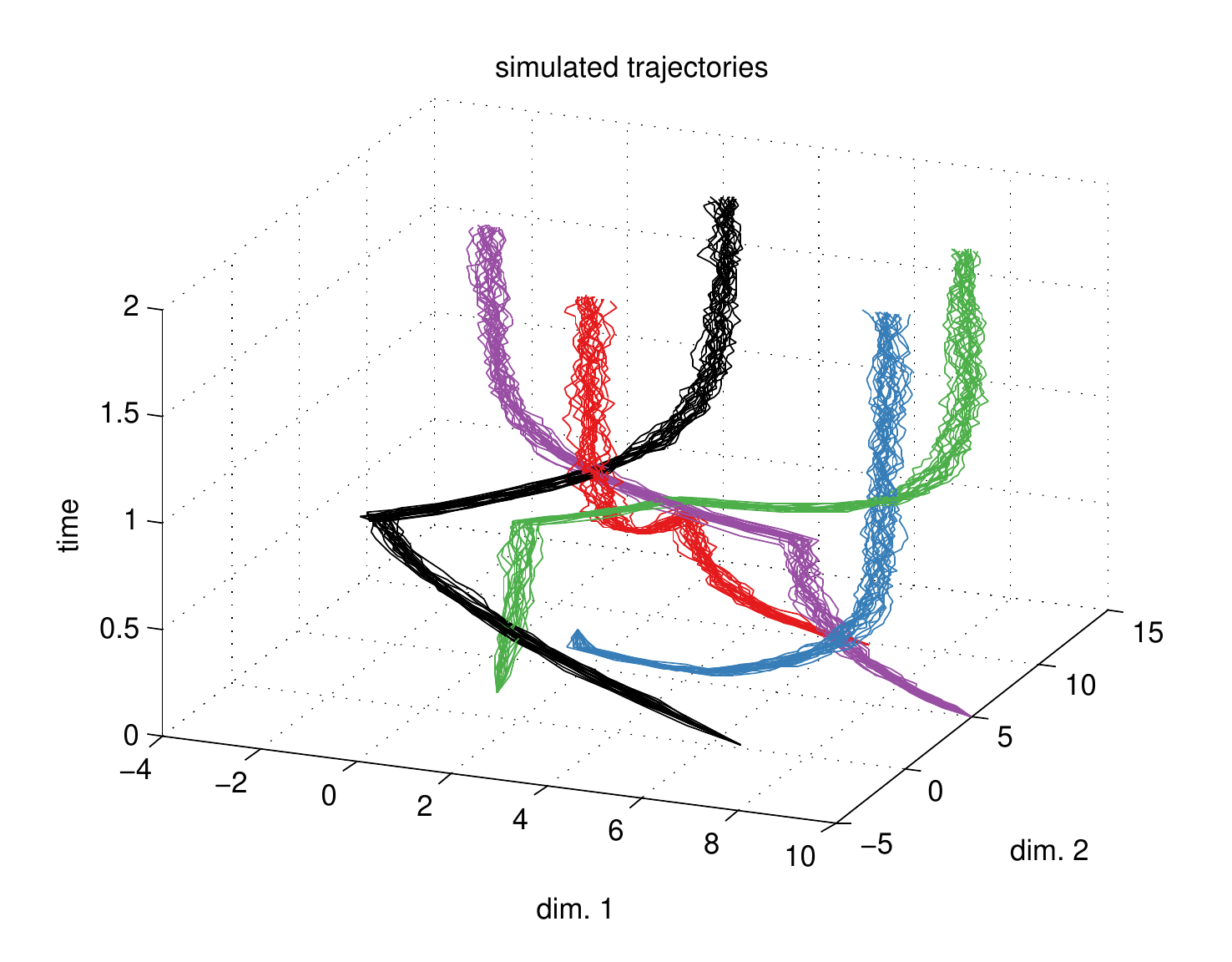} 
\end{tabular}
\caption{Ex. of EXP3 with 5 agents. Draws from uncoordinated agents' plans (left), after coordination and collision resolution with methods 
{\scshape AUC-FREE} (right). } \label{Tab:EXP35agents}
\end{figure*}


\begin{figure*}[thb!]
\vspace{-1em}
\begin{tabular}{ccc}
    \includegraphics[width = 4.6cm, clip, trim = 6cm 11.5cm 8cm 13cm]{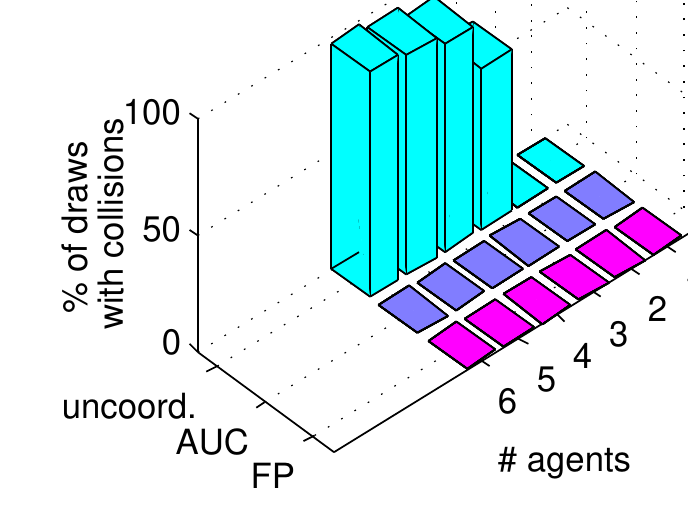} 
&\includegraphics[width = 4.6cm, clip, trim = 6cm 11.5cm 9cm 14cm]{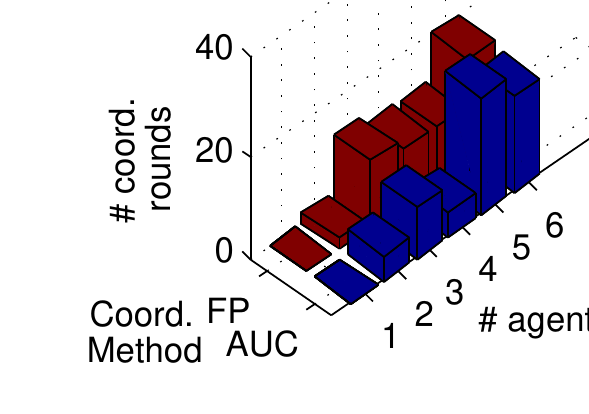} 
\end{tabular}
\caption{Recorded results for EXP3 with 1 to 6 agents. Note, all collisions were successfully avoided.} \label{Tab:exp3barplots}
\end{figure*} 

\textbf{EXP1.} Collision resolution was done with the WAIT method to update plans. Draws from the SDEs with the initial plans of the agents 
are depicted in Fig. \ref{Tab:EXP1corridor} (left). The curves represent 20 noisy trajectories of agents 1 (red) and 2 (blue). 
Each curve is a draw from the stochastic differential dynamics obtained by simulating the execution of the given initial plan. 
The trajectories were simulated with the Euler-Maruyama method for a time interval of $I = [0 s ,2 s]$. 
The spread of the families of curves is due to the random disturbances each agent's  controller had to compensate for during runtime.

Agent 1 desired to control the state from start state $\state_0^1 =(5,10)$ to goal $\state_f^1 =(5,5)$. 
Agent 2 desired to move from start state $\state_0^2 =(5,0)$ via intermediate goal $\state_{f_1}^2=(5,7)$ (at 1s) to final goal 
state $\state_{f_2}^2=(0,7)$. While the agents meet their goals under the initial plans, their execution would imply a high probability of 
colliding around state $(5,6)$ (cf. Fig. \ref{Tab:EXP1corridor} (left), Tab. \ref{Tab:data}). Coordination with fixed priorities 
(1 (red) $>$ 2 (blue)) yields conflict-free plans (Fig. \ref{Tab:EXP1corridor} (centre)). However, agent 2 is forced to wait too long at its start 
location to be able to reach intermediate waypoint $\state_{f,1}^2$ in time and therefore, decides to move directly to its second goal. 
This could spawn high social cost due to missing one of the designated goals (Tab. \ref{Tab:data} ). By contrast, the auction method is flexible enough 
to reverse the ranking at the detected collision point causing agent 1 to wait instead of 2 (Fig. \ref{Tab:EXP1corridor} (right)). 
Thereby, agent 2 is able to reach both of its goal states in time. This success is reflected by low social cost (see Tab. \ref{Tab:data}). 
\begin{figure}[ht] 
\centering
	 \includegraphics[scale=.4, clip, trim = 3cm 8.7cm 3cm 9cm]{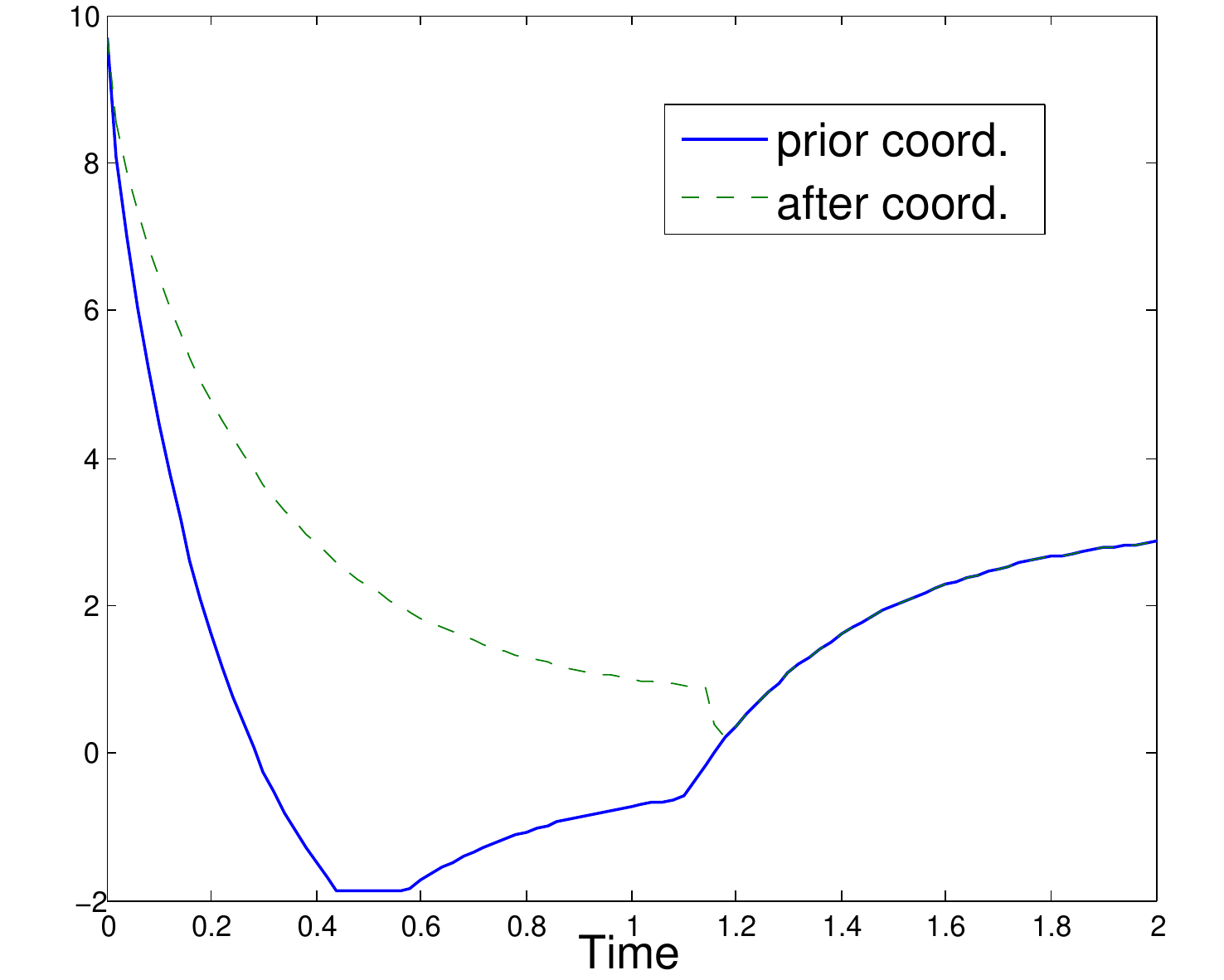}
	\caption{EXP1. Criterion functions for collision detection of agent 2 before and after coordination. The first graph accurately warns of a collision before coordination (as indicated by negative values), whereas the one corresponding to collision-free trajectories is in the positive-half space as desired.}
	\label{fig:CriterionFunctionsForCollisionDetectionOfAgent2BeforeAndAfterCoordination}
\end{figure}

\textbf{EXP2.} The setup was analogous to EXP1 but with three agents and different start and goal states as depicted in Fig. \ref{Tab:EXP2}. 
Furthermore, collisions were avoided with the FREE method with 10 random initializations of the local optimizer.  
Coordination of plans with fixed priorities (1 (red) $>$ 2 (blue) $>$ 3 (green) ) caused 2 to avoid agent 1 by moving to the left. 
Consequently, 3 now had to temporarily leave its start and goal state to get out of the way (see Fig. \ref{Tab:EXP2} (centre) ). 
With two agents moving to avoid collisions social cost was relatively high (see Tab. \ref{Tab:data}). During coordination with the auction-based 
method agent 2 first chose to avoid agent 1 (as in the FP method). However, losing the auction to agent 3 at a later stage of coordination, 
agent 2 decided to finally circumvent 1 by arcing to the right instead of to the left. This allowed 3 to stay in place (see Tab. \ref{Tab:data}).

%
%
%
%
%
%

\textbf{EXP3.} Next, we conducted a sequence of experiments for varying numbers of agents ranging from $|\agset|=1,..,7$. 
In each experiment all agents' start locations were placed on a circle. Their respective goals 
were placed on the opposite ends of the circle. The eigenvalues of the feedback gain matrices of each agent were drawn at random from a uniform distribution on the range [2,7].
An example situation for an experiment with 5 agents is depicted in Fig. \ref{Tab:EXP35agents}.
Collision avoidance was achieved.

Note, that despite this setting being close to worst case (i.e. almost all agents try to traverse a common, narrow corridor) the coordination overhead is moderate (see Fig. \ref{Tab:exp3barplots}, right). Also, all collisions were successfully avoided (see Fig. \ref{Tab:exp3barplots}, left).

\section{ Conclusions}
This work considered multi-agent planning under stochastic uncertainty and non-convex chance-constraints
for collision avoidance. 
In contrast to pre-existing work, we did not need to rely on prior space or time-discretisation. This was achieved by deriving criterion functions with the property that the collision probability is guaranteed to be below a freely definable threshold $\delta \in (0,1)$ if the criterion function attains no negative values. Thereby, stochastic collision detection is reduced to deciding whether such negative values exist. For Lipschitz criterion functions, we provided an algorithm for making this decision rapidly. 
We described a general procedure for deriving criterion functions and presented two such functions based on Chebyshev-type bounds.  
The advantage of using Chebyshev inequalities is their independence of the underlying distribution. Therefore, our approach is applicable to any stochastic state noise model for which the first two moments can be computed at arbitrary time steps. In particular, this would apply to models with state-dependent uncertainty and non-convex chance constraints which, to the best to our knowledge, have not been successfully approached in the multi-agent control literature. 
Nonetheless, future work could build on our results and derive less conservative criterion functions by using more problem-specific probabilistic inequalities. For instance, in simple cases such as additive Gaussian noise, tighter bounds can be given \cite{Blackmore2006} and used in Eq. \ref{eq:critfctgeneric}.

To enforce collision avoidance, our method modified the agent's plans until no collisions could be detected. To coordinate the detection and avoidance efforts of the agents, we employed an auction-based as well as a fixed-priority method.\\

Our experiments are a first indication that our approach can succeed in finding collision-free plans with high-certainty with the number of required coordination rounds scaling mildly in the number of agents. While in its present form, the coordination mechanism does not come with a termination guarantee, in none of our simulations have we encountered an infinite loop. For graph routing, \cite{ArmsTR:2011} provides a termination guarantee of the lazy auction approach under mild assumptions. Current work considers if their analysis can be extended to our continuous setting. Moreover, if required, our approach can be combined with a simple stopping criterion that terminates the coordination attempt when a computational budget is expended or an infinite loop is detected. 

The computation time within each coordination round depends heavily on the time required for finding a new setpoint and for collision detection. This involves minimizing $ (t,s) \mapsto c_{plan}^\agi(p^\agi_{\uparrow(t,s)})$ and $c_{coll}^\agi$, respectively.  The worst-case complexity depends on the choice of cost functions, their domains and the chosen optimizer. Fortunately, we can draw on a plethora of highly advanced global optimisation methods (eg \cite{Shubert:72,direct:93}) guaranteeing rapid optimization success. 
In terms of execution time, we can expect considerable alleviations from implementation in a compiled language. Furthermore, the collision detection and avoidance methods are based on global optimization and thus, would be highly amenable to parallel processing -- this could especially benefit the auction approach. 

While our exposition was focussed on the task of defining setpoints of feedback-controlled agents, the developed methods can be readily applied to other policy search settings, where the first two moments of the probabilistic beliefs over the trajectories (that would result from applying the found policies) can be computed.



\begin{small}
\bibliographystyle{plain}

\end{small}
\newpage
\appendix{Supplementary derivations.}
\section{Derivations of the covariance and mean of the feedback-controlled agents as 
deployed in the trajectory planning experiments }
We will now solve the mean and covariance for the dynamics given by the Ito-SDE describing the dynamics of the feedback controlled agents
considered in the experiments of this paper. Our aim is to obtain closed-form solutions avoiding the need to approximate any integrals.
This ameliorates the computational burden of our method since mean and covariance matrix functions need to be evaluated frequently in the course of 
collision detection and resolution.

\begin{thm}
For all $t \in [t_0,T]$ let $\xi(t),\state(t) \in \Real^D$, $A = \text{diag}(a_1,...,a_D), K = \text{diag}(k_1,...,k_D)$, $B = \text{diag}(\sqrt \nu_1,...,\sqrt \nu_D)$ and let $x(t_0)$ be a normally distributed random vector. The solution to the SDE
$\mathbf{d\state = (A \state - K (\state -\xi)) dt + B \, dW}$ is a Gaussian process with 
vector-valued mean function $\mu: [t_0,T] \to \Real^D$ and matrix-valued covariance function $C: [t_0,T]^2 \to \Real^{D\times D}$. Here the mean components are \[\mu_j(t) = e^{(k_j-a_j)(t_0-t)} \, \expect{\state_j(t_0) } + \int_{t_0}^t k_j\, e^{(k_j-a_j)(\tilde t-t)} \xi_j(\tilde t) d \tilde t \] and 
the covariance matrix function is $C(s,t) = \text{diag}(\text{cov}_{11}(s,t),...,\text{cov}_{DD}(s,t))$ where 
\begin{align*}
\text{cov}_{jj}(s,t) &= e^{(a_j-k_j) (t+s - 2 t_0)} (\expect{\state_j^2(t_0)} -  \expect{\state_j(t_0)}^2 )  \\
&+\frac{\nu_j}{2(k_j-a_j) }  [e^{(a_j-k_j) \abs{t-s}} - e^{(k_j-a_j)( 2 t_0 - (s+t))}].
\end{align*}

\end{thm}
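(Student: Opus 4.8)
The plan is to exploit the diagonal structure of $A$, $K$ and $B$ to decouple the $D$-dimensional SDE into $D$ independent scalar linear equations, solve each one explicitly by an integrating factor, and then read off the first two moments. First I would rewrite the drift as $A\state - K(\state-\xi) = (A-K)\state + K\xi$ with $A-K = \text{diag}(a_1-k_1,\ldots,a_D-k_D)$. Since $A-K$, $K$ and $B$ are diagonal and the vector Wiener process $W$ has independent components $W_1,\ldots,W_D$, the system separates coordinate-wise into the scalar Ornstein--Uhlenbeck-type equations
\[
d\state_j(t) = \bigl((a_j-k_j)\,\state_j(t) + k_j\,\xi_j(t)\bigr)\,dt + \sqrt{\nu_j}\,dW_j(t), \qquad j=1,\ldots,D.
\]

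Next, for each $j$ I set $\alpha_j := a_j - k_j$ and apply the integrating factor $e^{-\alpha_j(t-t_0)}$ (equivalently, the standard variation-of-constants formula for scalar linear SDEs with constant coefficients, where no extra Itô correction arises because the diffusion coefficient is constant). This gives the pathwise solution
\[
\state_j(t) = e^{\alpha_j(t-t_0)}\state_j(t_0) + k_j\!\int_{t_0}^t e^{\alpha_j(t-u)}\xi_j(u)\,du + \sqrt{\nu_j}\int_{t_0}^t e^{\alpha_j(t-u)}\,dW_j(u).
\]
Taking expectations and using that the Itô integral has zero mean yields $\mu_j(t)$; the stated exponents follow from the identities $\alpha_j(t-t_0) = (k_j-a_j)(t_0-t)$ and $\alpha_j(t-u) = (k_j-a_j)(u-t)$, reproducing the claimed mean formula verbatim.

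For the covariance I would centre the solution, writing $\state_j(t)-\mu_j(t) = e^{\alpha_j(t-t_0)}\bigl(\state_j(t_0)-\expect{\state_j(t_0)}\bigr) + \sqrt{\nu_j}\int_{t_0}^t e^{\alpha_j(t-u)}\,dW_j(u)$, and assume (as the diagonal form of $C$ requires) that the components of $\state(t_0)$ are uncorrelated and independent of $W$. Fixing $s\le t$ without loss of generality, the cross term between the initial condition and the stochastic integral vanishes, leaving an initial-data contribution $e^{\alpha_j(s+t-2t_0)}\,\var{\state_j(t_0)}$ and a noise contribution $\nu_j\,\mathbb E\!\bigl[\int_{t_0}^s e^{\alpha_j(s-u)}dW_j(u)\int_{t_0}^t e^{\alpha_j(t-u)}dW_j(u)\bigr]$. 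By the Itô isometry the latter collapses to the elementary integral $\nu_j\int_{t_0}^s e^{\alpha_j(s+t-2u)}\,du$, whose evaluation produces $\tfrac{\nu_j}{2(k_j-a_j)}\bigl[e^{\alpha_j|t-s|} - e^{\alpha_j(s+t-2t_0)}\bigr]$ after substituting $-2\alpha_j = 2(k_j-a_j)$ and writing $t-s = |t-s|$. Summing the two contributions gives $\text{cov}_{jj}(s,t)$ exactly as stated; the off-diagonal entries of $C(s,t)$ vanish because distinct coordinates are driven by independent noise and uncorrelated initial data, and the whole process is Gaussian because each $\state_j(t)$ is an affine functional of the Gaussian vector $\state(t_0)$ together with Itô integrals of deterministic integrands against Brownian motion.

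I expect the only genuine work to be the sign bookkeeping in the exponentials and the careful application of the Itô isometry to two integrals with different upper limits, which is precisely what forces the lower limit $\min(s,t)$ (hence the $|t-s|$) to appear. The decoupling, the zero-mean property of the stochastic integral, and the preservation of Gaussianity are all standard. The one modelling subtlety worth flagging is that the stated diagonal form of $C(s,t)$ presupposes that $\state(t_0)$ has diagonal covariance; off-diagonal initial covariances would otherwise propagate into nonzero $\text{cov}_{ij}(s,t)$ for $i\neq j$.
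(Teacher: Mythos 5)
Your proposal is correct and takes essentially the same route as the paper's own proof: coordinate-wise decoupling via the diagonal structure, variation of constants (your integrating factor $e^{-\alpha_j(t-t_0)}$ is precisely the paper's substitution $y := \state\, e^{q t}$ with $q = k_j - a_j$, handled there via It\^o's product rule), the zero-mean property of the It\^o integral for the mean, and the It\^o isometry together with independent increments for the covariance, with Gaussianity following from the affine dependence on the Gaussian initial state and the Wiener process. Your closing remark that the diagonal form of $C(s,t)$ tacitly requires uncorrelated components of $\state(t_0)$ (and independence from $W$) is a fair point that the paper leaves implicit in its reduction to $D$ independent scalar SDEs.
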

\begin{proof}
Let $q_j := k_j-a_j$.
Owing to the diagonal form of $A,K$ and $B$ and the independence of the output dimensions of the vector-valued Wiener process, the given SDE decomposes into a system of $D$ indpendent 1-dimensional SDEs 
$d\state_j = (- q_j \state_j + k_j \xi_j) dt + \sqrt \nu_j dW_j$ ($j=1,...,D$) which can be treated separately.

For ease of notation we omit the subscripts yielding an SDE of the form:
$d\state = (- q \state + k \xi) dt + \sqrt \nu \, dW$.
To solve each of these SDEs we introduce the substitution $y := \state e^{q t}$. 
With Ito's product rule we find 
\begin{align*}
dy &= \state \, de^{q t} + e^{q t} \, d\state + d\state \, d e^{q t}\\
&=  \xi  k e^{q t} \,dt + \sqrt \nu e^{q t} dW 
\end{align*}

 where the last equality follows from substitution of the SDE for $dx$ and utilization the formal rules $dt \, dW =0, (dt)^2 =0$.

This SDE is solved by 
$y(t) = y(t_0) + \int_{t_0}^t k \xi(\tilde t) e^{q \tilde t} d\tilde t + \int_{t_0}^t  e^{q \tilde t} \sqrt \nu dW(\tilde t)$ where the last integral is to be interpreted as an Ito integral. Re-substitution yields 
 
\[\state(t) = \state(t_0) e^{q (t_0-t)} + \int_{t_0}^t k\,e^{q (\tilde t-t)} \xi(\tilde t) d \tilde t +  \int_{t_0}^t  e^{q (\tilde t - t)} \sqrt \nu dW(\tilde t). \]
Random variables $\state(t)$ $(t \in [t_0,T])$ are given as affine transformations of a Wiener process. Since the Wiener process is Gaussian, the solution is clearly a Gaussian process as long as $\state(t_0)$ is a normally distributed random variable.

For notational convenience we define 
\begin{align*}
J_t := \int_{t_0}^t k\,e^{q (\tilde t-t)} \xi(\tilde t) \,d \tilde t,\\ 
\Omega_t:= \int_{t_0}^t  e^{q (\tilde t - t)} \sqrt \nu\, dW(\tilde t)  \\
\text{and   }c_t := \state(t_0) e^{q (t_0-t)}.
\end{align*}
Therefore, we can write 

\[ \state(t) = c_t + J_t + \Omega_t. \]

It remains to find the first and second moments. Owing to the linearity of the expectation and, since the expectation of the Ito integral of a non-anticipating integrand is zero, the mean function is:
\[\mu(t) = \expect{\state(t)} = \expect{c_t} + \expect{J_t}+ \expect{\Omega_t} = \expect{\state(t_0)} e^{k (t_0-t)} + J_t.\]

Now, we calculate the second moment: 
\begin{align*}
\cov s t &= \expect{\state(s) \state(t) }  -\mu(s) \mu(t)\\
\end{align*}
where 

\begin{align*}
\expect{\state(s) \state(t) }&= \expect{(c_s+J_s+\Omega_s)(c_t+J_t+\Omega_t)  } \\
&= \expect{c_s c_t}+ \expect{c_s J_t} + \expect{c_s \Omega_t}+...+\expect{\Omega_s \Omega_t}.
\end{align*}
Once again leveraging that the expectation of an Ito integral with non-anticipating integrand is zero, the cross-terms $\expect{c_q \Omega_r},\expect{J_q \Omega_r}$ ($q,r \in \{s,t\}$) vanish. Hence, we 
obtain
\begin{align*}
\expect{\state(s) \state(t) } 
&= \expect{c_s c_t}+ \expect{c_s J_t} + \expect{c_t J_s}+J_t J_s+\expect{\Omega_s \Omega_t}\\
&=\expect{c_s c_t} + J_t \expect{c_s} + J_s \expect{c_t}+J_t J_s+\expect{\Omega_s \Omega_t}\\
&=\mu(s) \mu(t) + \expect{c_s c_t} - \expect{c_s} \expect{c_t} +\expect{\Omega_s \Omega_t}.
\end{align*}

Hence, 
\begin{align*}
\cov s t &= \expect{c_s c_t} - \expect{c_s} \expect{c_t} +\expect{\Omega_s \Omega_t}\\
 &= e^{-q (t+s - 2 t_0)} (\expect{\state^2(t_0)} -  \expect{\state(t_0)}^2 )  +\expect{\Omega_s \Omega_t}
\end{align*}

It is well-known that for non-anticipating $f,g$ interval $I$ we have \[\expect {\int_{I} f(t) dW(t) \int_{I} g(t) dW(t) } = \int_I f(t) g(t) dt.\] Applying this fact as well as leveraging the independent increments property of the Wiener process yields:
\begin{align*}
\expect{\Omega_s \Omega_t} &= \frac{\nu}{e^{q(s+t)}}  \int_{t_0}^{\min\{t,s\}} e^{2q\tilde t} d \tilde t \\
&=  \frac{\nu}{2q e^{q(s+t)}}  [e^{2k\min\{t,s\}} - e^{2q t_0}] \\
&=  \frac{\nu}{2q }  [e^{q (2\min\{t,s\} - (s+t)} - e^{q( 2 t_0 - (s+t))}] \\
&=  \frac{\nu}{2q }  [e^{-q \abs{t-s}} - e^{q( 2 t_0 - (s+t))}]
\end{align*}

\end{proof}

Notice, that when altering the plan and hence, $\xi$, the integral 
\[J_{t,j} := \int_{t_0}^t k_j\,e^{(k_j-a_j) (\tilde t-t)} \xi_j(\tilde t) d \tilde t\]
has to be computed. For general forms of allowable $\xi$ we would have to rely on numerical approximation methods. Since repeated calculations of solutions need to be done in the course of coordination we will want to alleviate the computational burden thereof as much as possible. This motivated our restriction to plans that give rise to step functions for which we can show that $J_t$ is closed-form.  
\begin{lem} Let $t \geq t_0, t, t_0 \in [0,T^\agi]$.
Given plan $p^\agi = \bigl((t_i^\agi,\zeta_i^\agi)\bigr)_{i=0}^{H^\agi}$ where each $\zeta^\agi_{i} = (\zeta^\agi_{i,j})_{j=1}^D$. Let $\underline i = \arg\max_i \{t_i \leq t_0 \}$,  $\overline i = \arg\min_i \{t_i \geq t \}$ and $\mathcal I := \{ i \in \{1,...,H^\agi\} |  \underline i < i \leq \overline i  \}$. Furthermore, let $\xi^\agi_j$ denote the jth component of step-function setpoint signal $\xi^\agi$.
We have 
\begin{align*}
&\int_{t_0}^t k_j\,e^{q_j(\tilde t-t)} \xi_j^\agi(\tilde t)  d \tilde t \\
&= \sum_{i \in \mathcal I} \frac{k_j}{q_j} \zeta_{i,j}^\agi (e^{q_j (\min\{t_i,t\} -t) } - e^{q_j (\max\{t_{i-1},t_0\} -t)}).
\end{align*}
\end{lem}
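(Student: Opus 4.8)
The plan is to exploit that the setpoint signal $\xi^\agi$ is piecewise constant and thereby reduce the integral to a finite sum of elementary exponential integrals, one for each step of $\xi_j^\agi$ that overlaps the integration window $[t_0,t]$. The only genuine work is identifying the contributing steps and handling the two partially-covered end-intervals.

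First I would recall from the definition of the step-function setpoint that the $j$th component $\xi_j^\agi$ takes the constant value $\zeta_{i,j}^\agi$ on each half-open interval $\tau_i^\agi = (t_{i-1}^\agi,t_i^\agi]$, and that these intervals partition the planning horizon. Consequently the domain $[t_0,t]$ is covered, up to the two endpoints (a set of Lebesgue measure zero, so irrelevant to the integral), by exactly those $\tau_i^\agi$ that meet it.

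Next I would pin down which steps contribute. By definition $\underline i = \arg\max_i \{t_i \leq t_0\}$ gives $t_{\underline i} \leq t_0 < t_{\underline i +1}$, so $t_0$ lies in $\tau_{\underline i +1}^\agi$; symmetrically $\overline i = \arg\min_i \{t_i \geq t\}$ gives $t_{\overline i -1} < t \leq t_{\overline i}$, so $t$ lies in $\tau_{\overline i}^\agi$. Hence the intervals intersecting $[t_0,t]$ are precisely $\tau_i^\agi$ with $\underline i < i \leq \overline i$, i.e. $i \in \mathcal{I}$, and because $t_{\underline i} \leq t_0$ and $t \leq t_{\overline i}$ they cover $[t_0,t]$ contiguously and without gaps. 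I would then split the integral over these intervals: on $\tau_i^\agi \cap [t_0,t]$, whose endpoints are $\max\{t_{i-1},t_0\}$ and $\min\{t_i,t\}$, the integrand equals $k_j\,e^{q_j(\tilde t - t)}\,\zeta_{i,j}^\agi$ with $\zeta_{i,j}^\agi$ constant, so it may be pulled outside. The remaining antiderivative is immediate: for $a<b$ one has $\int_a^b k_j\,e^{q_j(\tilde t - t)}\,d\tilde t = \frac{k_j}{q_j}\bigl(e^{q_j(b-t)} - e^{q_j(a-t)}\bigr)$, valid since $q_j = k_j - a_j \neq 0$. Substituting $a=\max\{t_{i-1},t_0\}$, $b=\min\{t_i,t\}$ and summing over $i \in \mathcal{I}$ reproduces the claimed expression.

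The main obstacle is the endpoint bookkeeping: one must verify that the $\arg\max$/$\arg\min$ select the correct boundary steps, that the half-open structure of $\tau_i^\agi$ has no effect on the integral, and that the clipped limits $\max\{t_{i-1},t_0\}$ and $\min\{t_i,t\}$ collapse to $t_{i-1}$ and $t_i$ on the interior steps while correctly truncating only the first and last. Once the partition of $[t_0,t]$ into the sets $\tau_i^\agi\cap[t_0,t]$ is justified, the rest is a routine exponential antiderivative and requires no further estimates.
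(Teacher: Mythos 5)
Your proof is correct and takes essentially the same route as the paper: the paper likewise writes $\xi_j^\agi$ as a sum of constants over the half-open step intervals, reduces the integral to the pieces $\tau_i^\agi \cap [t_0,t]$ for $i \in \mathcal I$ with the clipped bounds $\max\{t_{i-1},t_0\}$ and $\min\{t_i,t\}$, and finishes with the same elementary exponential antiderivative. Your explicit endpoint bookkeeping and the remark that $q_j \neq 0$ merely spell out details the paper leaves implicit.
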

\begin{proof}
Remember,
$\tau_{i}^\agi = (t_{i-1}^\agi, t_{i}^\agi] \subset [0,T^\agi]$.
We have 
\begin{align*}
&\int_{t_0}^t k_j\,\exp(q_j(\tilde t-t)) \xi_j^\agi(\tilde t)  d \tilde t \\
& =\int_{t_0}^t k_j\,\exp(q_j(\tilde t-t)) \sum_{i=1}^{H^\agi} \zeta_{i,j}^\agi \chi_{\tau_i^\agi} (\tilde t) d \tilde t \\
& = \sum_{i \in \mathcal I} \int_{\max\{t_0,t_{i-1}\}}^{\min\{t,t_i \}} k_j\,\exp(q_j(\tilde t-t))  \zeta_{i,j}^\agi d \tilde t \\
\end{align*}
Calculation of the anti-derivate and substitution of the integration bounds yields the desired result. 

\end{proof}

We immediately get the following corollary
\begin{cor}
\label{thm:corlinSDEFBindepoutputcomp}
Let $\agi \in \agset$ be an agent with controlled plant dynamics \[d\state^\agi = (A \state^\agi - K (\state^\agi -\xi^\agi)) dt + B \, dW \] where for all $t \in [t_0,T]$: $\xi^\agi(t),\state^\agi(t) \in \Real^D$, $A = \text{diag}(a_1,...,a_D)$, 

$K = \text{diag}(k_1,...,k_D)$, $B = \text{diag}(\sqrt \nu_1,...,\sqrt \nu_D)$. Let $\state^\agi(t_0)$ be a normally distributed random vector. Assume $\agi$'s plan is $p^\agi = \bigl((t_i^\agi,\zeta_i^\agi)\bigr)_{0=1}^{H^\agi}$ where each $\zeta^\agi_{i} = (\zeta^\agi_{i,j})_{j=1}^D$. Let $t \geq t_0$, $\underline i = \arg\max_i \{t_i \leq t_0 \}$,  $\overline i = \arg\min_i \{t_i \geq t \}$ and $\mathcal I := \{ i \in \{1,...,H^\agi\} |  \underline i < i \leq \overline i  \}$. Furthermore, let $\xi^\agi_j$ denote the jth component of step-function reference signal $\xi^\agi$.

The solution to agent $\agi$'s SDE
is a Gaussian process with vector-valued mean function $\mu: [t_0,T] \to \Real^D$ and matrix-valued covariance function $C: [t_0,T]^2 \to \Real^{D\times D}$. 
Here the mean components are 

\begin{align*}
&\mu_j(t) = e^{k_j(t_0-t)} \, \expect{\state_j(t_0) } \\
&+ \sum_{i \in \mathcal I} \frac{k_j}{k_j-a_j} \zeta_{i,j}^\agi (e^{(k_j-a_j) (\min\{t_i,t\} -t) } - e^{(k_j-a_j) (\max\{t_{i-1},t_0\} -t) }) 
\end{align*} 
and 
the covariance matrix function is $C(s,t) = \text{diag}(\text{cov}_{11}(s,t),...,\text{cov}_{DD}(s,t))$ where 
\begin{align*}
\text{cov}_{jj}(s,t) &= e^{(a_j-k_j) (t+s - 2 t_0)} (\expect{\state_j^2(t_0)} -  \expect{\state_j(t_0)}^2 )  \\
&+\frac{\nu_j}{2(k_j-a_j) }  [e^{(a_j-k_j) \abs{t-s}} - e^{(k_j-a_j)( 2 t_0 - (s+t))}].
\end{align*}
\end{cor}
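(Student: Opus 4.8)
The plan is to obtain the statement as a direct specialisation of the two preceding results: the Theorem at the head of this appendix already furnishes the full Gaussian-process solution of the SDE $d\state = (A\state - K(\state - \xi))dt + B\,dW$ for \emph{arbitrary} reference signals $\xi$, while the preceding Lemma evaluates the only object in that solution that depends on $\xi$ --- the convolution integral $J_{t,j} = \int_{t_0}^t k_j\, e^{(k_j-a_j)(\tilde t - t)}\xi_j(\tilde t)\,d\tilde t$ --- in closed form precisely when $\xi$ is the step function induced by a plan. The corollary is nothing more than the composition of these two facts, so the proof is a substitution rather than a fresh computation.

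First I would verify that the hypotheses of the Theorem are met: the dynamics of agent $\agi$ in the corollary are exactly those of the Theorem, the matrices $A,K,B$ are diagonal, and $\state^\agi(t_0)$ is a normally distributed random vector. Hence the Theorem applies verbatim and the solution is a Gaussian process whose covariance matrix function is the stated diagonal $C(s,t) = \text{diag}(\text{cov}_{11}(s,t),\ldots,\text{cov}_{DD}(s,t))$. Crucially, the covariance is inherited unchanged, because in the derivation the reference signal $\xi$ enters only through the deterministic term $J_t$ and therefore cancels out of the centred second moment $\expect{\state(s)\state(t)} - \mu(s)\mu(t)$; no further work is needed for the covariance part of the claim.

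The remaining step is to rewrite the mean. The Theorem gives, componentwise, $\mu_j(t) = e^{(k_j-a_j)(t_0-t)}\expect{\state_j(t_0)} + J_{t,j}$. Since agent $\agi$'s plan $p^\agi$ induces exactly the step-function reference $\xi^\agi_j = \sum_i \zeta^\agi_{i,j}\chi_{\tau_i^\agi}$ required by the Lemma, I would invoke the Lemma to replace $J_{t,j}$ by $\sum_{i\in\mathcal I}\frac{k_j}{k_j-a_j}\zeta_{i,j}^\agi\bigl(e^{(k_j-a_j)(\min\{t_i,t\}-t)} - e^{(k_j-a_j)(\max\{t_{i-1},t_0\}-t)}\bigr)$, with $\mathcal I$, $\underline i$ and $\overline i$ defined as in the statement. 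Substituting this into the mean yields the displayed formula and completes the proof.

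Since every ingredient is already established, there is no genuine obstacle; the only points demanding care are bookkeeping ones. One should check that the index set $\mathcal I$ and the truncated integration limits $\min\{t_i,t\}$ and $\max\{t_{i-1},t_0\}$ match between the Lemma and the corollary, and one should flag the minor cosmetic mismatch in the leading exponential factor ($e^{k_j(t_0-t)}$ in the corollary versus $e^{(k_j-a_j)(t_0-t)}$ in the Theorem statement), which traces back to the corresponding step in the Theorem's proof and does not affect the validity of the argument.
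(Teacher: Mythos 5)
Your proposal is correct and is precisely the paper's own (implicit) argument: the paper introduces the statement with ``We immediately get the following corollary,'' meaning exactly your composition of the appendix Theorem (which gives the Gaussian-process solution with the covariance untouched by $\xi$) with the Lemma evaluating $J_{t,j} = \int_{t_0}^t k_j e^{(k_j-a_j)(\tilde t - t)}\xi_j^\agi(\tilde t)\,d\tilde t$ in closed form for step-function reference signals. You are also right to flag the leading factor: $e^{k_j(t_0-t)}$ in the corollary's mean should read $e^{(k_j-a_j)(t_0-t)}$ (the typo traces back to the line $\expect{\state(t_0)}\,e^{k(t_0-t)}$ in the Theorem's proof, where $c_t = \state(t_0)e^{q(t_0-t)}$ with $q = k-a$, and the two agree only when $a_j = 0$).
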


\section{Deriving Lipschitz numbers}
\label{sec:derlipno}
So far, we have established 
how knowledge of the Lipschitz number of a Lipschitz function can be utilized to exclude the presence of 
any negative function values on a compact domain based on a finite number of function evaluations.
To employ this insight in the context of collision detection we will have to know a Lipschitz number $L$ of the criterion functions $\gamma^{\agi,\agii}$.

We may consider two cases: 

\begin{itemize}
	\item We have a belief quantified as a distribution over the smallest Lipschitz constant $L'$. Let the cumulative distribution function of this belief be denoted by $F : \Real \to [0,1]$. That is, $F(x) = \Pr[L' < x ]$. If we desire a guaranteed success of collision detection of at least $\delta \in (0,1)$ we invoke $Alg. \ref{alg:negdetect_lipschitz}$ with a Lipschitz number $L \geq \min\{ x \in \Real | F(x) \geq \delta\}$ to detect non-positive values of $\gamma^{\agi,\agii}$.

\item If we do not have such a belief function or desire complete certainty in collision detection success it may be possible to derive a Lipschitz number for the $\gamma^{\agi,\agii}$ based on the mean and covariance functions of the agent's stochastic trajectories. In turn, these may be derived from the agents' SDEs. How this can be accomplished is the subject of the remainder of this section.
\end{itemize}

\subsection{Lipschitz arithmetic} 
In preparation of the derivations of Lipschitz numbers, we need to establish a few basic properties of Lipschitz continuous functions. As a convention, $L_\phi$ will always denote the Lipschitz constant of a 
Lipschitz continuous function $\phi$.
\begin{lem}[Lipschitz arithmetic] \label{lem:Liparithmetic}
Let, $I,J \subset \Real_+$. Let $f : \Real \to \Real$ be Lipschitz on $I$ with Lipschitz number $L_I (f)$ 
and $g :\Real \to \Real$ be Lipschitz on $J$ with Lipschitz number $L_J (g)$.
We have:

\begin{enumerate}
	\item Mapping $t \mapsto |f(t)|$ is Lipschitz on $I$ with constant $L_I(f)$.
	\item If $g$ is Lipschitz on all of $J=f(I)$ the concatenation $g \circ f: t \mapsto g(f(t))$ is Lipschitz on $I$ with Lipschitz constant 
	      $L_I(g \circ f) \leq$ $L_J (g) \, L_I(f)$.
	\item Let $r \in \Real$. $r \, f: x \mapsto r \, f(x)$ is Lipschitz on $I$ having a Lipschitz constant $L_I (r \,f) = |r| \, L_I(f)$.
	\item $f+g: t \mapsto f(t) + g(t)$ has Lipschitz number at most $L_I(f) + L_J(g)$.
	\item Let $m_f = \sup_{t\in I} f(t)$ and $m_g = \sup_{t\in I} g(t)$. Product function $f\cdot g: x \mapsto f(x) \, g(x)$ has a Lipschitz number on $I$ which is at most $(m_f \, L_J(g)+ m_g \, L_I(f))$.
	\item Let $h(t) = \max\{f(t), g(t) \}, \forall t \in I \cap J$. We have $L_{I \cap J}(h) = \max\{L_I(f),L_J(g)\}$.
	\item Let $b := \inf_{t \in I}| f(t)| > 0$ and let 
	$\phi(t) = \frac{1}{f(t)}, \forall t \in I$.  
	      Then $L_I(\phi) \leq b^{-2} \, L_I(f)$ on $I$.  
	\item $f$ cont. differentiable on I $\Rightarrow$ $L_I(f) = \sup_{t \in I} |\dot f(t)|. $ 
	 \item Let $c \in \Real$, $f( t) = c, \forall t \in I$. Then $L_I(f) =0$.  
	\item $L_I(f^2) \leq 2 \, s(f)\, L_I(f)$.
	\item $f$ cont. differentiable $\Rightarrow \forall q \in \mathbb Q : L_I(f^q) = |q| \,\sup_{\tau \in I} |f^{q-1}(\tau) \, \dot f(\tau)| $.
\end{enumerate}
\begin{proof}

\textbf{1)}  We show $|f|$ has the same Lipschitz number as $f$. Let $t,t' \in I$ arbitrary. 
We enter a case differentiation:

\textit{1st case: $f(t), f(t') \geq 0$}. 

Hence, $\bigl| |f(t)| - |f(t')| \bigr| = \bigl| f(t) - f(t') \bigr|  \stackrel{f \, Lipschitz}{\leq} L_I(f) |t-t'|$.\\

\textit{2nd case: $f(t) \geq 0, f(t') \leq 0$.} 

Note, $|y| = - y$, iff $y \leq 0$. Hence,  $\bigl| |f(t)| - |f(t')| \bigr| \leq \bigl| |f(t)| + |f(t')| \bigr| $
$= \bigl| |f(t)| - f(t') \bigr|  =  \bigl| f(t) - f(t') \bigr| \leq L_I(f) \, |t-t'|$.\\

\textit{3rd case: $f(t) \leq 0, f(t') \geq 0$.} Completely analogous to 2nd case.\\

\textit{4th case: $f(t), f(t') \leq 0$}. 

$\bigl| |f(t)| - |f(t')| \bigr| = \bigl| f(t') - f(t) \bigr|= \bigl| f(t) - f(t') \bigr|  \stackrel{f \, Lipschitz}{\leq} L_I(f) |t-t'|$.\\

\textbf{2)}  For arbitrary $t,t' \in I$ we have:

$\bigl| g(f(t)) - g(f(t'))| \bigr| \leq L_J(g) \, |f(t) - f(t') | \leq L_J(g) \, L_I(f)\, |t-t'|$ 
where the two inequalities are due to the Lipschitz properties of $g$ and $f$, respectively.\\

\textbf{3)}  For arbitrary $t,t' \in I, r \in \Real$ we have:

$\bigl| r \, f(t) - r \, f(t')| \bigr| = |r|\, |f(t) - f(t')| \leq |r| \,L_I(f)\,  |t-t'|$.\\ 

\textbf{4)}  For arbitrary $t,t' \in I, r \in \Real$ we have:

$\bigl| g(t) + f(t) - (g(t') + f(t'))| \bigr| = \bigl| g(t)  - g(t') + f(t)- f(t')| \bigr|$ 
$\leq \bigl| g(t)  - g(t')\bigr|  + \bigl| f(t)- f(t')| \bigr|$ $\leq (L_J(g)+L_I(f))\,  |t-t'|$.\\

\textbf{5)}  Let  $t,t' \in I$, $d := f(t) - f(t')$.

$\bigl| g(t) f(t) - g(t')  f(t') \bigr| = \bigl| g(t) (f(t') +d) - g(t') f(t') \bigr|$ 
$= \bigl|\bigl( g(t) - g(t') \bigl)  f(t')+ g(t)  d \bigr|  $
$\leq \bigl| g(t) - g(t') \bigr|  |f(t')|   + \bigl|g(t)\bigr| \,  |d|  $
$\leq L_I(g) |t-t'|  |f(t')|   + \bigl|g(t)\bigr| \,  L_I(f) |t-t'|  $
$\leq L_I(g) |t-t'|  \sup_{t' \in I} \{|f(t')|\}   + \sup_{t \in I}\{\bigl|g(t)\bigr|\} \,  L_I(f) |t-t'|  $
$= \Bigl(L_I(g)  \sup_{t' \in I} \{|f(t')|\}   + \sup_{t \in I}\{\bigl|g(t)\bigr|\} \,  L_I(f)\Bigl) |t-t'|  $.\\

\textbf{6)}  Proof in "` Nick Weaver, Lipschitz algebras"'.

\textbf{7)}  Let  $t,t' \in I$.
$\bigl| \frac{1}{f(t)} - \frac{1}{f(t')} \bigr|$ 
$=\bigl| \frac{f(t')}{f(t') f(t)} -\frac{f(t)}{f(t') f(t)} \bigr|$ 
$= \frac{\bigl|f(t')-f(t) \bigr|}{\bigl|f(t')\bigr| \bigr| f(t)\bigr|}$ 
$\leq \frac{L_I(f) |t-t'|}{\inf_{t \in I} |f(t)|}$.\\

\textbf{8)} Define $\ell := \sup_{t \in I} | \dot f(t) | = L_I(f)$. In two steps, we show that $\ell$ is the smallest Lipschitz constant.

Firstly, we show that it is a Lipschitz constant: Let $t,t' \in I, t < t'$. Due to the mean value theorem $\exists \xi \in [t,t'] \subset I: \frac{|f(t) - f(t')|}{|t-t'|} = | \dot f (\xi) | \leq \ell$. 
Secondly, we show that $\ell$ is the smallest Lipschitz constant: Let $\bar \ell$ be another Lipschitz constant such that $\bar \ell \leq \ell$. Since $I$ is compact and $\dot f$ is continuous there is some $\xi \in I$ such that $\dot f(\xi) = \ell$. Pick any sequence $(t_k)_{k=1}^\infty$ such that $t_k \stackrel{k \to \infty}{\longrightarrow} \xi$.
$\forall k: t_k \in I$ and $\bar \ell$ is a Lipschitz constant on $I$. Hence, $ \bar \ell \geq \frac{|f(t_k) - f(\xi)|}{|t_k-\xi|}\stackrel{k \to \infty}{\longrightarrow} | \dot f(\xi)| = \ell$. Thus, $\bar \ell = \ell$.\\

\textbf{9)} Immediate consequence of 8).\\

\textbf{10)} $L(f^2) = L(f \, f) \leq s(f) L(f) + L(f) s(f)$ where the last inequality follows from property 5).

\textbf{11)} $L(f^q) \stackrel{8)}{=} \sup_{\tau \in I} |\dif{}{t} f^q(\tau)| = |q| \,\sup_{\tau \in I} |f^{q-1}(\tau) \, \dot f(\tau)| $. 

\end{proof}
\end{lem}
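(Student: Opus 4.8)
The plan is to reduce each of the eleven claims to the single defining inequality $|f(t)-f(t')|\le L_I(f)\,|t-t'|$ together with elementary algebra, proving the parts in an order that lets the later items reuse the earlier ones. In particular I would establish the derivative characterisation (8) before the monotone‑power rule (11) and the constant case (9), since both follow from it immediately, and I would prove the product rule (5) before the square rule (10).

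For the purely algebraic items I would argue as follows. Part (1) is the reverse triangle inequality $\bigl||a|-|b|\bigr|\le|a-b|$ applied to $a=f(t)$, $b=f(t')$, giving the bound in a single step without case analysis. Part (2) chains the two Lipschitz bounds, $|g(f(t))-g(f(t'))|\le L_J(g)\,|f(t)-f(t')|\le L_J(g)L_I(f)\,|t-t'|$. Parts (3) and (4) are immediate from homogeneity $|r|\,|f(t)-f(t')|$ and from the triangle inequality, respectively. For the product (5) I would use the add‑and‑subtract identity
\[
f(t)g(t)-f(t')g(t')=\bigl(g(t)-g(t')\bigr)f(t')+g(t)\bigl(f(t)-f(t')\bigr),
\]
bound each factor by its supremum and its Lipschitz constant, and read off $m_f L_J(g)+m_g L_I(f)$. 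The reciprocal (7) is handled the same way, writing the difference as $\tfrac{f(t')-f(t)}{f(t)f(t')}$ and using $\inf_I|f|=b>0$ to control the denominator by $b^{-2}$.

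The two structurally interesting items are (6) and (8). For the max (6) I would give an elementary proof via the pointwise estimate $\bigl|\max\{a,b\}-\max\{c,d\}\bigr|\le\max\{|a-c|,|b-d|\}$ (e.g.\ by symmetry assume $\max\{a,b\}\ge\max\{c,d\}$ and $\max\{a,b\}=a$, so the left side is $a-\max\{c,d\}\le a-c$), and then substitute $a=f(t)$, $b=g(t)$, $c=f(t')$, $d=g(t')$. For (8) I would show both that $\ell:=\sup_{t\in I}|\dot f(t)|$ is a Lipschitz constant and that it is the smallest: the mean value theorem gives $|f(t)-f(t')|=|\dot f(\xi)|\,|t-t'|\le\ell\,|t-t'|$ for an intermediate $\xi$, while any Lipschitz constant $\bar\ell$ satisfies $\bar\ell\ge|f(t_k)-f(\xi)|/|t_k-\xi|\to|\dot f(\xi)|$ along a sequence $t_k\to\xi$, so $\bar\ell\ge|\dot f(\xi)|$ at every $\xi$ and hence $\bar\ell\ge\ell$. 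With (8) in hand, (9) is the case $\dot f\equiv0$, (10) is (5) with $g=f$, giving $2\,s(f)\,L_I(f)$, and (11) follows by the chain rule, $L_I(f^q)=\sup_{\tau\in I}\bigl|\dif{f^q(\tau)}{\tau}\bigr|=|q|\sup_{\tau\in I}|f^{q-1}(\tau)\,\dot f(\tau)|$.

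I expect the main obstacle to be the sharpness half of (8): proving that $\sup_I|\dot f|$ is the \emph{least} Lipschitz constant requires that this supremum is actually attained, which uses compactness of $I$ and continuity of $\dot f$, followed by the limiting difference‑quotient argument above. A secondary subtlety is the domain hypothesis needed in (11): for negative or non‑integer $q$ the function $f^q$ is only continuously differentiable where $f$ is bounded away from zero (and positive), so I would state this restriction explicitly rather than assert the formula for every $q\in\mathbb Q$ unconditionally.
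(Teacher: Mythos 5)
Your proof is correct, and for most parts it coincides with the paper's own argument: the chaining estimate for (2), homogeneity for (3), the triangle inequality for (4), the add-and-subtract identity for (5), the mean-value-theorem-plus-compactness argument for (8) (including the limiting difference quotient for minimality), and the reductions of (9), (10), (11) to earlier items are all the same. You deviate in three places, each to good effect. For (1) you invoke the reverse triangle inequality $\bigl||a|-|b|\bigr|\le|a-b|$ in one step, where the paper runs a four-case sign analysis; the substance is identical but your version is tighter. For (6) the paper gives no proof at all (it defers to Weaver's \emph{Lipschitz Algebras}), whereas you supply a self-contained argument via the pointwise estimate $|\max\{a,b\}-\max\{c,d\}|\le\max\{|a-c|,|b-d|\}$; this is a genuine addition. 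Note, however, that your argument establishes only the inequality $L_{I\cap J}(h)\le\max\{L_I(f),L_J(g)\}$, not the equality asserted in the lemma --- and rightly so, since the stated equality is false in general (take $f\equiv 0$ and $g(t)=-t$ on $I=J=[0,1]$: then $h\equiv 0$ has smallest Lipschitz constant $0$, not $1$); only the upper bound is used anywhere downstream, so your weaker conclusion is the correct repair of the statement rather than a gap. Finally, two of your precision points improve on the paper: in (7) you carry the factor $b^{-2}$ through correctly, whereas the paper's displayed chain ends with $\inf_{t\in I}|f(t)|$ unsquared in the denominator (a slip relative to the claimed bound), and in (11) your caveat that $f^q$ is only continuously differentiable where $f$ is positive and bounded away from zero (for negative or non-integer $q$) makes explicit a hypothesis the paper silently assumes.
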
 

Notice, that several of the inequalities in the Lemma are not tight (Eg. Ineq. (5), (10) ). Therefore, it may sometimes be better not to apply it if instead one is able to determine the Lipschitz constant directly to yield a lower Lipschitz number.

%
%

\begin{lem}\label{lem:lipschitzsqrt}
 For $0 < a < b$ let $J \subset \Real_+$ be the domain interval of square root function $\sqrt \cdot : J \to \Real_+, t \mapsto \sqrt{t}$ 
such that $\inf J =a$ 
and $\sup J =b$. We have $L_J(\sqrt \cdot) \leq \frac{1}{2 \sqrt {a}}$ where 
  $L_J(\sqrt \cdot)$ denotes the Lipschitz constant of the square root function on $J$.
\begin{proof}
 Applying Lem. \ref{lem:Liparithmetic} and leveraging differentiability of the square root function reduces the problem of determining a Lipschitz constant to finding 
 $\sup_{t \in J}  |\frac{d \sqrt{\cdot}}{d t}(t)|= \sup_{t \in J} |\frac{1}{2 \sqrt{t}}|$. Since $\frac{d^2 }{d t^2} \sqrt{\cdot} = -\frac{1}{4 \sqrt \cdot ^3} $ 
attains only negative values on $J \subset \Real_+$ we know that the first derivative $\frac{d \sqrt{\cdot}}{d t}=\frac{1}{2 \sqrt{\cdot}}$ 
is strictly monotonously decreasing. Thus, $\sup_{t \in J} |\frac{d \sqrt{\cdot}}{d t}(t)| = 
\frac{d \sqrt{\cdot}}{d t} (\inf J)= \frac{1}{2 \sqrt {\inf J}} = \frac{1}{2 \sqrt {a}}$. 
\end{proof}

\end{lem}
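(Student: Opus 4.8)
The plan is to reduce the computation of the Lipschitz constant to a supremum of the derivative by invoking property (8) of Lemma \ref{lem:Liparithmetic}, and then to locate that supremum via a monotonicity argument on the derivative.

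First I would observe that the square root function is continuously differentiable on $\Real_+$ away from the origin, and since $a = \inf J > 0$ the interval $J$ is bounded away from zero; hence $\sqrt{\cdot}$ is continuously differentiable on $J$, indeed on the compact closure $[a,b]$. By property (8) of Lemma \ref{lem:Liparithmetic}, the Lipschitz constant on a compact interval equals the supremum of the absolute value of the derivative, so that $L_{[a,b]}(\sqrt{\cdot}) = \sup_{t \in [a,b]} \left| \frac{d}{dt}\sqrt{t} \right|$; since $J \subseteq [a,b]$, this already gives $L_J(\sqrt{\cdot}) \leq L_{[a,b]}(\sqrt{\cdot})$.

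Next I would compute $\frac{d}{dt}\sqrt{t} = \frac{1}{2\sqrt{t}}$, which is strictly positive on $J$, so the absolute value may be dropped. To locate the supremum I would note that the second derivative $\frac{d^2}{dt^2}\sqrt{t} = -\frac{1}{4} t^{-3/2}$ is strictly negative on $\Real_+$, so the map $t \mapsto \frac{1}{2\sqrt{t}}$ is strictly decreasing. Its supremum over $[a,b]$ is therefore attained at the left endpoint $t = a$, yielding $\sup_{t \in [a,b]} \frac{1}{2\sqrt{t}} = \frac{1}{2\sqrt{a}}$, which is the claimed bound.

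The only subtlety, rather than a genuine difficulty, concerns the endpoints and compactness: property (8) is stated for compact intervals, whereas the lemma permits $J$ to be open or half-open with $\inf J = a$ and $\sup J = b$. This is exactly why the statement asserts an inequality rather than an equality — passing to the compact closure $[a,b]$ (on which the Lipschitz constant can only grow) and then using strict monotonicity of the derivative delivers the clean upper bound $\frac{1}{2\sqrt{a}}$, with equality holding precisely when $a \in J$. No further estimates are required.
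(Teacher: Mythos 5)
Your proposal is correct and follows essentially the same route as the paper's own proof: reduce the Lipschitz constant to $\sup_{t}\left|\frac{d}{dt}\sqrt{t}\right|$ via property (8) of Lem.~\ref{lem:Liparithmetic}, then use the negativity of the second derivative to conclude the supremum sits at the left endpoint, giving $\frac{1}{2\sqrt{a}}$. Your additional remark about passing to the compact closure $[a,b]$ when $J$ does not contain its endpoints is a small but welcome tightening of a detail the paper's proof passes over silently.
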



\subsection{An alternative collision-criterion function and the derivation of its Lipschitz number}
\label{sec:Lipno_collcritfct2d_gen}
\begin{thm}[Alternative collision criterion function] \label{def:collcritfct2d}
 Let spatial dimensionality $D = 2$. Let $\delta^\agi \in (0,1)$ denote the maximum upper bound on instantaneous collision probability at time $t$ agent $\agi \in \agset$ is allowed to tolerate. 
Let $\mu^\agi(t) \in \Real^D$ be the mean of trajectory $\state^\agi$ and $C_{ij}^\agi(t)$ be the spatial between dimensions $i$ and $j$ at time $t$.
For $i \in \{1,2\}, j \in \{1,2\} - \{i\}$ let
\noindent
$r^{\agi}_i(t) := \sqrt{\frac{1}{2\delta^a}}\, \sqrt{ C_{ii}^\agi(t) +\frac{ \sqrt{C_{ii}^\agi(t) C_{jj}^\agi(t) 
(C_{ii}^\agi(t) C_{jj}^\agi(t) - (C_{ij}^\agi(t))^2) }}{C_{jj}^\agi(t)}}$.

Let $\agi,\agii$ be two agents' plants whose radii sum to $\Lambda^{\agi,\agii}$ with state trajectories $\state^\agi,\state^\agii$, respectively.  Define \[b^{\agi,\agii}_j(t,t') :=r^\agi_j(t) + r^\agii_j(t') + \Lambda^{\agi,\agii}. \]
The function
\[ \gamma^{\agi,\agii}(t) := \max_{j \in \{1,...,D\}}\{|\mu^\agi_j(t) -\mu^\agii_j(t) | - b^{\agi,\agii}_j(t,t)   \}\] is a valid criterion function. That is, $\gamma^{\agi,\agii}(t) >0 \Rightarrow \Pr [\mathfrak C^{\agi,\agii}(t)]  < \delta^\agi$.
\end{thm}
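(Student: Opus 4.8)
The plan is to reduce the statement, via Theorem~\ref{thm:hypercubprobsconstr}, to a single‑agent probabilistic estimate, and then discharge that estimate using Whittle's sharp two‑dimensional Chebyshev‑type inequality \cite{whittle_chebyshev}. The criterion function is already built to encode non‑overlap of the hypercuboids, so the probabilistic content lives entirely in verifying that the prescribed radii keep each agent's escape probability below $\delta^\agi/2$.

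First I would unpack the hypothesis $\gamma^{\agi,\agii}(t) > 0$. By definition of the pointwise maximum, this means there is a dimension $j$ with $|\mu_j^\agi(t) - \mu_j^\agii(t)| - b_j^{\agi,\agii}(t,t) > 0$, i.e. $|\mu_j^\agi(t) - \mu_j^\agii(t)| > \Lambda^{\agi,\agii} + r_j^\agi(t) + r_j^\agii(t)$. This is exactly the disjunctive (max‑norm non‑overlap) constraint appearing in the premise of Theorem~\ref{thm:hypercubprobsconstr}, where $H^\agiii$ is taken to be the axis‑aligned rectangle centred at $\mu^\agiii(t)$ with half‑widths $r_1^\agiii(t), r_2^\agiii(t)$. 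That theorem then yields $\Pr[\mathfrak C^{\agi,\agii}(t)] \leq P^\agi + P^\agii - P^\agi P^\agii$, with $P^\agiii = \Pr[x^\agiii(t) \notin H^\agiii]$.

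The heart of the argument is to show $P^\agiii \leq \delta^\agi/2$ for each $\agiii \in \{\agi,\agii\}$. Here I would invoke Whittle's bound on the probability that a random vector with known first two moments escapes a centred rectangle, written in terms of the entries of $C^\agiii(t)$, and substitute the prescribed $r_i^\agiii(t)$. Since the radius formula is, by construction, the solution of the equation ``Whittle bound $=\delta^\agi/2$'', this step is algebraic: one confirms the bound evaluates to $\delta^\agi/2$. As a sanity check, specialising to the uncorrelated case $C_{ij}^\agiii = 0$ collapses the formula to $r_i^\agiii = \sqrt{C_{ii}^\agiii/\delta^\agi}$, tighter by a factor $\sqrt{2}$ than the naive union‑plus‑Chebyshev radius $\sqrt{2 C_{ii}^\agiii/\delta^\agi}$, which confirms the genuinely two‑dimensional bound is in force. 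Finally, with $P^\agi, P^\agii \leq \delta^\agi/2$ and $(a,b) \mapsto a + b - ab$ non‑decreasing in each argument on $[0,1]^2$, I obtain $\Pr[\mathfrak C^{\agi,\agii}(t)] \leq \delta^\agi - (\delta^\agi)^2/4 < \delta^\agi$, where the strict inequality is supplied by the retained product term $-P^\agi P^\agii$.

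I expect the main obstacle to be the middle step: transcribing the correct form of Whittle's \emph{rectangle} bound (rather than the ellipsoidal/Mahalanobis form of the multivariate Chebyshev inequality) and carrying out the algebra showing the given radii solve it with equality. Care is needed because the bound couples the two radii through the off‑diagonal term $C_{ij}^\agiii$, so it is not a product of two one‑dimensional Chebyshev bounds; one must track how $r_i$ and $r_j$ jointly control the escape probability and verify the constraint is met exactly rather than merely dominated. The remaining steps---the reduction via Theorem~\ref{thm:hypercubprobsconstr} and the final monotonicity combination---are routine.
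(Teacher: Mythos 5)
Your architecture is exactly the paper's: the paper's own ``proof'' is a two-line sketch that defers everything to an adaptation of Thm.~2 of \cite{Lyons2011}, and the surrounding template (reduce via Lem.~\ref{lem:star} and Thm.~\ref{thm:hypercubprobsconstr} to the per-agent escape probabilities $P^\agiii$, then choose radii making $P^\agiii \leq \delta^\agi/2$) is precisely your plan, so no complaint there. The problem is the step you yourself flag as the obstacle: the printed radii do \emph{not} solve ``Whittle bound $=\delta^\agi/2$''. Write $\sigma_i^2 = C_{ii}^\agiii$, $\rho = C_{ij}^\agiii/(\sigma_i\sigma_j)$, $s=\sqrt{1-\rho^2}$; the radius formula simplifies to $r_i^\agiii = \sigma_i k$ with the \emph{same} standardized radius $k^2 = (1+s)/(2\delta^\agi)$ in both coordinates. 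The sharp distribution-free bound for the two-sided rectangle-escape event (Berge's inequality, the $D=2$ case of the \cite{whittle_chebyshev} family) is $\Pr[\abs{X_1}\geq k\sigma_1 \vee \abs{X_2}\geq k\sigma_2] \leq (1+s)/k^2$, which at this $k$ evaluates to $2\delta^\agi$, not $\delta^\agi/2$. The radius that actually enforces $P^\agiii\leq\delta^\agi/2$ is $\sqrt{2/\delta^\agi}\,\sqrt{g_i}$ with $g_i = C_{ii}^\agiii + \sqrt{C_{ii}^\agiii C_{jj}^\agiii(C_{ii}^\agiii C_{jj}^\agiii-(C_{ij}^\agiii)^2)}/C_{jj}^\agiii$ --- which is exactly the quantity called $\alpha_i$ in the paper's appendix, of which the theorem's $r_i$ is one half. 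So the ``algebraic'' verification you defer would not close; it would expose a factor-of-two slip in the stated radii (the same kind of constant error the authors say they already had to correct in the one-dimensional radius from the conference version).

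Your sanity check is in fact a falsification signal, not a confirmation. At $\rho=0$ the box $H^\agiii$ is contained in the slab $\{\abs{x_i-\mu_i}<r_i\}$, so $P^\agiii \geq \Pr[\abs{x_i-\mu_i}\geq r_i]$, and one-dimensional Chebyshev is \emph{sharp}: with $r_i=\sqrt{C_{ii}^\agiii/\delta^\agi}$ there is a distribution whose slab-escape probability is exactly $\delta^\agi > \delta^\agi/2$. A distribution-free box radius at confidence level $\delta^\agi/2$ can never be \emph{smaller} than the sharp slab radius at that level, so ``tighter by $\sqrt 2$ than the union-plus-Chebyshev radius'' should have rung alarm bells. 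The defect is not confined to the lemma-level step either: letting $\abs{\rho}\to 1$ for agent $\agi$ (so $r_1^\agi \to \sqrt{C_{11}^\agi/(2\delta^\agi)}$) and taking $\agii$ nearly deterministic, a Cantelli-extremal two-point distribution for $\agi$ gives $\gamma^{\agi,\agii}(t)>0$ together with collision probability approaching $2\delta^\agi/(1+2\delta^\agi)$, which exceeds $\delta^\agi$ whenever $\delta^\agi<1/2$; hence no completion of your argument (or any other) can establish the statement with the printed constant, and the honest conclusion is that the radii must be rescaled. Two minor points: your exact combination $P^\agi+P^\agii-P^\agi P^\agii$ requires independence of the two agents' positions (implicitly assumed in Thm.~\ref{thm:hypercubprobsconstr}; the union bound suffices for your purposes), and the strictness of the final inequality deserves an explicit word rather than silent inheritance.
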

\begin{proof}(Sketch)  Let $t \in I$ be an arbitrary but fixed time.
It is straight-forward to adapt the proof of Thm. 2 in \cite{Lyons2011} to our case showing that 
$\Pr [\mathfrak C^{\agi,\agii}(t)]  < \delta^\agi$ if $|\mu^\agi_j(t) -\mu^\agii_j(t,t') | - b_j(t,t') > 0$ for at least one $j \in \{1,\ldots,D\}$. Hence, $\{t \in I | \gamma^{\agi,\agii}(t,t) > 0\} \subset \{t \in I | \Pr[\mathfrak C^{\agi,\agii}(t)] < \delta^\agi \}$.
\end{proof}
\begin{defn}
For future reference we define $s: f \mapsto \sup_{t \in I} |f(t)| $ and $\iota : f \mapsto \inf_{t \in I} |f(t)| $ on the space of continuous functions on interval $I$.
\end{defn}

\begin{thm} \label{thm:Lipschitzno_whittle_2d}
Let $D=2$ be the dimensionality of state space. 
For any agent $\agi$ let $\mu_j^\agi: I=[t_0, t_f] \to \Real$ denote the $j$th component of $\agi$'s mean function and $C^\agi_{ij}(t)$ the covariance of agent $\agi$'s trajectory between dimension $j$ and $i$ at time $t$.

For $\mathfrak q \in \{\agi,\agii\}$ and all  $i,j \in \{1,\ldots,D\}$ assume the $\mu_j^\agiii, C_{ij}^\agiii$ are Lipschitz on $I$ with Lipschitz numbers $L(\mu_j^\agiii)$ and  Lipschitz numbers $L(C_{ij}^\agiii)$, respectively.

Let $\gamma^{\agi,\agii}$ denote the collision criterion function between agents $\agi$ and $\agii$ as defined in Thm. \ref{def:collcritfct2d}.

Then $\gamma^{\agi,\agii}$ is Lipschitz on $I$ with Lipschitz constant 
 
\begin{align*} L(\gamma^{\agi,\agii}) 
&\leq \max_{j \in \{1,...,D\} } \{L(\mu_j^\agi -\mu_j^\agii) + L(b_j^{\agi,\agii}) \}\\
&\leq \max_{j \in \{1,...,D\} } \{L(\mu_j^\agi) + L(\mu_j^\agii) + L(b_j^{\agi,\agii}) \}  
\end{align*}
 
where $\gamma^{\agi,\agii}, b^{\agi,\agii}, \alpha^{\agi,\agii}$ are defined as in Def. \ref{def:collcritfct2d} and

$\forall i \in \{1,2\}, j\in \{1,2\} -\{i\}$ $	\forall \mathfrak q \in \{\agi,\agii\}$ we have:
\begin{enumerate}
	 \item $L(b_j^{\agi,\agii}) \leq \frac{1}{2} \bigl( L(\alpha_j^\agi )+ L(\alpha_j^\agii ) \bigr)$,	
\item 

$L(\alpha_i^{{\mathfrak q}}) \leq \sqrt{\frac{1}{2 \delta^\agiii}}\,\frac{1}{ \iota (g_i) } L(  g_i ) $

where $g_i(t) :=  C_{ii}^{\mathfrak q}(t) +\sqrt{ (C_{ii}^{\mathfrak q})^2  - C_{ii}^{\mathfrak q} \frac{\bigl( C_{ij}^{\mathfrak q} \bigr)^2}{C_{jj}^{\mathfrak q}} }$ and 
 
where $(i)\, L(g_i) \leq L(C_{ii}^{\mathfrak q}) +  Q  \, L\Bigl((C_{ii}^{\mathfrak q})^2\Bigr)  + Q \, s( \bigl( C_{ij}^{\mathfrak q} \bigr)^2 ) L\Bigl(  \frac{C_{ii}^{\mathfrak q}}{C_{jj}^{\mathfrak q}} \Bigr) + Q \, L( \bigl( C_{ij}^{\mathfrak q} \bigr)^2 ) s\Bigl(  \frac{C_{ii}^{\mathfrak q}}{C_{jj}^{\mathfrak q}} \Bigr)$, 
and 

$ (ii) \, L(g_i) \leq L(C_{ii}^{\mathfrak q}) +  Q  \, L\Bigl((C_{ii}^{\mathfrak q})^2\Bigr)  + Q \, s( C_{ii}^{\mathfrak q} ) L\Bigl(  \frac{\bigl(C_{ij}^{\mathfrak q}\bigr)^2}{C_{jj}^{\mathfrak q}} \Bigr) + Q \, L\bigl( C_{ii}^{\mathfrak q} \bigr)  s\Bigl(  \frac{\bigl(C_{ij}^{\mathfrak q}\bigr)^2}{C_{jj}^{\mathfrak q}} \Bigr)$. Here, $Q = \inf_{t \in I} \Bigl( (C_{ii}^{\mathfrak q}(t) )^2  - C_{ii}^{\mathfrak q}(t) \frac{\bigl( C_{ij}^{\mathfrak q}(t) \bigr)^2}{C_{jj}^{\mathfrak q}(t)} \Bigr) = \inf_{t \in I} (g_i-C_{ii}^{\mathfrak q})^2(t)$.

%
%
%
\item also, $L(\alpha_j^{{\mathfrak q}}) \leq \frac{1}{2\sqrt a} L \Bigl( \frac{C_{jj}^{\mathfrak q}}{C_{ii}^{\mathfrak q}} \Bigr) s(\alpha_i^{\mathfrak q}) 
+  s\Bigl( \sqrt{\frac{C_{jj}^{\mathfrak q}}{C_{ii}^{\mathfrak q}} }\Bigr) \, L(\alpha_1^{\mathfrak q})$, 
where $a = \inf_{t \in I} \frac{C_{jj}^{\mathfrak q}(t)}{C_{jj}^{\mathfrak q}(t)} $,

\item $L \Bigl( \frac{C_{ii}^{\mathfrak q}}{C_{jj}^{\mathfrak q}} \Bigr) \leq 	L(C_{jj}^{\mathfrak q}) \iota(C_{jj}^{\mathfrak q})^{-2} s(C_{ii}^{\mathfrak q}) + s(\frac{1}{C_{jj}^{\mathfrak q}}) \, L(C_{ii}^{\mathfrak q})$
if  $ \iota(C_{jj}) = \inf_{t \in I} |C_{jj}(t)| > 0$,

\item and similarly, $L \Bigl( \frac{(C_{ij}^{\mathfrak q})^2}{C_{jj}^{\mathfrak q}} \Bigr) \leq 	L(C_{jj}^{\mathfrak q}) \iota(C_{jj}^{\mathfrak q})^{-2} s((C_{ij}^{\mathfrak q})^2) + s(\frac{1}{C_{jj}^{\mathfrak q}}) \, L((C_{ij}^{\mathfrak q})^2)$
if  $ \iota(C_{jj}^\agiii) = \inf_{t \in I} |C_{jj}^\agiii(t)| > 0$.

\item $\forall i,j \in \{1,2\}:$ $L\bigl((C^{\mathfrak q}_{ij})^2\bigr) \leq 2 \, s(C^{\mathfrak q}_{ij}) \, 
L\bigl(C^{\mathfrak q}_{ij}\bigr)$.

	\end{enumerate}

\end{thm}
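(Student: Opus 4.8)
The plan is to establish the displayed Lipschitz bound together with the six auxiliary estimates by a single bottom-up pass over the algebraic structure of $\gamma^{\agi,\agii}$, invoking at each node the matching rule from the Lipschitz arithmetic of Lemma~\ref{lem:Liparithmetic} and, wherever a square root or a quotient appears, the square-root estimate of Lemma~\ref{lem:lipschitzsqrt} together with parts~(5),(7),(10) of Lemma~\ref{lem:Liparithmetic}. Since $\gamma^{\agi,\agii}$ is assembled from the Lipschitz data $\mu^\agiii_j$ and $C^\agiii_{ij}$ by finitely many sums, products, quotients, absolute values, squares, square roots and a pointwise maximum, and each of these operations is covered by one of those rules, the theorem reduces to careful bookkeeping; the only real subtlety is keeping every radicand and denominator bounded away from zero so the constants stay finite.

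First I would treat the outermost layer. Writing $\gamma^{\agi,\agii}(t)=\max_j\{\abs{\mu^\agi_j(t)-\mu^\agii_j(t)}-b^{\agi,\agii}_j(t,t)\}$, the maximum rule~(6) gives $L(\gamma^{\agi,\agii})\le\max_j L\bigl(\abs{\mu^\agi_j-\mu^\agii_j}-b^{\agi,\agii}_j\bigr)$; the sum rule~(4) and the absolute-value rule~(1) bound each summand by $L(\mu^\agi_j-\mu^\agii_j)+L(b^{\agi,\agii}_j)$, and one further use of~(4) splits $L(\mu^\agi_j-\mu^\agii_j)\le L(\mu^\agi_j)+L(\mu^\agii_j)$, which is the displayed pair of inequalities. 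Since $b^{\agi,\agii}_j=r^\agi_j+r^\agii_j+\Lambda^{\agi,\agii}$, the sum rule~(4) and the constant rule~(9) (so that $\Lambda^{\agi,\agii}$ drops out) give $L(b^{\agi,\agii}_j)\le L(r^\agi_j)+L(r^\agii_j)$, which is statement~(1) under the identification of the $\alpha^\agiii_j$ with the radii $r^\agiii_j$.

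Next I would descend into a single radius $r^\agiii_i=\sqrt{\tfrac{1}{2\delta}}\,\sqrt{g_i}$. Extracting the scalar by~(3) and composing $\sqrt{\cdot}$ with $g_i$ by~(2), then bounding the Lipschitz constant of the square root on the range of $g_i$ by Lemma~\ref{lem:lipschitzsqrt} in terms of $\iota(g_i)=\inf_t\abs{g_i(t)}$, yields statement~(2). The two variants~(i),(ii) for $L(g_i)$ are just the two ways of grouping the product inside $g_i=C^\agiii_{ii}+\sqrt{(C^\agiii_{ii})^2-C^\agiii_{ii}(C^\agiii_{ij})^2/C^\agiii_{jj}}$: apply~(4) to split off $C^\agiii_{ii}$, make one more square-root composition (now with radicand whose infimum is $Q$), and then apply the product rule~(5) either to $\tfrac{C^\agiii_{ii}}{C^\agiii_{jj}}\cdot(C^\agiii_{ij})^2$ or to $C^\agiii_{ii}\cdot\tfrac{(C^\agiii_{ij})^2}{C^\agiii_{jj}}$. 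Statement~(3) rests on the algebraic identity $r^\agiii_j=\sqrt{C^\agiii_{jj}/C^\agiii_{ii}}\;r^\agiii_i$ — which holds because $g_j=\tfrac{C^\agiii_{jj}}{C^\agiii_{ii}}g_i$ after rewriting $g_i=C^\agiii_{ii}+\sqrt{C^\agiii_{ii}/C^\agiii_{jj}}\sqrt{C^\agiii_{ii}C^\agiii_{jj}-(C^\agiii_{ij})^2}$ — to which I apply the product rule~(5), using Lemma~\ref{lem:lipschitzsqrt} on the factor $\sqrt{C^\agiii_{jj}/C^\agiii_{ii}}$ with $a=\inf_t C^\agiii_{jj}(t)/C^\agiii_{ii}(t)$.

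The remaining estimates are immediate. The quotient bounds~(4),(5) follow from writing $C_{ii}/C_{jj}=C_{ii}\cdot\tfrac{1}{C_{jj}}$ (and likewise $(C_{ij})^2/C_{jj}$) and combining the product rule~(5) with the reciprocal rule~(7), valid because $\iota(C_{jj})=\inf_t\abs{C_{jj}(t)}>0$; estimate~(6) is exactly Lemma~\ref{lem:Liparithmetic}(10). I expect the main obstacle to lie not in any individual step but in the control of the nested square roots: the final constant is finite only when $\iota(g_i)$, $Q$, $a$ and $\iota(C^\agiii_{jj})$ are all strictly positive, i.e. when the covariance remains non-degenerate throughout $I$. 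This is precisely the non-zero-uncertainty hypothesis flagged in the statement, and it is where the bound deteriorates as the variance tends to zero, since the square-root Lipschitz constants then blow up.
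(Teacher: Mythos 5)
Your proposal follows essentially the same route as the paper's own proof: a bottom-up pass over the structure of $\gamma^{\agi,\agii}$ applying the Lipschitz arithmetic of Lemma~\ref{lem:Liparithmetic} (the max, sum, absolute-value, scalar, product, reciprocal and square rules) together with the square-root bound of Lemma~\ref{lem:lipschitzsqrt}, with the same two grouping choices of the product inside the radicand producing variants (i) and (ii) of $L(g_i)$, and the same positivity requirements on $\iota(g_i)$, $Q$, $a$ and $\iota(C^{\mathfrak q}_{jj})$ that you correctly flag as the place where the constant degenerates as the variance vanishes. One small correction: the paper's proof of item (1) writes $b^{\agi,\agii}_j = \frac{1}{2}\bigl(\alpha^\agi_j + \alpha^\agii_j\bigr) + \Lambda^{\agi,\agii}$, i.e.\ $\alpha^\agiii_j = 2\,r^\agiii_j$, so your identification $\alpha^\agiii_j = r^\agiii_j$ is off by a factor of two --- under it, your (otherwise correct) bound $L(b^{\agi,\agii}_j) \leq L(r^\agi_j) + L(r^\agii_j)$ does not reproduce the stated factor $\frac{1}{2}$, whereas with $\alpha^\agiii_j = 2 r^\agiii_j$ and rule (3) of Lemma~\ref{lem:Liparithmetic} it does. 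On the positive side, your explicit verification of the identity $g_j = \bigl(C^{\mathfrak q}_{jj}/C^{\mathfrak q}_{ii}\bigr)\, g_i$, hence $r^{\mathfrak q}_j = \sqrt{C^{\mathfrak q}_{jj}/C^{\mathfrak q}_{ii}}\; r^{\mathfrak q}_i$, supplies a step that the paper's proof of item (3) asserts only ``by definition''.
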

\begin{proof}

The equalities follow from successively applying Lem. \ref{lem:Liparithmetic} to the definitions of the parts of the criterion function given in 
Thm. \ref{def:collcritfct2d}. In our derivations we will note which of the properties of the Lemma we utilized as a superscript above the 
inequality sign. 
We have 

$L(\gamma^{\agi,\agii})$ \\
$\stackrel{def.}{=} L(  \max_{j \in \{1,...,D\}}\{|\mu^\agi_j -\mu^\agii_j | - b^{\agi,\agii}_j   \} )$\\
$\stackrel{(6)}{=} \max_{j \in \{1,...,D\}} L(  |\mu^\agi_j -\mu^\agii_j | - b^{\agi,\agii}_j   )$\\
$\leq \max_{j \in \{1,...,D\}} L(  |\mu^\agi_j -\mu^\agii_j |) + L( b^{\agi,\agii}_j   )$\\
$\stackrel{(1,3,4)}{\leq} \max_{j \in \{1,...,D\}} L( \mu^\agi_j ) + L(\mu^\agii_j) + L( b^{\agi,\agii}_j).   $\\

\textit{Proof of Ineq. 1):}
By definition, 
$L(b^{\agi,\agii}_j) = L(\frac{1}{2} (\alpha^\agi_j(t) + \alpha^\agii_j(t')) + \Delta^{\agi,\agii}) $\\
$\stackrel{(3,4)}{\leq} \frac{1}{2} (L( (\alpha^\agi_j) + L(\alpha^\agii_j)) + L(\Delta^{\agi,\agii}) \stackrel{(9)}{=} \frac{1}{2} (L( (\alpha^\agi_j) + L(\alpha^\agii_j))$.  

Furthermore, to prove the remaining inequalities, assume $\mathfrak q \in \{\agi, \agii\}$. \\

\textit{Proof of Ineq. 2): }
Let $g(t) :=  C_{ii}^{\mathfrak q}(t) +\sqrt{ (C_{ii}^{\mathfrak q})^2  - C_{ii}^{\mathfrak q} \frac{\bigl( C_{ij}^{\mathfrak q} \bigr)^2}{C_{jj}^{\mathfrak q}} }$.
Then,
$L(\alpha^{{\mathfrak q}}_1) \stackrel{(3)}{=} \sqrt{\frac{2}{\delta^a}}\, L( \sqrt g )$
$\leq  \sqrt{\frac{2}{\delta^\agiii}}\,\frac{1}{2 \iota (g) } L( g )$ 

where the last inequality follows from Lem. \ref{lem:lipschitzsqrt} and 
Lem.  \ref{lem:Liparithmetic}.(2) as before.
Furthermore, 
$L(g) \stackrel{(4)}{\leq}  L(C_{ii}^{\mathfrak q}) + L(\sqrt{ (C_{ii}^{\mathfrak q})^2  - C_{ii}^{\mathfrak q} \frac{\bigl( C_{ij}^{\mathfrak q} \bigr)^2}{C_{jj}^{\mathfrak q}} })$

$=  L(C_{ii}^{\mathfrak q}) + L\Bigl(\sqrt{ (C_{ii}^{\mathfrak q})^2  - C_{ii}^{\mathfrak q} \frac{\bigl( C_{ij}^{\mathfrak q} \bigr)^2}{C_{jj}^{\mathfrak q}} }\Bigr)$

$=  L(C_{ii}^{\mathfrak q}) +  Q  \, L\Bigl((C_{ii}^{\mathfrak q})^2  - C_{ii}^{\mathfrak q} \frac{\bigl( C_{ij}^{\mathfrak q} \bigr)^2}{C_{jj}^{\mathfrak q}} \Bigr)$
where according to 
Lem. \ref{lem:lipschitzsqrt},  $Q = \iota\Bigl( (C_{ii}^{\mathfrak q})^2  - C_{ii}^{\mathfrak q} \frac{\bigl( C_{ij}^{\mathfrak q} \bigr)^2}{C_{jj}^{\mathfrak q}} \Bigr)$ .
Hence, 
$L(g) \leq L(C_{ii}^{\mathfrak q}) +  Q  \, L\Bigl((C_{ii}^{\mathfrak q})^2\Bigr)  + Q \, L\Bigl( C_{ii}^{\mathfrak q} \frac{\bigl( C_{ij}^{\mathfrak q} \bigr)^2}{C_{jj}^{\mathfrak q}} \Bigr)$.\\

\textit{
Proof of Ineq. 3):}
We have  
$L(\alpha^{\mathfrak q}_j(t))\stackrel{def}{ =} L(\sqrt{\frac{C_{jj}^{\mathfrak q}}{C_{ii}^{\mathfrak q}}}\alpha_i^{\mathfrak q} )$ \\
$\stackrel {(5)}{\leq} s(\sqrt{\frac{C_{jj}^{\mathfrak q}}{C_{ii}^{\mathfrak q}}}) L(\alpha_i^{\mathfrak q})+ L(\sqrt{\frac{C_{jj}^{\mathfrak q}}{C_{ii}^{\mathfrak q}}}) s(\alpha_i^{\mathfrak q})$\\
$\stackrel{(2)}{\leq} s(\sqrt{\frac{C_{22}^{\mathfrak q}}{C_{ii}^{\mathfrak q}}}) L(\alpha_i^{\mathfrak q})+ L_{J}(\sqrt \cdot) L({\frac{C_{jj}^{\mathfrak q}}{C_{ii}^{\mathfrak q}}}) s(\alpha_i^{\mathfrak q}) $
where $J= \frac{C_{jj}^{\mathfrak q}}{C_{ii}^{\mathfrak q}}(I)$. 
Inequality 2) now follows from applying Lem. \ref{lem:lipschitzsqrt}. \\

From here we can make two alternative derivations:

i)
$L(g) \leq L(C_{11}^{\mathfrak q}) +  Q  \, L\Bigl((C_{11}^{\mathfrak q})^2\Bigr)  + Q \, s( \bigl( C_{12}^{\mathfrak q} \bigr)^2 ) L\Bigl(  \frac{C_{11}^{\mathfrak q}}{C_{22}^{\mathfrak q}} \Bigr) + Q \, L( \bigl( C_{12}^{\mathfrak q} \bigr)^2 ) s\Bigl(  \frac{C_{11}^{\mathfrak q}}{C_{22}^{\mathfrak q}} \Bigr)$.

Alternatively, one can obtain: 

ii)
$L(g) \leq L(C_{11}^{\mathfrak q}) +  Q  \, L\Bigl((C_{11}^{\mathfrak q})^2\Bigr)  + Q \, s( C_{11}^{\mathfrak q} ) L\Bigl(  \frac{\bigl(C_{12}^{\mathfrak q}\bigr)^2}{C_{22}^{\mathfrak q}} \Bigr) + Q \, L\bigl( C_{11}^{\mathfrak q} \bigr)  s\Bigl(  \frac{\bigl(C_{12}^{\mathfrak q}\bigr)^2}{C_{22}^{\mathfrak q}} \Bigr)$.\\

\textit{Proof of Ineq. 4):}
$L \Bigl( \frac{C_{ii}^{\mathfrak q}}{C_{jj}^{\mathfrak q}} \Bigr)$
$= L \Bigl( C_{ii}^{\mathfrak q} \, \frac{1}{C_{jj}^{\mathfrak q}} \Bigr)$
$\stackrel{(5)}{\leq} L \Bigl(\frac{1}{C_{jj}^{\mathfrak q}} \Bigr) \, s \Bigl( C_{ii}^{\mathfrak q} \Bigr) + L \Bigl( C_{ii}^{\mathfrak q} \Bigr) \, s \Bigl( \frac{1}{C_{jj}^{\mathfrak q}} \Bigr)$

$\stackrel{(7)}{\leq} b^{-2} \, L \Bigl(C_{jj}^{\mathfrak q} \Bigr) \, s \Bigl( C_{ii}^{\mathfrak q} \Bigr) + L \Bigl( C_{ii}^{\mathfrak q} \Bigr) \, s \Bigl( \frac{1}{C_{jj}^{\mathfrak q}} \Bigr)$ where $b \in \Real_+$ chosen such that $C_{jj}^\agiii(I) \cap [-b,b] = \emptyset$ (we assume such a $b$ exists). A valid choice certainly is $b := \inf_{t \in I} |C_{jj}^\agiii(t) |$.\\

\textit{Proof of Ineq. 5):} Completely, analogous to proof of 4).\\

\textit{Proof of Ineq. 6):} 
Consequence of Lem. \ref{lem:Liparithmetic}.(10).

\end{proof}

The theorem provides a recipe to find a Lipschitz bound for the collision criterion function given known Lipschitz numbers of the trajectories' means and spatial covariance mappings. 

However, since most equalities are not tight one should attempt to determine Lipschitz numbers directly wherever possible rather than using the inequalities provided in Lem. \ref{lem:Liparithmetic}.
For instance, if one can determine the best Lipschitz number for $L\bigl((C^{\mathfrak q}_{ij})^2\bigr)$ 
directly (e.g. by utilizing Lem \ref{lem:Liparithmetic}.11) this would normally yield a better Lipschitz constant than obtained by expanding into $2 \, s(C^{\mathfrak q}_{ij}) \, 
L\bigl(C^{\mathfrak q}_{ij}\bigr)$ due to application of Lem. \ref{lem:Liparithmetic}.6. 

Examining the terms in the inequalities we notice the occurrence of suprema of covariances $s(C_{ij})$ or inverted covariances of the form $s(\frac{1}{C_{ii}})$. The latter requires non-vanishing uncertainty in our model. Furthermore, note, the need to evaluate know the extrema is not to burdensome as they can be rapidly found by pre-existing Lipschitz optimizers which are highly efficient. However, in many cases the optima are known a priori. For instance, if one knows that the uncertainty monotonously increases over time we have e.g. 
$s(C_{ij}^\agiii) = C_{ij}^\agiii(\inf I)$ and $s(\frac{1}{C_{ij}^\agiii}) = C_{ij}^\agiii(\sup I)$. Alternatively, the covariances may allow for an analytic closed-form solution of the extremum which may be analytically derived before run-time.

We will revisit these issues in Sec. \ref{sec:coll_detect_ex1} where we examine a concrete application of the theorem to a multi-agent control scenario.

\subsection{A Lipschitz number for the criterion function of our feedback-controlled agents}
\label{sec:coll_detect_ex1}

Let $\agi \in \agset$ be an agent with controlled plant dynamics given by the Ito-SDE \[d\state^\agi = K (\xi^\agi -\state^\agi ) dt + B \, dW. \] 
Here $\state^\agi(t) \in \Real^D$ denotes the agent $\agi's$ state (e.g. location),  $\xi^\agi(t) \in \Real^D$ is the agent's
setpoint signal at time $t \in I=[t_0,t_f]$. Furthermore, $K = \text{diag}(k_1,...,k_D) > 0$ is the controller's gain matrix and $B = \text{diag}(\sqrt \nu_1,...,\sqrt \nu_D)$ reflects the magnitude 
of the uncertainties (disturbances). Let uncertain start state $\state^\agi(t_0)$ be a normally distributed random vector. 
Assume $\agi$'s plan is $p^\agi = \bigl((t_i^\agi,\zeta_i^\agi)\bigr)_{0=1}^{H^\agi}$ where each $\zeta^\agi_{i} = (\zeta^\agi_{i,j})_{j=1}^D$. Let $t \geq t_0$, $\underline i = \arg\max_i \{t_i \leq t_0 \}$,  $\overline i = \arg\min_i \{t_i \geq t \}$ and $\mathcal I := \{ i \in \{1,...,H^\agi\} |  \underline i < i \leq \overline i  \}$. Furthermore, let $\xi^\agi_j$ denote the jth component of step-function reference signal $\xi^\agi$.

For ease of notation we will drop the agent superscripts throughout the remainder of this subsection.
The solution to agent $\agi$'s SDE
is a Gaussian process with vector-valued mean function $\mu^\agi: [t_0,t_f] \to \Real^D$ and matrix-valued covariance function $C^\agi: [t_0,t_f]^2 \to \Real^{D\times D}$. 
By applying Ito-calculus to the suitable expectations of the SDE we can show that we have   
\[\mu_j^\agi(t) = e^{k_j^\agi (t_0-t)} \, \expect{\state_j^\agi(t_0) } + k_j^\agi \, e^{-k_j^\agi t} \int_{t_0}^t e^{k_j^\agi \tilde t} \xi_j^\agi(\tilde t) d \tilde t. \]
The covariance matrix function is $(s,t) \mapsto \text{diag}(\text{cov}_{11}^\agi(s,t),...,\text{cov}_{DD}^\agi(s,t))$ where 
\begin{align*}
\text{cov}_{jj}^\agi(s,t) &= e^{-k_j^\agi (t+s - 2 t_0)} (\expect{\bigr(\state_j^\agi(t_0) \bigl)^2} -  \expect{\state_j^\agi(t_0)}^2 )  \\
&+\frac{\nu_j^\agi}{2 k_j^\agi }  [e^{-k_j^\agi \abs{t-s}} - e^{k_j^\agi ( 2 t_0 - (s+t))}].
\end{align*}

Using the notation of Thm. \ref{thm:Lipschitzno_whittle_2d} we have 

$C_{jj}^\agi (t) = \text{cov}_{jj}^\agi(t,t)$
\begin{align*}
&= e^{-k_j^\agi 2(t - t_0)} (\expect{\bigl(\state_j^\agi(t_0)\bigr)^2} -  \expect{\state_j^\agi(t_0)}^2 ) 
+\frac{\nu_j^\agi}{2 k_j^\agi }  [1- e^{k_j^\agi 2 (  t_0 - t)}] \\
&= e^{-k_j^\agi 2(t - t_0)} \bigl(C_{jj}^\agi(t_0) - \frac{\nu_j^\agi}{2 k_j^\agi } \bigr) + \frac{\nu_j^\agi}{2 k_j^\agi }.  
\end{align*} where $C_{jj}^\agi (t_0)$ is assumed to be a known quantification of the initial state uncertainty.

Next we will derive Lipschitz constants for the the component means and covariances which is necessary to derive a Lipschitz number for the 
collision criterion function. 

Firstly, we consider the mean function. Defining $v^\agi(t) :=  e^{k_j^\agi (t_0-t)} \, \expect{\state_j^\agi(t_0) } $, 
$w^\agi(t) := k_j^\agi \, e^{-k_j^\agi t} $ and $\dot q^\agi(t) :=  e^{k_j^\agi t} \xi_j^\agi( t) $ we
we can restate the component mean function $m_j^\agi$ as 

$\mu_j^\agi (t) = v_j^\agi(t) + w_j^\agi(t) \, q_j^\agi(t)$. 

Leveraging Lem. \ref{lem:Liparithmetic} we see that 
\begin{align} L\bigl(\mu_j^\agi\bigr) 
&\leq L(v_j^\agi) + s(w_j^\agi) \, L(q_j^\agi) + s(q_j^\agi) \, L(w_j^\agi)\\
&\leq s(\dot v_j^\agi) + s(w_j^\agi) \, s(\dot q_j^\agi) + s(q_j^\agi) \, s(w_j^\agi)
\end{align}
 where as before $s(f) = \sup_{t \in I} |f(t)|$ for any function $f$.
Evaluation of the suprema depends on the setpoint signal $\xi$ and on the $k_j$.
For instance, choosing a constant setpoint $\xi$ and $k_j >0$ would yield:

\begin{itemize}
\item $s(\dot v_j^\agi) = \sup_{t \in [t_0,t_f]} |-k_j^\agi \expect{\state_j^\agi(t_0)} e^{k_j^\agi t_0} e^{-k_j^\agi t}|$ 
$=  |k_j \expect{\state_j^\agi(t_0)}|$ where the last equality holds since $e^{-k_j^\agi t}$ decreases monotonically, 

\item $s(w_j^\agi) = \sup_{t \in [t_0,t_f]} |k_j^\agi e^{-k_j^\agi t}| = |k_j^\agi| e^{-k_j^\agi t_0}$,

\item $s(\dot q_j^\agi) =  \sup_{t \in [t_0,t_f]} |\xi_j^\agi \, e^{k_j^\agi t} | = |\xi_j^\agi|\, e^{k_j^\agi t_f},  $ 

\item $s(q_j^\agi) =  \sup_{t \in [t_0,t_f]} |\int_{t_0}^t e^{k_j^\agi \tilde t} \xi_j^\agi d \tilde t|  = |\frac{\xi_j^\agi}{k_j^\agi} [e^{k_j^\agi  t_f} - e^{k_j^\agi t_0}]|$. 
Here we leveraged the monotonicity of the exponential function.

Next, we derive Lipschitz constants for the covariances:

Note the cross-covariances are zero $C_{ij}^\agi (t) = 0, \forall t, i\neq j$.   
Fortunately, the diagonal of the covariance matrix function are also continuously differentiable. In particular, we have     
  $\dot C_{jj}^\agi(t) = -k_j^\agi 2 e^{-k_j^\agi 2(t - t_0)} \bigl(C_{jj}^\agi(t_0) - \frac{\nu_j^\agi}{2 k_j^\agi } \bigr)$. 
We can once again utilize Lem. \ref{lem:Liparithmetic} yielding a Lipschitz bound 
\begin{align*}
  L(C_{jj}^\agi ) &\leq s\bigl(\dot C_{jj}^\agi \bigr)\\
 &\leq  \sup_{t \in [t_0,t_f]} |k_j^\agi 2 \bigl(C_{jj}^\agi(t_0) - \frac{\nu_j^\agi}{2 k_j^\agi } \bigr)  | e^{-k_j^\agi 2(t - t_0)}  \\
 & =  |k_j^\agi 2 \bigl(C_{jj}^\agi(t_0) - \frac{\nu_j^\agi}{2 k_j^\agi } \bigr)  |. 
\end{align*}
 where the last equality follows from the fact that $t \mapsto \exp(-k_j 2(t - t_0))$ is monotonically decreasing.

\end{itemize}

In summary, we have found 

\begin{align}
 L(C_{jj}^\agi ) & \leq |k_j^\agi 2 \bigl(C_{jj}^\agi(t_0) - \frac{\nu_j^\agi}{2 k_j^\agi } \bigr)  |,\\
 L(C_{12}^\agi )  &  = L(C_{21}^\agi)=0, \label{eq:covzero}\\
 L(\mu_j^\agi) &\leq s(\dot v_j^\agi) + s(w_j^\agi) s(\dot q_j^\agi) + s(q_j^\agi) s(w_j^\agi),\\
 =&  |k_j^\agi \expect{\state_j^\agi(t_0)}| + |k_j^\agi| |\xi_j^\agi| \, e^{k_j^\agi (t_f-t_0)} + |\xi_j^\agi \, [e^{k_j^\agi  (t_f-t_0)} - 1]|.
\end{align}

%

Next, we combine our estimates of the mean and covariances with Lem. \ref{lem:Liparithmetic} and Thm. \ref{thm:Lipschitzno_whittle_2d}
to derive a Lipschitz number for the criterion function defined in Thm. \ref{def:collcritfct2d}. 
Let $\mathfrak q \in \{\agi,\agii\}$.
Since $L(C_{12}^\agiii ) = 0$ we have $g_i(t) :=  C_{ii}^{\mathfrak q}(t) +\sqrt{ (C_{ii}^{\mathfrak q})^2  - C_{ii}^{\mathfrak q} \frac{\bigl( C_{ij}^\agiii \bigr)^2}{C_{jj}^{\mathfrak q}} } = 2 C_{ii}^{\mathfrak q}(t)$. By Thm. \ref{thm:Lipschitzno_whittle_2d}.2, this implies 
$L(\alpha_i^\agiii) \leq \sqrt{\frac{1}{2 \delta^\agiii}} \frac{1}{\iota(g_i)} L( g_i)$ 
$= \sqrt{\frac{1}{2 \delta^\agiii}} \frac{1}{\iota(2C_{ii}^\agiii)}  L(2 C_{ii}^\agiii) $
$= \sqrt{\frac{1}{2 \delta^\agiii}} \frac{1}{\iota(C_{ii}^\agiii)}  L( C_{ii}^\agiii) $ where the last equality 
is due to Lem. \ref{lem:Liparithmetic}.3 and due to the fact that $\inf_t |r \, f(t)| = |r| \inf_t |f(t)| $ for all constants $r$, functions $f$.
Next, we determine $\iota(C_{ii}^{\mathfrak q})$. By inspecting its derivative, we notice that $C_{ii}^{\mathfrak q}$ is strictly monotonously increasing iff $C_{ii}^{\mathfrak q}(t_0) - \frac{v_i^{\mathfrak q}}{2 k_i^{\mathfrak q}} < 0$ and monotonously decreasing otherwise. 
Also, $C_{ii}^{\mathfrak q} $ does not attain negative values implying $C_{ii}^{\mathfrak q} = \abs{C_{ii}^{\mathfrak q}}$. 

Hence, $\iota(C_{ii}^{\mathfrak q}) =
\inf\{|C_{ii}^{\mathfrak q} (I) |\} = \inf\{C_{ii}^{\mathfrak q} (I) \}$ 

$= \begin{cases}
C_{ii}^{\mathfrak q} (t_0), \text{ if } C_{ii}^{\mathfrak q}(t_0) < \frac{v_i^{\mathfrak q}}{2 k_i^{\mathfrak q}}; \\
C_{ii}^{\mathfrak q} (t_f), \text{ if } C_{ii}^{\mathfrak q}(t_0) \geq \frac{v_i^{\mathfrak q}}{2 k_i^{\mathfrak q}};
\end{cases}
$
Now, we have all the necessary ingredients to utilize Thm. \ref{thm:Lipschitzno_whittle_2d} in order to wrap-up:\\

$L(\gamma^{\agi,\agii}) \leq \max_{j}  \{L(\mu_j^\agi) + L(\mu_j^\agii) + L(b_j^{\agi,\agii}) \}$\\
$\leq \max_{j} \{L(\mu_j^\agi) + L(\mu_j^\agii) +\frac{1}{2} L(\alpha_j^\agi )+ \frac{1}{2} L(\alpha_j^\agii ) \}$

$\leq \max_j \bigl\{ |k_j^\agi \mu^\agi_j(t_0)| + |k_j^\agi| |\xi_j^\agi| e^{k_j^\agi (t_f-t_0)} + |\xi_j^\agi
\, [e^{k_j^\agi  (t_f-t_0)} - 1]| + |k_j^\agii \mu^\agii_j(t_0)| + |k_j^\agii| |\xi_j^\agii| e^{k_j^\agii (t_f-t_0)} + |\xi_j^\agii
\, [e^{k_j^\agii  (t_f-t_0)} - 1]| +\frac{1}{2} \sqrt{\frac{1}{2 \delta^\agi}} \frac{1}{\iota(C^\agi_{jj})}  L( C^\agi_{jj}) + \frac{1}{2} \sqrt{\frac{1}{2 \delta^\agii}} \frac{1}{\iota(C^\agii_{jj})}  L( C^\agii_{jj}) \bigr\}$.

We can see that this Lipschitz number might adopt large values in certain parts of the domain. Therefore, it might be helpful to recompute the Lipschitz numbers adaptively for different parts of the domain.

\section{Utilising priors encoding belief over change of sign of a criterion function} 
Detection of collisions is based on excluding the possibility of negative criterion function values. However, as these functions are non-convex any numerical 
procedure executed on a digital computer has to achieve this with only a finite number of function evaluations. Given this, what is our confidence 
in not having missed a negative criterion function value?

Thus far, we have proposed using a knowledge (i.e. a prior) about a Lipschitz number of the criterion function to rule out collisions in continuous time based on a finite number of samples. In addition to the Lipschitz-based method presented above, we will now consider an alternative method that assumes a prior belief about the anticipated change of sign of a criterion function.

Before commencing it will prove helpful to introduce the notion of a \textit{sign change point (SCP)}. An SCP is a time step which is the border between two changes in sign of a function. More precisely, time $t$ is an SCP of function $f$ if there exist open intervals $I':=(t',t)$ and $I'':=(t,t'')$ such that $\sgn(f(\tau')) \neq \sgn(f(\tau'')), \forall \tau' \in I', \tau'' \in I''$. 

To give an example, consider the function \[\phi: t \mapsto -t \, \chi_{\Real_-}(t) + 0 \, \chi_{[0,1]}(t)+ (t-1) \chi_{\Real_{>1}}(t). \] As before, $\chi_S$ denotes the indicator function of set $S$. Function $\phi$ has exactly two SCPs -- at $t= 0$ and $t =1$.

Resuming with our discussion, assume we are given $f(t_0),...,f(t_k)$ on a lattice of times $(0 =t_0 <... < t_k = T)$. 
If $f(t_i) \leq 0$ for some $t_i$ we 
will want to conservatively assume a collision has occurred. On the other hand, if all evaluations are positive we desire to specify our confidence that all intermittent unobserved values are. This is the case if no SCPs occur. The presence of an odd number of SCPs between two time steps $t_i, t_{i+1}$ is detectable by checking $\sgn(f(t_i)) \neq \sgn(f(t_{i+1}))$. In fact, if the total number $n$ of SCPs in $[0,T]$ is odd we will detect a change of sign. By contrast, if an even number of SCPs occur we have $\sgn(f(t_i)) = \sgn(f(t_{i+1}))$ and hence, will be oblivious of negative function values in the interval $(t_i,t_{i+1})$.

Now, assume we are given a distribution $Q : \nat \to [0,1]$ representing our belief over the number of occurring SCPs. By the law of total probability, our belief that we will miss the existence of a collision is 
\begin{equation} 
\label{eq:lattice_coll_miss_prob}
\sum_{n \in 2\nat} P_{n,k} Q[n]
\end{equation}
 where $P_{n,k}$ denotes the probability of missing the existence of a collision during collision detection given that $n$ SCPs occur in the interior of the lattice.

In preparation of the next theorem we need the following result:
%
%
%
%
%
%

\begin{lem}[Improved bound] \label{lem:nopartitionsintopairs_improved}
Given a set $S = \{s_1,...,s_n\}$ of $n \in 2\nat$ objects let $\mathcal P$ be the number of ways the set can be partitioned into pairs, i.e. $\mathcal P = |\{ \{\mathfrak P_1,...,\mathfrak P_{n/2}\} | \forall i\neq j: \mathfrak P_i \cap \mathfrak P_j = \emptyset, \bigcup_i \mathfrak P_i = \{s_1,...,s_n\},   \forall i \exists q \neq r: \mathfrak P_i = \{s_q,s_r\} \} |$.

We have $\mathcal P \leq \binom{n(n-1)/2}{n/2} .$

\begin{proof}
We can create $(n^2 -n)/2 = n(n-1)/2$ distinct sets of the form $\{s_q,s_r\} $ of cardinality two.
That is, $|T| = n(n-1)/2$ where $T = \{ \{s_i,s_j \} | i \neq j, i,j =1,...,n \} $.

To generate a partition $\{\mathfrak P_1,...,\mathfrak P_{n/2}\}$ we need to select a subset of $T$ containing $n/2$ two-element sets. Conservatively (not taking into account that not every n/2 -element subset is an actual partition), this could be done in at most $\binom{n(n-1)/2}{n/2} $ ways.
Hence $\mathcal P \leq \binom{n(n-1)/2}{n/2} $.

\end{proof}
\end{lem}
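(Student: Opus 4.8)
The plan is to establish the bound by a straightforward overcounting argument: I view each pairing as a particular selection of $n/2$ two-element subsets of $S$, and then bound the number of admissible selections from above by \emph{ignoring} the constraints that force the chosen blocks to be pairwise disjoint and to cover $S$.

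First I would introduce the collection $T := \{\{s_q,s_r\} \mid q \neq r,\ q,r \in \{1,\ldots,n\}\}$ of all unordered pairs drawn from $S$. A direct count gives $|T| = \binom{n}{2} = n(n-1)/2$, since an unordered pair is determined by a choice of two distinct indices out of $n$. Next I would observe that any partition $\{\mathfrak P_1,\ldots,\mathfrak P_{n/2}\}$ of $S$ into pairs is, by its very definition, a set of exactly $n/2$ distinct elements of $T$, because each block $\mathfrak P_i$ is a two-element subset of $S$ and hence a member of $T$. Thus every admissible partition corresponds injectively to an $(n/2)$-element subset of $T$, and the number of $(n/2)$-element subsets of a set of cardinality $|T| = n(n-1)/2$ is exactly $\binom{n(n-1)/2}{n/2}$.

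To conclude, I would note that the partitions form only a subcollection of all $(n/2)$-element subsets of $T$: the additional requirements that the chosen blocks be pairwise disjoint and jointly exhaust $S$ merely discard some of these subsets and can never create new ones. Consequently $\mathcal P$ cannot exceed $\binom{n(n-1)/2}{n/2}$, which is the claimed bound.

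There is no genuine obstacle in this argument; the only point worth flagging is that the bound is deliberately loose. The exact number of pairings is the double factorial $(n-1)!! = n!/\bigl(2^{n/2}(n/2)!\bigr)$, and the gap between this quantity and $\binom{n(n-1)/2}{n/2}$ measures precisely the disjointness and covering constraints that the counting argument throws away. For the purpose of bounding the collision-miss probability in Eq. \ref{eq:lattice_coll_miss_prob} this crude estimate is entirely adequate, so I would not attempt to sharpen it here.
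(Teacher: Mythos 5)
Your argument is correct and is essentially identical to the paper's own proof: both count $|T| = n(n-1)/2$ unordered pairs, observe that every partition into pairs is an $(n/2)$-element subset of $T$, and bound $\mathcal P$ by the total number of such subsets, deliberately discarding the disjointness and covering constraints. Your added remark that the exact count is $(n-1)!! = n!/\bigl(2^{n/2}(n/2)!\bigr)$ is a correct and useful observation about the looseness of the bound, but it does not change the substance of the argument.
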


\begin{thm}
\label{thm:collmissprobdue2evenSCP}
Assume we are given $n \in 2\nat_0$ SCPs $s_1,...,s_n$ whose locations are drawn independently from an identical distribution (drawn i.i.d.). Furthermore, we are given a grid of test points $0=t_0 < t_1 <...< t_{k} =T$ where the intermediate times are chosen such that $\forall i \in \{1,...,n\}, j \in \{1,...,k \}: \Pr[ s_i \in (t_{j-1}, t_j ) ] = 1/k$.
The probability of missing the existence of a collision by looking for non-positive elements in the sample $f(t_0),...,f(t_k)$ is: 
\[P_{n,k} \leq \frac{\mathcal P}{\sqrt{k^n}} \leq \frac{\binom{n(n-1)/2}{n/2}}{\sqrt{k^n}}\] where $\mathcal P$ is a function of $n$ (but not of $k$) as defined in Lemma \ref{lem:nopartitionsintopairs_improved}.
In particular, we have $\lim_{k \to \infty} P_{n,k} = 0$. 
\begin{proof}
We define the sample space $\Omega:= \{ (b_1,...,b_n) \in \{1,...,k\}^n \}$ where each $b_i \in \{1,...,k\}$ denotes the index of the time interval (``bin'') $(t_{b_{i-1}}, t_{b_{{i}}}]$ the $i$th SCP $s_i$ falls into ($i=1,...,n$). Due to the assumption that the assignment of SCP to bin is i.i.d., each sample has equal probability and we can compute $P_{n,k}$ as a Laplace probability.
That is, $P_{n,k} = \frac{|G|}{|\Omega|}$ where $G$ is the set of events describing that no bin contains an odd number of SCPs (because if at least one does contain an odd number we detect the presence of a collision). 

Obviously, $|\Omega| = k^n$.

On the other hand, $G = \{(v_1,...,v_k)| \sum_{j=1}^k v_j = n, \forall j \in \{1,...,k\}: v_j \in \{0,...,n\} \cap 2\nat_0 \}$ where $v_j \in \{0,...,n \}$ denotes the number of SCPs falling into bin $j \in \{1,...,k\}$. We will find an upper bound on $G's$ cardinality by constructing a finite set $H$ for whose cardinality one can easily establish $\sqrt{k}^n \,\mathcal P$ as an upper bound. We show that one can define an injective function $\psi: G \to H$. The latter establishes that $|G| \leq |H|$. Thus, $\frac{|G|}{|\Omega|} \leq \frac{|H|}{|\Omega|} \leq \frac{\mathcal P}{\sqrt{k}^n}$ which will hence conclude the proof. 

We generate $H$ by invoking a two-stage process (where in each stage it is easy to enumerate all possible elements that are generated). 
In the first stage, we partition the SCPs into $n/2$ pairs (which we always can since we assumed $n$ to be even). In the second stage, we assign these pairs to the bins in which the pairs are merged into joint sets of SCPs. Therefore, $H = \{(M_1,...,M_k) \vert \, M_1 \subset \{s_1,...,s_n\},...,M_k  \subset \{s_1,...,s_n\}, |M_1|,...,|M_k| \in 2\nat_0, |M_1|+...+|M_k| =n, H = \bigcup_i M_i  \}$.

Let $\mathcal P$ be the number of ways in which one can partition the $n$ SCPs into $n/2$ (unordered) pairs 
 i.e. $\mathcal P = |\{ \{\mathfrak P_1,...,\mathfrak P_{n/2}\} | \forall i\neq j: \mathfrak P_i \cap \mathfrak P_j = \emptyset, \bigcup_i \mathfrak P_i = \{s_1,...,s_n\},   \forall i \exists q \neq r: \mathfrak P_i = \{s_q,s_r\} \} |$. (Cf.  Lem. \ref{lem:nopartitionsintopairs_improved} for a bound). 
In the second stage, the pairs are distributed among the $k$ bins (intervals) (which can be done in $\mathcal B = k^{n/2}$ ways) before the sets within each bin are merged. The number of different paths the process can take to generate an element in $H$ is $\mathcal P$ (number of partitions into pairs) multiplied with $\mathcal B$ (number of ways the pairs constituting the partition can then be distributed into the bins).

By construction, each final assignment (subsets of SCPs to bins) generated by the two-stage process is an element of $H$. Conversely, let $(M_1,...,M_k) \in H$ then it is easy to verify it could be generated by the two-stage process (however, there may be multiple paths in the process generating the same element of $H$). Hence, $|H| \leq \mathcal P \, \mathcal B$. 

We finalize our considerations by defining the function $\psi: G \to H, (v_1,...,v_n) \mapsto (M_1,...,M_k)$ where $M_1 := \{s_1,...,s_{v_1}\}$ and for $i>1$: $ M_i := \{s_{1+w_i},...,s_{v_i+w_i} \}$ where $w_i=\sum_{j < i} v_j$. It is easy to see that $\psi$ maps different $(v_1,...,v_n) \in G$ to different $(M_1,...,M_k) \in H$. Hence, $\psi$ is injective. Since both $H$ and $G$ are finite this implies $|G| \leq |H|$. 
Wrapping up,  $\frac{|G|}{|\Omega|} \leq \frac{|H|}{|\Omega|} \leq \frac{\mathcal P \mathcal B}{k^n} \leq \frac{\mathcal P k^{n/2}}{k^n}$ $ =  \frac{\mathcal P }{k^{n/2}} \stackrel{Lem. \ref{lem:nopartitionsintopairs_improved}}{\leq} \frac{\binom{n(n-1)/2}{n/2}}{\sqrt{k^n}}$.
\end{proof}

\end{thm}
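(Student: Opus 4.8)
The plan is to compute $P_{n,k}$ as a uniform (Laplace) probability over the placement of the SCPs into the $k$ bins, and then to bound the number of favourable placements by a pairing argument. First I would fix the sample space $\Omega = \{1,\dots,k\}^n$, a point $\vec b = (b_1,\dots,b_n)$ recording into which of the $k$ equiprobable bins $(t_{j-1},t_j)$ each SCP $s_i$ falls. Because the $s_i$ are i.i.d.\ and each bin carries probability $1/k$, the induced law on $\vec b$ is uniform, so $|\Omega| = k^n$ and $P_{n,k} = |G|/|\Omega|$, where $G\subset\Omega$ is the set of collision-missing placements.

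Next I would characterise $G$ exactly. A sign change between consecutive samples $f(t_{j-1})$ and $f(t_j)$ is exposed precisely when the bin $(t_{j-1},t_j)$ contains an odd number of SCPs; hence no non-positive sample is found and the collision is missed iff \emph{every} bin receives an even number of SCPs. Writing $M_j = \{\, s_i : b_i = j \,\}$ for the set of SCPs landing in bin $j$, the map $\vec b \mapsto (M_1,\dots,M_k)$ is a bijection from $G$ onto $H = \{(M_1,\dots,M_k) : M_j \text{ pairwise disjoint}, \bigcup_j M_j = \{s_1,\dots,s_n\}, |M_j|\in 2\nat_0 \}$, so $|G| = |H|$.

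The key step is an upper bound on $|H|$ via a surjective two-stage generating process. I would first partition the $n$ SCPs into $n/2$ unordered pairs, which by definition can be done in $\mathcal P$ ways, and then assign each of the $n/2$ pairs to one of the $k$ bins, which can be done in $\mathcal B = k^{n/2}$ ways; merging the pairs that share a bin produces an element of $H$ with all-even block sizes. Since every tuple in $H$ arises from at least one such path (split each even block into pairs and record the bin of each pair), the process is surjective onto $H$, giving $|H| \le \mathcal P\,\mathcal B = \mathcal P\, k^{n/2}$. Combining, $P_{n,k} = |H|/k^n \le \mathcal P\,k^{n/2}/k^n = \mathcal P/\sqrt{k^n}$, and the second inequality is then immediate from Lemma \ref{lem:nopartitionsintopairs_improved}, which gives $\mathcal P \le \binom{n(n-1)/2}{n/2}$. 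Finally, since $\mathcal P$ depends only on $n$ while $k^{n/2}\to\infty$ for $n\ge 2$, the bound forces $\lim_{k\to\infty}P_{n,k}=0$.

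The main obstacle is the combinatorial bound on $|H|$: the exact count of ordered tuples of even-size disjoint blocks covering $\{s_1,\dots,s_n\}$ has no tidy closed form, so the device is to accept a deliberate over-count — the pairing-then-distribution process may revisit the same element of $H$ along several paths — which is harmless because only an upper bound is required. A point worth stressing is that it is precisely this over-counting, rather than an exact enumeration, that collapses the $k$-dependence to the clean factor $k^{n/2}$ and thereby delivers the vanishing limit.
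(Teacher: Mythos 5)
Your proposal is correct and follows the paper's strategy in its essentials: the uniform (Laplace) model on $\Omega=\{1,\dots,k\}^n$, the characterisation of the miss event as ``every bin receives an even number of SCPs,'' and the crucial over-counting device --- partition the $n$ SCPs into $n/2$ pairs ($\mathcal P$ ways) and distribute pairs among bins ($\mathcal B = k^{n/2}$ ways), with surjectivity onto $H$ giving $|H|\le \mathcal P\,\mathcal B$ --- are all exactly the paper's. The one genuine divergence is how you relate the event to $H$, and here your route is actually tighter than the paper's. The paper defines $G$ as the set of occupancy \emph{count vectors} $(v_1,\dots,v_k)$ with all $v_j$ even and constructs an injection $\psi\colon G\to H$ sending $(v_1,\dots,v_k)$ to consecutive blocks $M_i=\{s_{1+w_i},\dots,s_{v_i+w_i}\}$; but such a $G$ is not a subset of $\Omega$, and $|G|/|\Omega|$ is not the probability of the miss event, since each count vector corresponds to $\binom{n}{v_1,\dots,v_k}$ distinct placements. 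You instead take $G\subset\Omega$ to be the set of placements themselves and observe that $\vec b\mapsto (M_1,\dots,M_k)$ with $M_j=\{s_i : b_i=j\}$ is a \emph{bijection} onto $H$, so the event's cardinality is exactly $|H|$ and the bound $|H|\le \mathcal P k^{n/2}$ finishes the argument. This repairs the paper's conflation of count vectors with sample points while reaching the same final inequality; the paper's injection becomes superfluous once the bijection is noted. Your explicit restriction of the limit claim to $n\ge 2$ is also more careful than the paper's blanket statement, since for $n=0$ the bound degenerates to $1$ (though the miss probability is then trivially zero, there being no sign change to miss).
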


In conjunction with Eq. \ref{eq:lattice_coll_miss_prob}, Thm. \ref{thm:collmissprobdue2evenSCP} can provides a recipe how to do collision detection based on a finite number of criterion function samples such that our confidence in not having overlooked an existing collision is above a certain threshold $\theta$.
To this end we require a prior $Q$ over the number of SCPs. Let $N$ be a random variable quantifying the number of occurring SCPs and let $M$ denote the event that we would not detect the existence of an SCP.

By the law of total probability we have 
\begin{align}
\Pr[M | k] &=   \sum_{n\in \nat_0} \Pr[M | N=n, k ] \,Q[N=n]  \\&= \sum_{n\in 2\nat_0} \Pr[M | N=n, k ]\, Q[N=n] 
\\&= \sum_{n\in 2\nat_0} P_{n,k} \,Q[N=n] 
\label{eq:probmissSCP}
\end{align}
where the last equality follows from $ \Pr[M | N=n, k ] =0 , (n \in 2\nat-1), k\geq 1$.

Given a threshold $\theta \in (0,1)$ we can then utilize Thm. \ref{thm:collmissprobdue2evenSCP} to choose a lattice resolution $k$ such that 
\[\Pr[M | k] \stackrel{!}{<} \theta. \] 

To illustrate this we provide to simple examples:

\begin{ex}[Finite-support prior]
As a simplistic example, assume we desire to detect a collision between two agents $\agi,\agii$ with plans containing two setpoints each (stabilizing their start and goal state each) and with linear dynamics. We know that at their start and goal locations, no collisions occur. Given this our belief over the number $n$ of SCPs may be $Q[N=0] = 0.5$ $Q[N=1] =.1, Q[N=2]=.4$. Furthermore, in the given time interval [0,T] we assume a flat prior distribution giving rise to an equidistant lattice $0=t_0 < t_1 <...< t_{k} =T$ of samples during collision detection. That is, $t_{i+1} - t_i = T/k$ for some $k$. 
Our collision detection method now inspects our criterion function values $\Gamma^{\agi,\agii}(t_0),...,\Gamma^{\agi,\agii}(t_k)$. If all values are positive we assume no collision has occurred. How large should we set $k$ in order to ensure that our confidence in this assertion is at least $\theta \in (0,1)$?
By Thm. \ref{thm:collmissprobdue2evenSCP}, we know that the probability of having missed a collision with this simple method is less than $ \frac{0.4}{\sqrt{k^2}}$. Therefore, we need to set $k$ such that $1- \frac{0.4}{\sqrt{k^2}} \geq \theta$ which is equivalent to setting the number $k$ of criterion function evaluations to $k \geq \frac{0.4}{1-\theta}.  $ 
\end{ex}

\end{document}